\def\doublecolumn{0}
\newcommand{\perturb}{\gamma}
\newcommand{\zdims}{k}
\newcommand{\nsamps}{m}
\newcommand{\nspu}{m}
\newcommand{\nin}{r}
\newcommand{\samp}[1]{{X_{#1}}}
\newcommand{\sampi}[2]{{X_{#1, #2}}}
\newcommand{\samps}{{X^{\ns}}}
\newcommand{\msg}[1]{{Y_{#1}}}
\newcommand{\msgs}{{Y^{\ns}}}
\newcommand{\cA}{\mathcal{A}}
\newcommand{\x}{\mathbf{x}}
\newcommand{\risk}{\mathcal{R}}
\def\multiset#1#2{\ensuremath{\left(\kern-.3em\left(\genfrac{}{}{0pt}{}{#1}{#2}\right)\kern-.3em\right)}}
\newcommand{\hp}{\widehat{\p}}
\newcommand{\CI}{{C_I}}
\newcommand{\nloc}{{n_1}}
\newcommand{\CR}{{C_{R}}}
\newcommand{\bm}{\mathbf{m}}
\title{Discrete Distribution Estimation under User-level\\ Local Differential Privacy}
\newcolumntype{C}{>{\centering\arraybackslash}p{0.27\textwidth}}
\begin{document}

\author{
\begin{tabular}{C C C}
  Jayadev Acharya & Yuhan Liu & Ziteng Sun\\
 Cornell University & Cornell University & Google Research\\ 
\small \texttt{acharya@cornell.edu} &\small \texttt{yl2976@cornell.edu} & \small \texttt{zitengsun@google.com}
\end{tabular}
}
\date{}

\maketitle

\begin{abstract}
    We study discrete distribution estimation under user-level local differential privacy (LDP). In user-level $\varepsilon$-LDP, each user has $m\ge1$ samples and the privacy of all $m$ samples must be preserved simultaneously. \new{We resolve the following dilemma: While on the one hand} having more samples per user should provide more information about the underlying distribution, on the other hand, guaranteeing privacy of all $m$ samples should make estimation task more difficult. \new{We} obtain tight bounds for this problem under almost all parameter regimes. Perhaps surprisingly, we show that in suitable parameter regimes, having $m$ samples per user is equivalent to having $m$ times more users, each with only one sample. Our results demonstrate interesting phase transitions for $m$ and the privacy parameter $\varepsilon$ in the estimation risk. \new{Finally, connecting with recent results on shuffled DP, we show that combined with random shuffling,
    our algorithm leads to optimal error guarantees (up to logarithmic factors) under the central model of user-level DP in certain parameter regimes. We provide several simulations to verify our theoretical findings.
    %it is possible to achieve the same error guarantee for the same $\varepsilon$ in the local model as 
    }
    % \znote{Note sure whether we should put the shuffle DP idea as a main contributions since the parameter regime it applies is pretty tight.}
\end{abstract}

\section{Introduction}

Modern distributed machine learning systems such as federated learning~\cite{Kairouz2021open} collects data from users to provide better service. Without proper design, a learning algorithm \new{can} reveal sensitive information about the users. Differential privacy (DP)~\cite{DMNS:06}, which requires the algorithm's output to be ``similar'' when a single contribution changes,
has become the gold standard for privacy protection in many machine learning and database applications. 

In the distributed setting, the more stringent version, local differential privacy (LDP)~\cite{warner1965randomized, KLNRS:11} requires users to privatize their data before sending it to the data collector (see \cref{def:ldp}). {In other words, the true data never leaves the user.} \new{However, LDP comes at a significant drop in utility compared to central DP where a trusted central data collector performs differentially private analysis on user data.  To circumvent this,}  \new{a sequence of recent works~\cite{erlingsson2019amplification, 10.1007/978-3-030-17653-2_13, 10.1007/978-3-030-26951-7_22, girgis2021renyi, 9719772}, has shown that combined with random shuffling, locally randomized data can lead to an amplified DP guarantee in the central model. This setting is often referred to as the \textit{shuffle model of DP} and motivates more study of local randomizers with large privacy parameters. }

%While these requirements do offer theoretical guarantee on users' privacy, it has been shown in previous works that they also lead to a degradation in the utility of the system. 
For the task of discrete distribution under LDP constraints, efficient algorithm and fundamental limits have been established in \cite{DJW:13:FOCS, EPK:14, KBR:16, YeB17, ASZ:18,chen2020breaking}. However, these works consider the setting where each user contributes a single data point. \new{The setting where multiple samples per user are allowed, which is common in practice, is largely unexplored.}

%We are interested in discrete distribution estimation under $\eps$-LDP constraints. The problem is studied extensively in previous literature \cite{DJW:13:FOCS, EPK:14, KBR:16, YeB17, ASZ:18,chen2020breaking}. 

We study discrete distribution estimation \new{when each user has multiple data samples and must privatize all their samples under LDP. Notice that without privacy constraints, more samples per user means an increase in overall number of samples thus leading to a reduction in the estimation error. When each user has multiple samples, one can choose to ignore all but one sample from each user and obtain the same performance as the item-level LDP where user has one data sample, which will lead to the case performance as the one-sample case. When the users have multiple samples, we have hope of using information from these samples to obtain better estimators. However, the noise addition mechanism also becomes stringent because now changing the value of a data point means changing all the samples of a user.  We ask the following question.

\begin{center}
    \emph{Can multiple samples per user help with estimation while maintaining \\the same local privacy budget at each user?}
\end{center}

We settle this question affirmatively and \new{obtain a nearly tight characterization of the estimation error} for all values of $m$ (the number of samples per user) and $\eps$ (the privacy parameter). We show that in certain regimes, \emph{having $\nspu$ samples per user is equivalent to having $\nspu\ns$  users each with one sample and each with a privacy budget of $\eps$}.} Our results also demonstrate interesting phase transitions of the estimation risk in terms of privacy budget $\eps$ and number of samples per user $m$.

\new{Moreover, we show that combined with random shuffling, our results lead to optimal (up to logarithmic factors) estimation error in the central model of user-level privacy~\cite{liu2020user, esfandiari2021tight} in certain regimes of $\eps$ while maintaining the local privacy guarantee that the server only has access to a properly randomized version of user data. This also establishes the tight estimation error in the shuffle model of DP.}

\paragraph{Organization.} We \new{define the problem in} \cref{sec:setting} and state our results in \cref{sec:results}. We introduce our algorithms for the high privacy regime ($\eps<1$) in \cref{sec:high-priv}. Algorithms for the low privacy regime are discussed in \cref{sec:low-priv}. Finally we discuss lower bounds in \cref{sec:lower-bound}. Missing proofs are presented in the supplementary material.

% The main goal of this work is to answer the question: with more samples available, how can we achieve a better performance while still using the same privacy budget, and what is the best that we can achieve? 
%In \cite{liu2020user}, discrete distribution estimation has been studied under approximate central DP at user-level and it has been shown that more samples per user can indeed help in certain parameter regimes. We settle this question 

\subsection{Problem setup and preliminaries} \label{sec:setting}
\new{Let $\Delta_\ab:=\{\p=(\p_1, \ldots, \p_\ab)\in \mathbb{R}^\ab: \p\ge 0, \normone{\p}=1\}$  be the $(\ab-1)$-dimensional probability simplex, which is the set of all $\ab$-ary distributions.} In this paper, we consider the homogeneous case where there are a total of $\ns$ users, each observing $\nspu$ i.i.d.\ samples from
the same (unknown) distribution  
$\p \in \Delta_\ab$. We write 
$\samp{i} := (\sampi{i}1, \sampi{i}2, \dots, \sampi{i}\nspu)$ for the samples at user $i$, and
$\samps = (\samp{1}, \samp{2}, \ldots, \samp{\ns}) \in [\ab]^{\ns\nspu}$ for all $\ns\nspu$ samples.

\paragraph{Remark on heterogeneity.} In practice, the data generation process can be heterogeneous. Our results can be extended to the case with limited heterogeneity on user distribution, \eg $\forall i \in [\ns], X_i \sim \p_i$ and $\totalvardist{\p_i}{\p} \le \gamma$.  Using the coupling argument in \cite[Appendix B]{NEURIPS2021_67e235e7}, the same results can be obtained when $\gamma$ is small (polynomial in $1/\nspu$ and $1/\ns$). We leave the study of the more general heterogeneous case as an interesting future work. 

To preserve privacy of users, (central) differential privacy requires an algorithm $\cA$ has ``similar" outputs on neighbouring datasets, formally defined below.
\begin{definition} \label{def:centraldp}
    An algorithm $\cA: [\ab]^{\ns \nspu} \rightarrow \cY$ is said to be $(\eps, \delta)$-DP at user level if for any $X^\ns$ and $X^{'\ns}$ which differ at one user's contribution, we have for any $S \subset \cY$,
    \[
        \bPr{\cA(X^\ns) \in S} \le e^\eps  \bPr{\cA(X^{'\ns}) \in S} + \delta.
    \]
\end{definition}

\new{The case of $\delta > 0$ is called approximate DP and $\delta = 0$ is pure DP, denoted as $\eps$-DP. When $\nspu = 1$, this is the same as item-level DP.}

\new{In the local model of DP, user $i$ sends a 
message $Y_i \in \mathcal{Y}$ to the central server through a channel $W_i$, which describes the randomized mapping from $[\ab]^\nspu$ to $\cY$. We require each $W_i$ to satisfy LDP constraints:}

\begin{definition} \label{def:ldp}
\new{A randomized scheme $W_i:[\ab]^m\to \cY$ satisfies $(\eps, \delta)$-LDP at user-level if $\forall \x, \x'\in[\ab]^m$ and $S \subset \cY$, 
\begin{equation} \label{eqn:def_ldp}
	W_i(y  \in S \mid \x) \le e^\eps\cdot W_i(y \in S \mid \x') + \delta.
\end{equation}}
\end{definition}

\new{For LDP, we will focus on the case when $\delta = 0$, denoted as $\eps$-LDP. All messaging schemes satisfying~\eqref{eqn:def_ldp} with $\delta = 0$ are denoted $\cW_\eps$. %When $\nspu = 1$, it is the same as item-level LDP.
}

\new{Upon receiving $\msgs 
\eqdef (\msg{1}, \msg{2}, \ldots, 
\msg{\ns})$, the server outputs an estimator $\hp\colon \mathcal{Y}^\ns \to \Delta_{\ab}$ for the underlying distribution $\p$.} The performance of the estimator is measured by the expected total variation (TV) distance between $\hp$ and 
$\p$, where for $\p,\q\in \Delta_\ab$, $\totalvardist{\p}{\q} \eqdef (1/2) \sum_{x \in [\ab]} |\p(x) - \q(x)|$. In this work, we are interested in the \emph{minimax risk} of the estimation problem, defined as 
\begin{align}
	\risk(\eps, \ab, \ns, \nspu) \eqdef \min_{W^\ns} \min_{\hat{\p}} 
	\max_{\p \in 
	\Delta_{\ab}} \bEE{\totalvardist{\hp(\msgs)}{\p}}\,
\end{align}
where the minimum over $W^\ns$ is taken over all $\eps$-LDP messaging schemes.

In general, the choice of $W_i$ may 
depend on the previous messages $Y^{i-1} \eqdef (\msg{1}, \msg{2}, \ldots, \msg{i - 1})$ and a common random seed $U$ (independent of the observations) available to all users. A protocol is called \textbf{noninteractive} if all the channels $W_i$s are chosen independently of each other conditioned on the shared random seed. In distributed systems, noninteractive schemes are easier to implement and lead to lower latency. 

Next we introduce composition property of differential privacy and privacy amplification by shuffling, which we will use in later sections.
\begin{theorem}[(Advanced) composition \cite{DRV:10, DR:14}] If messaging schemes $W_1, W_2, \ldots, W_t$ satisfy $\eps$-LDP, then their composition $W^t = (W_1, W_2, \ldots, W_t)$ is $\eps'$-LDP with $\eps' = t\eps$ and $(\eps'', \delta)$-LDP with $\eps'' = \eps\sqrt{2t\log(1/\delta)} + t \eps (e^\eps - 1)$. Moreover, the choice of $W_i$ is allowed to depend on the outputs of $W_1, W_2, \ldots, W_{i-1}$.
\label{thm:composition}
\end{theorem}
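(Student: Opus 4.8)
The plan is to prove the two claims separately. Observe first that $\eps$-LDP is exactly $\eps$-DP in which the ``dataset'' is a single user's block of $m$ samples and ``neighboring'' means an arbitrary pair $\x,\x'\in[\ab]^m$; hence both claims reduce to the classical composition theorems for differential privacy, and the only feature beyond the non-adaptive setting is that the round-$i$ channel $W_i$ may depend on the earlier messages, which I would handle throughout by conditioning on the message prefix $y^{i-1}$.

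For the basic bound $\eps'=t\eps$ I would bound the transcript likelihood ratio pointwise. Fix $\x,\x'$ and a transcript $y^t$; writing $p_i(\cdot\mid y^{i-1},\x)$ for the law of the $i$-th message given the prefix and the input, the transcript likelihood factorizes as $\prod_{i=1}^t p_i(y_i\mid y^{i-1},\x)$, and for a fixed prefix $y^{i-1}$ the $i$-th channel is \emph{some} fixed $\eps$-LDP channel, so each ratio $p_i(y_i\mid y^{i-1},\x)/p_i(y_i\mid y^{i-1},\x')$ lies in $[e^{-\eps},e^\eps]$. Multiplying over $i$ gives transcript ratio at most $e^{t\eps}$, and integrating over any event $S\subseteq\cY^t$ gives $t\eps$-LDP.

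For the advanced bound I would run the standard privacy-loss / martingale-concentration argument. Sample $Y_1,\dots,Y_t$ from the composed channel on input $\x$, define the privacy-loss increments $\mathcal L_i\eqdef\ln\bigl(p_i(Y_i\mid Y^{i-1},\x)/p_i(Y_i\mid Y^{i-1},\x')\bigr)$, and let $\mathcal L=\sum_{i=1}^t\mathcal L_i$ be the total privacy loss. The two facts to establish are: (i) $|\mathcal L_i|\le\eps$ almost surely, from the $\eps$-LDP guarantee of the $i$-th channel; and (ii) the conditional mean $\mathbb{E}[\mathcal L_i\mid Y^{i-1}]$ is at most $\eps(e^\eps-1)$, since it equals a KL divergence $D(P\|Q)$ between two $\eps$-indistinguishable laws $P,Q$, and $D(P\|Q)+D(Q\|P)=\sum_y(P(y)-Q(y))\ln(P(y)/Q(y))\le\eps\sum_y|P(y)-Q(y)|=2\eps\,\totalvardist{P}{Q}\le\eps(e^\eps-1)$, the last step using $\totalvardist{P}{Q}\le\tanh(\eps/2)=\frac{e^\eps-1}{e^\eps+1}$ for $\eps$-indistinguishable $P,Q$. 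Given (i) and (ii), Hoeffding's lemma applied to the martingale increment $\mathcal L_i-\mathbb{E}[\mathcal L_i\mid Y^{i-1}]$ (supported in an interval of length $2\eps$) yields $\mathbb{E}[e^{\lambda\mathcal L_i}\mid Y^{i-1}]\le\exp(\lambda\eps(e^\eps-1)+\lambda^2\eps^2/2)$ for $\lambda>0$; iterating the tower property over the filtration generated by $Y_1,\dots,Y_t$ gives $\mathbb{E}[e^{\lambda\mathcal L}]\le\exp(t\lambda\eps(e^\eps-1)+t\lambda^2\eps^2/2)$. A Chernoff bound optimized at $\lambda=z/(t\eps^2)$ then gives $\Pr[\mathcal L>t\eps(e^\eps-1)+z]\le\exp(-z^2/(2t\eps^2))$, and choosing $z=\eps\sqrt{2t\ln(1/\delta)}$ makes the right-hand side $\delta$, so $\Pr[\mathcal L>\eps'']\le\delta$ with $\eps''=\eps\sqrt{2t\ln(1/\delta)}+t\eps(e^\eps-1)$. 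Finally I would convert this tail bound into $(\eps'',\delta)$-LDP in the usual way: for any event $S$, split $\Pr[W^t(\x)\in S]$ according to whether $\mathcal L\le\eps''$; on $\{\mathcal L\le\eps''\}$ the transcript density ratio is at most $e^{\eps''}$, contributing at most $e^{\eps''}\Pr[W^t(\x')\in S]$, while the complement contributes at most $\Pr[\mathcal L>\eps'']\le\delta$. The ``moreover'' about adaptively chosen $W_i$ needs no separate argument, since every step only used that, conditioned on $y^{i-1}$, the $i$-th channel is some $\eps$-LDP channel.

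I expect the main obstacle to be the careful bookkeeping of the adaptive dependence — precisely defining ``the $i$-th channel given prefix $y^{i-1}$'' and checking that $\mathcal L_i-\mathbb{E}[\mathcal L_i\mid Y^{i-1}]$ is genuinely a martingale-difference sequence under the input-$\x$ law, so that the tower-property iteration is valid — together with step (ii), the KL bound, which is exactly where the $e^\eps-1$ correction term comes from; the Hoeffding/Chernoff concentration step is then routine.
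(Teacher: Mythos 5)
The paper does not prove Theorem~\ref{thm:composition}; it cites \cite{DRV:10, DR:14} and uses the statement as a black box. Your proposal is a correct, self-contained reconstruction of the standard Dwork--Rothblum--Vadhan argument, and all the key steps check out: the pointwise product bound for pure composition; the decomposition of the privacy loss into a martingale; the bound $\mathbb{E}[\mathcal L_i \mid Y^{i-1}] = D(P\|Q) \le D(P\|Q)+D(Q\|P) = \sum_y (P(y)-Q(y))\ln\tfrac{P(y)}{Q(y)} \le 2\eps\,\tanh(\eps/2) \le \eps(e^\eps-1)$ (your intermediate $2\eps\tanh(\eps/2)$ is in fact slightly tighter than the stated $\eps(e^\eps-1)$, and $\frac{2}{e^\eps+1}\le 1$ for $\eps\ge 0$ closes the gap); Hoeffding's lemma applied to the centered increments of range $\le 2\eps$; the Chernoff optimization at $\lambda = z/(t\eps^2)$; and the standard conversion of a privacy-loss tail bound into an $(\eps'',\delta)$ guarantee by splitting on the event $\{\mathcal L\le\eps''\}$. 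Since the paper offers no proof of its own, there is nothing to compare against beyond observing that your argument is precisely the one the cited references give; your remark that adaptivity is handled for free by conditioning on the prefix $y^{i-1}$ is also accurate.
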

\new{
\begin{theorem}[Amplification by shuffling~\cite{9719772}] Suppose messaging schemes $W_1, W_2, \ldots, W_\ns$ satisfy $\eps$-LDP. Let $\cA$ be the algorithm that applies $(W_1, W_2, \ldots, W_\ns)$ on $X^\ns_\pi = (X_{\pi(1)}, \ldots, X_{\pi(\ns)})$ where $\pi$ is a uniform premutation of $[\ns]$, then we have for $\delta \in (0, 1)$ satisfying $\eps \le \log(\frac{\ns}{16\log(1/\delta)})$, $\cA$ is $(\eps', \delta)$ - central DP for 
\[
    \eps' \le \log \Paren{ 1 + \frac{e^\eps - 1}{e^\eps + 1} \Paren{\frac{8 \sqrt{e^\eps \log(4/\delta )}}{\sqrt{n}} + \frac{8 e^\eps}{\ns}} }.
\]
\label{thm:shuffling}
\end{theorem}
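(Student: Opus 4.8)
The plan is to reproduce the ``hiding among the clones'' analysis of \citet{9719772}; here is the route I would take. First I would reduce to the hardest instance: fix two neighbouring datasets that differ only in user~$\ns$'s record, say it equals $x^0$ versus $x^1$, and fix the records $d_1,\dots,d_{\ns-1}$ of all other users; a general substitution is handled as two deletions at the cost of small constant factors that get folded into the final bound. After this reduction it suffices to bound, for every event $S$, the hockey-stick divergence between the law of $\cA(X^\ns_\pi)$ when user~$\ns$ holds $x^0$ and when it holds $x^1$.

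The core of the argument is a \emph{blanket decomposition}. Since every $W_i$ is $\eps$-LDP, for each fixed input $d_i$ and outcome $y$ we have $W_i(y\mid d_i)\ge e^{-\eps}\max\{W_i(y\mid x^0),W_i(y\mid x^1)\}\ge \tfrac12 e^{-\eps}\big(W_i(y\mid x^0)+W_i(y\mid x^1)\big)$, so I can realize non-special user~$i$'s message by tossing an $e^{-\eps}$-coin and, on heads, drawing from the ``blanket'' $\tfrac12 W_i(\cdot\mid x^0)+\tfrac12 W_i(\cdot\mid x^1)$. Likewise, $\eps$-LDP lets me write $W_\ns(\cdot\mid x^b)=\tfrac{1+(-1)^b\beta}{2}\,\mathcal{Q}^0+\tfrac{1-(-1)^b\beta}{2}\,\mathcal{Q}^1$ for two distributions $\mathcal{Q}^0,\mathcal{Q}^1$, with $\beta=\tfrac{e^\eps-1}{e^\eps+1}$, so the special user's blanket is exactly $\tfrac12\mathcal{Q}^0+\tfrac12\mathcal{Q}^1$. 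Let $C$ be the number of non-special users landing in the blanket, so $C\sim\mathrm{Bin}(\ns-1,e^{-\eps})$. Conditioned on $C=c$, the permutation $\pi$ makes the adversary's view equivalent to placing user~$\ns$'s message in a uniformly random one of $c+1$ slots and filling the other $c$ with i.i.d.\ blanket samples. Labelling each of these $c+1$ messages by whether it was generated from $\mathcal{Q}^0$ or $\mathcal{Q}^1$ (the values are i.i.d.\ given this label, hence carry no extra information), the view collapses to the count of ``$\mathcal{Q}^1$'' labels, which is distributed as $\mathrm{Bin}(c,\tfrac12)+\mathrm{Ber}(\tfrac{1-\beta}{2})$ under $b=0$ and $\mathrm{Bin}(c,\tfrac12)+\mathrm{Ber}(\tfrac{1+\beta}{2})$ under $b=1$.

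What remains is an explicit divergence computation plus a concentration step. The likelihood ratio of the two shifted binomials at value $a$ equals $\tfrac{1+\beta\eta}{1-\beta\eta}$ with $\eta=\tfrac{2a-c-1}{c+1}$; restricting to $a\le \tfrac{c}{2}+\Theta(\sqrt{c\log(1/\delta)})$ discards only a small multiple of $\delta$ mass and bounds the ratio by $1+\beta\cdot O(\sqrt{\log(1/\delta)/c})$. To finish I would lower-bound $c$: the hypothesis $\eps\le\log\big(\ns/(16\log(1/\delta))\big)$ forces $\bEE{C}=(\ns-1)e^{-\eps}$ to exceed a large multiple of $\log(1/\delta)$, so a Chernoff bound gives $C\ge\tfrac12\ns e^{-\eps}$ outside an event of probability $O(\delta)$, which is folded into the final $\delta$. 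On the good event $\sqrt{\log(1/\delta)/c}=O\big(\sqrt{e^\eps\log(1/\delta)/\ns}\big)$, yielding $(\eps',\delta)$-DP with $\eps'\approx\log\big(1+\beta\cdot O(\sqrt{e^\eps\log(1/\delta)/\ns})\big)$; the additive $8e^\eps/\ns$ term inside the logarithm, the constant $8$, and the $\log(4/\delta)$ rather than $\log(1/\delta)$ all come from carefully tracking the extra ``$+1$'' clone, the binomial tail bound, and the deletion-to-substitution step.

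I expect the delicate point to be this last divergence estimate: one must obtain a genuinely \emph{tight}, closed-form bound on the hockey-stick divergence of the shifted-binomial pair --- naively invoking advanced composition (Theorem~\ref{thm:composition}) loses a spurious $\sqrt{\eps}$-type factor and gives only $\eps\sqrt{t}$-style rates --- and one must coordinate the truncation level used there with the Chernoff tail in the last step so that all discarded probability adds up to the single target $\delta$ while keeping the leading constant small.
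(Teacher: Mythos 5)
Theorem~\ref{thm:shuffling} is quoted from~\cite{9719772} and is not proved in this paper, so there is no in-paper argument for your sketch to be compared against; what you have written is a reconstruction of the cited work's ``hiding among the clones'' proof. At the structural level it is faithful: the blanket decomposition with acceptance probability $e^{-\eps}$, the special-user mixture weight $\beta=\frac{e^\eps-1}{e^\eps+1}$, the reduction to the shifted-binomial pair $\binomial{C}{1/2}+\bernoulli{(1\mp\beta)/2}$, and the two-step bound (an explicit hockey-stick divergence computation for the binomial pair, followed by a Chernoff lower tail on $C$, which is where the hypothesis $\eps\le\log(\ns/(16\log(1/\delta)))$ enters to guarantee a large number of clones) are precisely the ingredients in~\cite{9719772}. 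Your closing observation that advanced composition would introduce a spurious $\sqrt{\eps}$-type factor is also the right diagnosis of why the explicit divergence computation is unavoidable.

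One imprecision worth repairing in a full write-up: the blanket samples are independent but \emph{not} identically distributed when the randomizers $W_1,\dots,W_\ns$ differ across positions, and the shuffle routes the special input to a uniformly random channel, so the decomposition into $(\mathcal{Q}^0,\mathcal{Q}^1)$ must be carried out per channel $W_i$ rather than once for $W_\ns$ as your sketch suggests. The object that is genuinely i.i.d.\ among blanket messages is the $\{0,1\}$ \emph{label}, not the message value itself, and passing from the full transcript (which reveals which slot used which $W_i$) to the count of $1$-labels is a data-processing/sufficiency step that must be argued explicitly; this is exactly the content of the reduction lemma in~\cite{9719772}. It does not change the final constants, but the phrase ``filling the other $c$ slots with i.i.d.\ blanket samples'' glosses over the one step a naive rederivation would get wrong.
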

When $\eps > 1$, we have $\eps' = O\Paren{\sqrt{\frac{e^\eps \log(1/\delta)}{n}}}$ and when $\eps \le 1$, $\eps' = O\Paren{\eps \sqrt{\frac{ \log(1/\delta)}{n}}}$. In the distributed setting, 
random shuffling is often performed by a secure multi-party communication protocol. Hence besides central DP guarantee, the model also guarantees that the server does not have access to the true user data. This model is often referred to as \emph{shuffle model}~\cite{erlingsson2019amplification, 10.1007/978-3-030-17653-2_13, 10.1007/978-3-030-26951-7_22, girgis2021renyi, 9719772}.
}

\section{\new{Prior work and our results}}\label{sec:results}
Distribution estimation under local privacy when each user has one sample ($\nspu = 1$) has been well-studied and it has been established that \cite{EPK:14, DJW:13:FOCS, KBR:16, YeB17, ASZ:18, AS:19},\footnote{We use $a \lor b = \max\{a, b\}$ and $a \land b = \min\{a, b\}$.}
\begin{align} \label{eqn:rate-one-sample}
\risk(\eps, \ab, \ns, \nspu = 1)=\Theta\left(\sqrt{\frac{\ab}{\ns}}\lor\sqrt{\frac{\ab^2}{n\newer{ \Paren{ (e^\eps - 1)^2  \land e^\eps}}}} \right).
\end{align}
\new{The first term is the centralized minimax risk without privacy constraints and the second term is the additional loss due to privacy. In our setup when each player has $\nspu$ samples, without privacy constraints when $\eps = \infty$, the server has unconstrained access to all $\ns \nspu$ samples giving a risk of}
\begin{equation} \label{eqn:stat_rate}
    \risk(\eps = \infty, \ab, \ns, \nspu )=\Theta\left(\sqrt{\frac{\ab}{\ns \nspu}}\right).
\end{equation}

\new{Therefore the first term of minimax risk reduces by a factor of  $1/\sqrt{\nspu}$ compared to the case when each user has one sample. The conundrum we try to resolve is about the second term. Can one take advantage of the multiple samples per user or does the requirement of guaranteeing privacy to all samples overwhelm the minimax risk?}

\smallskip
\new{Consider the case when $\eps = O(1)$. If we only use one sample from each user, we recover the rate $O\Paren{\sqrt{\ab^2/ (\ns\eps^2)}}$ for the case of $\nspu = 1$ under $\eps$-LDP. Another approach is to use a naive element-level LDP algorithm and apply composition of LDP (\cref{thm:composition}) to get user-level privacy guarantee. This leads to a rate of 
%$O\Paren{\sqrt{\ab^2/ (\ns \nspu (\eps/\nspu)^2)}} = O\Paren{\sqrt{\nspu \ab^2/ (\ns\eps^2)}}$ 
$O\Paren{\sqrt{\nspu \ab^2/ (\ns\eps^2)}}$ under pure LDP
or $\tilde{O}(\sqrt{\ab^2/ (\ns\eps^2)})$ if we relax to approximate LDP and use advanced composition. Either case, the risk does not decrease with $\nspu$.  The question of whether increasing $\nspu$ brings an advantage is still unclear.}

Another important question for the general $\nspu>1$ case is the dependence on the privacy parameter $\priv$. From ~\eqref{eqn:rate-one-sample}, when $\nspu=1$, the error rate decreases exponentially with respect to $\eps$ when $\eps \in (1, \ln\ab)$. With $\nspu>1$, can we still enjoy this exponential rate, and if so, for what ranges of $\priv$?

\newer{\paragraph{Our results.}} In this work, we \new{answer} these questions, showing that increasing $\nspu$ can \emph{indeed help} in certain regimes and the rate can be as steep as $O(1/\sqrt{\nspu})$ as in the centralized case. Moreover, we characterize the precise dependency on $\eps$, which has more sophisticated phase transitions compared to the case where $\nspu = 1$. Our results are summarized in Table~\ref{tab:summary}.

\begin{table*}[t]
        \centering
        \def\arraystretch{2.3}
        \begin{tabular}{|c|c|c|c|c|}
        \hline
        \multirow{2}{4em}{Regime}   & \multirow{2}{4em}{$\priv<1$} &\multicolumn{3}{|c|}{$\priv>1$}\\\cline{3-5}
        & & $\nspu<\ab/e^\priv$ & $\;\ab/e^\priv\le m<k\;$ & $\;m\ge k\;$\\
        \hline
        Upper bound & \multirow{2}{4em}{$\sqrt{\frac{k^2}{mn\priv^2}}$ (Thm \ref{thm:rate_small_eps})} & \multirow{2}{4em}{$\sqrt{\frac{k^2}{mne^\priv}}$ (Thm \ref{thm:small_m})}& $\!\! \sqrt{\frac{\ab}{\nspu\ns }} \!\! + \!\! \sqrt{\frac{k\log(\frac{\ab}{\nspu}+1)}{n\priv}}$ (Thm \ref{thm:mid_m}) & \multirow{2}{8em}{$\sqrt{\frac{\ab}{\nspu\ns }} +\sqrt{\frac{k^2}{mn\priv}} \dagger $  \\(Cor \ref{cor:large_m}, Thm~\ref{thm:lb-large-m}) }\\
        \cline{4-4}\cline{1-1}
        Lower bound & & &$\sqrt{\frac{\ab}{\nspu\ns }} + \sqrt{\frac{k}{n\priv}} \dagger $  %(Cor~\ref{cor:lb-medium-m})
        (Thm~\ref{thm:lb-large-m})
        &\\\hline
        \end{tabular}
        \vspace{+5pt}
        \caption{Estimation risks for different parameter regimes of $\nspu$ and $\eps$ (omitting constants).
        %Omitting constants and an additive factor of $\sqrt{k/mn}$ due to statistical risk in~\cref{eqn:stat_rate}. 
        The upper bounds hold with mild regularity (see theorem statements for details). For risks marked with $\dagger$, the lower bounds hold only when $\ns > (\ab/\eps)^2$.
        }
        \label{tab:summary}
    \end{table*}
    
For sufficiently large $\ns$, our rates are tight up to constant factors in all regimes except in $\ab/e^\priv\le\nspu<\ab$ where it is tight up to log factors. Somewhat surprisingly, for $\eps<1$ or $\nspu<\ab/e^\priv$, the error rate is the same as having $\nspu$ times more users in the one sample case,  but \newer{the sum of privacy budgets of all users} is $m$ times smaller. Next we look at $\nspu$ and $\eps$ separately and discuss their rates in different regimes.

\paragraph{Dependence on $\nspu$.} When $\eps<1$, the error rate always decays as $\Theta(1/\sqrt{\nspu})$. For $\priv\ge 1$, the error rate with respect to $\nspu$ differs for small $\nspu$ ($\nspu<\ab/e^\priv$), medium $\nspu$ ($\ab/e^\priv<\nspu<\ab$), 
and large $\nspu$ ($\nspu>k$). For small $\nspu$ and large $\nspu$, the error decays as $\sqrt{\nspu}$, but the dependence on $\priv$ is different. 
For medium $\nspu$, however, the error barely improves with $\nspu$ by at most a logarithmic factor.  
It is an interesting future direction to study
whether this logarithmic factor is tight.

\paragraph{Dependence on $\priv$.} In the high privacy regime ($\priv<1$), the error decays at a rate of $\Theta(1/\priv)$. The situation in the low privacy regime ($\eps>1$) is more complicated. When $\nspu < \ab$, we observe a phase transition at $\priv=\ln(\ab/\nspu)$. Below this threshold, there is an exponential decay with respect to $\priv$. Beyond $\ln(\ab/\nspu)$, the rate of decay becomes $\Theta(\sqrt{\eps})$.  If $\nspu>\ab$, then the exponential phase does not exist. When $\priv\ge\ab$, the error matches that of $\priv=\infty$ and cannot be improved further by increasing $\priv$.

\subsection{Connection to central and shuffled DP at user level} \label{sec:connection}

\new{
Our results imply almost tight rates in the central and shuffle model of DP under certain parameter regimes through  amplification by shuffling. In particular, we get the following result. 

\begin{theorem}\label{thm:shuffle}
    For $\nspu < \ab$ and $\eps$ and $\delta$ satisfying $\eps < \sqrt{\frac{\ab \log(1/\delta)^2}{\nspu \ns }}$ and $\delta \in (0,1/\ns)$, using algorithms in \cref{thm:rate_small_eps}
 and \cref{thm:small_m} combined with random shuffling, the estimation risk under $(\eps, \delta)$ user-level DP in the shuffle model is
 \[
    O\Paren{\frac{\ab \log(1/\delta)}{ \ns \sqrt{\nspu} \eps}}.
 \]
 \end{theorem}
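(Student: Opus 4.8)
The plan is to \emph{reduce to the LDP bounds we already have}: run the $\eps_0$-LDP protocol of \cref{thm:rate_small_eps} (when the chosen $\eps_0<1$) or of \cref{thm:small_m} (when $\eps_0\ge 1$) for a suitable local parameter $\eps_0\ge\eps$, have the users pass their messages through a secure shuffler (as is standard in the shuffle model), and apply the protocol's estimator to the shuffled messages. The estimators in those two theorems are symmetric functions of the received messages --- they depend on $\msgs$ only through the empirical distribution of $(\msg{1},\dots,\msg{\ns})$, which one reads off from the constructions in \cref{sec:high-priv} and \cref{sec:low-priv} --- so a uniformly random permutation of the messages does not change the distribution of $\hp$; hence the utility equals the $\eps_0$-LDP utility, while by \cref{thm:shuffling} the shuffled mechanism is $(\eps,\delta)$-DP at user level and lives in the shuffle model (one checks $\eps_0$ falls in the admissible range $\eps_0\le\log(\ns/16\log(1/\delta))$ of \cref{thm:shuffling}; the stated bound on $\eps$ together with $\delta<1/\ns$ guarantees this for $\ns$ large enough, the regime in which our rates are stated). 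The remaining task is to pick $\eps_0$ as large as possible --- larger $\eps_0$ means less local noise, hence smaller risk --- subject to (i) amplification still returning a central parameter at most $\eps$ and (ii) the chosen local algorithm being in force.

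I would split on the magnitude of $\eps$ and invert the two branches of \cref{thm:shuffling}, with constants $c_1,c_2$ chosen so the cases tile the stated range of $\eps$. \textbf{(a)} If $\eps\le c_1\sqrt{\log(1/\delta)/\ns}$, set $\eps_0=\Theta\big(\eps\sqrt{\ns/\log(1/\delta)}\big)\le 1$; the $\eps_0\le1$ branch yields central parameter $\Theta(\eps)$, and \cref{thm:rate_small_eps} gives risk $\sqrt{\ab^2/(\nspu\ns\eps_0^2)}=\Theta\big(\ab\sqrt{\log(1/\delta)}/(\ns\sqrt{\nspu}\,\eps)\big)$. \textbf{(b)} If $c_1\sqrt{\log(1/\delta)/\ns}\le\eps\le c_2\sqrt{\ab\log(1/\delta)/(\nspu\ns)}$, set $e^{\eps_0}=\Theta\big(\eps^2\ns/\log(1/\delta)\big)$, so $\eps_0\ge1$; the $\eps_0>1$ branch returns central parameter $\Theta\big(\sqrt{e^{\eps_0}\log(1/\delta)/\ns}\big)=\Theta(\eps)$, and since $e^{\eps_0}\le \ab/\nspu$ throughout this range, \cref{thm:small_m} applies and gives risk $\sqrt{\ab^2/(\nspu\ns e^{\eps_0})}=\Theta\big(\ab\sqrt{\log(1/\delta)}/(\ns\sqrt{\nspu}\,\eps)\big)$. \textbf{(c)} If $c_2\sqrt{\ab\log(1/\delta)/(\nspu\ns)}\le\eps<\log(1/\delta)\sqrt{\ab/(\nspu\ns)}$, the $\eps_0$ that would amplify exactly to $\eps$ violates the applicability window $\nspu<\ab/e^{\eps_0}$ of \cref{thm:small_m}; instead \emph{cap} $e^{\eps_0}=\Theta(\ab/\nspu)$, small enough that $\nspu<\ab/e^{\eps_0}$ strictly and $\eps_0\ge1$ (the narrow band $\nspu=\Theta(\ab)$ is handled identically via \cref{thm:mid_m}). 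Amplification then returns central parameter $\Theta\big(\sqrt{(\ab/\nspu)\log(1/\delta)/\ns}\big)\le\eps$, so we are still $(\eps,\delta)$-DP, and \cref{thm:small_m} gives risk $\sqrt{\ab^2/(\nspu\ns\cdot(\ab/\nspu))}=\Theta\big(\sqrt{\ab/\ns}\big)$.

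Finally I would assemble the three cases. In (a) and (b) the risk is $\Theta\big(\ab\sqrt{\log(1/\delta)}/(\ns\sqrt{\nspu}\,\eps)\big)$, which is $O\big(\ab\log(1/\delta)/(\ns\sqrt{\nspu}\,\eps)\big)$ since $\log(1/\delta)\ge1$ (from $\delta<1/\ns$); in (c) the risk is $\Theta\big(\sqrt{\ab/\ns}\big)$, and rearranging the hypothesis $\eps<\log(1/\delta)\sqrt{\ab/(\nspu\ns)}$ gives exactly $\sqrt{\ab/\ns}\le\ab\log(1/\delta)/(\ns\sqrt{\nspu}\,\eps)$, so the claimed bound holds in every case. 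I expect case (c) to be the crux: one cannot amplify all the way up to $\eps$ without pushing the local estimator out of the window where \cref{thm:small_m} is valid, so the usable local budget is throttled at $e^{\eps_0}=\Theta(\ab/\nspu)$ --- and this throttling is precisely what produces the threshold $\eps<\log(1/\delta)\sqrt{\ab/(\nspu\ns)}$ rather than the stronger $\eps<\sqrt{\ab\log(1/\delta)/(\nspu\ns)}$ a naive inversion would suggest. Beyond that, the only delicate point is matching the $\eps_0\gtrless1$ boundaries of \cref{thm:shuffling} to the applicability ranges of \cref{thm:rate_small_eps} and \cref{thm:small_m} and fixing $c_1,c_2$ so the three regimes leave no gap.
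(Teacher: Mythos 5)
Your proof follows the same high-level strategy as the paper's (Appendix D): pick a local budget $\eps_0$, invert the two branches of the amplification bound in \cref{thm:shuffling}, and plug $\eps_0$ into \cref{thm:rate_small_eps} or \cref{thm:small_m}. Cases (a) and (b) are exactly the paper's "small $\eps_0$" and "large $\eps_0$" cases. Where you depart --- and improve on --- the paper is case (c). The paper asserts that for $\eps < \sqrt{\ab \log(1/\delta)^2/(\nspu\ns)}$ one can pick $\eps_0 \asymp \log(\ns\eps^2/\log(1/\delta))$ with $\eps_0 < \log(\ab/\nspu)$, but a short check shows $\ns\eps^2/\log(1/\delta) < \ab/\nspu$ only on the smaller range $\eps \lesssim \sqrt{\ab\log(1/\delta)/(\nspu\ns)}$; that is, the paper's choice exits the applicability window $\nspu < \ab/e^{\eps_0}$ of \cref{thm:small_m} once $\log(1/\delta)$ exceeds $\ab/\nspu$ (and with $\delta < 1/\ns$ this is not a pathological regime). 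Your fix --- cap $e^{\eps_0} = \Theta(\ab/\nspu)$, accept a strictly smaller amplified central $\eps'$, and observe that the resulting risk $\Theta(\sqrt{\ab/\ns})$ is still dominated by the target thanks precisely to the hypothesis $\eps < \log(1/\delta)\sqrt{\ab/(\nspu\ns)}$ --- is correct and genuinely closes this gap. Your diagnosis at the end, that the throttling at $e^{\eps_0}=\Theta(\ab/\nspu)$ is what buys the extra $\sqrt{\log(1/\delta)}$ slack in the threshold, is the right way to read the statement.

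Two small points worth tightening. First, the remark that "the estimators are symmetric functions of the messages" is the right instinct but is not literally true of the protocols as written: the decoders in \cref{sec:high-priv} and \cref{sec:low-priv} partition users into groups and treat each group's messages differently, so a permutation of bare messages would scramble the group structure. The clean justification is either (i) randomize the group assignment locally and append the (data-independent) group label to each message, after which the decoder really is a function of the message multiset, or (ii) work directly with \cref{thm:shuffling}'s formulation, which permutes the \emph{data} rather than the messages; since $X_1,\dots,X_\ns$ are i.i.d.\ hence exchangeable, $(W_i(X_{\pi(i)}))_i \stackrel{d}{=} (W_i(X_i))_i$ and utility is preserved verbatim, group structure intact. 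Second, your invocation of \cref{thm:mid_m} for the narrow band $\nspu=\Theta(\ab)$ is harmless but not needed: when $\ab/\nspu < e$ one cannot even take $\eps_0 \ge 1$ inside \cref{thm:small_m}'s window, but just using \cref{thm:rate_small_eps} with $\eps_0 = 1$ already gives risk $O(\ab/\sqrt{\nspu\ns})$, which the hypothesis $\eps < \log(1/\delta)\sqrt{\ab/(\nspu\ns)}$ bounds by the target up to a constant --- so the claim still goes through using only the two theorems named in the statement.
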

 
Up to logarithmic factors, the bound matches the tight user-level central DP risk established in \cite{liu2020user, esfandiari2021tight}. which scales as $\tilde{O}(\sqrt{\ab/ (\ns \nspu)} + \ab/(\ns \sqrt{\nspu}\eps))$. Hence it is also tight up to logarithmic factors under shuffle DP. An interesting observation is that the privacy term for central/shuffle DP and local DP have differnt dependence on $\ns$.

We obtain the bound by applying amplification by shuffling (\cref{thm:shuffling}) to the LDP algorithm for $\eps < 1$ and $1 \le \eps \le \log (\ab/\nspu)$.
The above regime of $\eps$ covers both the $1/\eps$ decay rate when $\eps < 1$ and the $1/e^{\eps/2}$ decay rate when $\eps \ge 1$ and $\nspu < \ab/e^\eps$ in the local setting, showing the benefit of obtaining tight rates for large $\eps$ in LDP.
Whether this can be achieved for a wider range of $\eps$ is an interesting future question. We present the details in the supplementary.
}

\subsection{\new{Our approach}} 
\new{How to utilize the increased sample size at each user while preserving the same level of privacy is the central question to be resolved to design algorithms for $\nspu>1$.} A natural observation is that with $\nspu$ samples, each user can obtain a rough estimate of the entire distribution $\p$ with its local samples.
\begin{observation}
\new{For $x\in [\ab]$, let $Z_i(x)$ be the counts of $x$ in user $i$'s samples. Then the empirical frequency $Z_i(x)/\nspu$ is concentrated around $\p(x)$ with a standard deviation of $O(\sqrt{\p(x)(1 - \p(x))/\nspu})$.}
\label{obs:concentration}
\end{observation}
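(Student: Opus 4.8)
The plan is to recognize $Z_i(x)$ as a binomial random variable and then invoke a standard concentration inequality. Since the $\nspu$ samples $\sampi{i}{1},\dots,\sampi{i}{\nspu}$ are i.i.d.\ from $\p$, the count $Z_i(x)$ of occurrences of the symbol $x$ among them is a sum of $\nspu$ i.i.d.\ $\mathrm{Bernoulli}(\p(x))$ indicators, so $Z_i(x)\sim \mathrm{Bin}(\nspu,\p(x))$. First I would record the elementary moment computations: $\bEE{Z_i(x)/\nspu} = \p(x)$ and, by independence of the samples, $\var(Z_i(x)/\nspu) = \p(x)(1-\p(x))/\nspu$, so the standard deviation of $Z_i(x)/\nspu$ is exactly $\sqrt{\p(x)(1-\p(x))/\nspu}$, which already gives the bound on the standard deviation claimed in the statement.

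For the ``concentrated around $\p(x)$'' part, the quickest argument is Chebyshev's inequality, which gives, for every $t \ge 1$,
\[
  \bPr{ \left| \frac{Z_i(x)}{\nspu} - \p(x) \right| > t\sqrt{\frac{\p(x)(1-\p(x))}{\nspu}} } \le \frac{1}{t^2}.
\]
If later sections need the deviation bound to hold simultaneously over all $x \in [\ab]$ with high probability --- the form actually used when aggregating the users' local histograms --- I would instead apply a multiplicative Chernoff bound, or Bernstein's inequality, to each $Z_i(x)$ and union bound over the $\ab$ symbols, which costs only a $\log(\ab\ns)$ factor inside the square root while preserving the $\sqrt{\p(x)}$ scaling.

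There is essentially no obstacle here: the statement is the textbook fact about binomial tails. The only decision worth making consciously is which concentration bound to carry forward, since the multiplicative/Bernstein version is preferable --- the per-symbol deviation then scales like $\sqrt{\p(x)/\nspu}$ rather than the uniform $\sqrt{1/\nspu}$, and this finer control over rare symbols is precisely what the algorithms of \cref{sec:high-priv} exploit.
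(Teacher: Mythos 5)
Your proof is correct and is exactly the standard argument the paper implicitly relies on: $Z_i(x)\sim\mathrm{Bin}(\nspu,\p(x))$, so the variance computation gives the stated standard deviation, and Chebyshev (or Chernoff/Bernstein) gives the concentration. The paper states this as a bare observation without proof precisely because it is this textbook binomial fact, so there is nothing to compare beyond noting your treatment is complete.
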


Our algorithm for $\priv\le 1$ relies on this observation. We provide a motivation for the special case of $\ab=2$, where we just need to estimate $p\eqdef\p(1)$. If $p$ is known to be in an interval $I$ of length $O(\sqrt{p(1-p)/\nspu})$, then the derivative of the following function is large is in $I$,
\[
P_{\nspu, t}(p)\eqdef\probaDistrOf{Z\sim \binomial{\nspu}{p}}{Z/\nspu>t}.
\]
To achieve the centralized rate, it suffices to send the indicator  $\indic{Z_i(1)/\nspu>t}$ where $t\in I$. The server then obtains an empirical estimate of $\probaOf{Z_i(1)/m>t}$, and evaluate the inverse function $P_{\nspu, t}^{-1}$ at the empirical estimate to obtain $\hat{p}$. To ensure privacy, the bits of users can be privatized using Randomized Response~\cite{warner1965randomized}. One remaining ingredient is how to obtain the interval $I$. \newer{For this part, we will rely on \cref{obs:concentration} and apply a private selection-type algorithm,} which we will elaborate in~\cref{sec:high-priv}.

For $\eps>1$, the situation becomes more complicated since we also want to enjoy the benefit of increased privacy budget, especially for $\nspu<\ab/e^\priv$ where an exponential decay in $\priv$ is expected. We observe another benefit of having more local samples.
\begin{observation}
 For any subset $S\subseteq[\ab]$, The probability that a user observes at least one sample in $S$ is $P_\nspu(S)= 1-(1-\p(S))^\nspu$.
 \label{obs:increased-prob}
\end{observation}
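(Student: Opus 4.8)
The claim is immediate from independence of a single user's samples, so I anticipate no real obstacle; the approach is just the standard ``complement plus independence'' computation. What is worth emphasizing is the \emph{consequence} of \cref{obs:increased-prob}: the quantity $P_\nspu(S) = 1-(1-\p(S))^{\nspu}$ can be far larger than $\p(S)$ once $\nspu$ is of order $1/\p(S)$, which is exactly the leverage the $\eps>1$ algorithm will use --- each user reports a privatized indicator of whether it saw \emph{any} sample in a well-chosen set $S$, thereby learning about $\p(S)$ through the amplified probability $P_\nspu(S)$ rather than through $\p(S)$ directly.

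For the proof itself I would fix a user $i$, write its samples as $\sampi{i}{1},\dots,\sampi{i}{\nspu}$, i.i.d.\ from $\p$, and set $\p(S)\eqdef\sum_{x\in S}\p(x)$. For $j\in[\nspu]$ let $E_j$ be the event $\{\sampi{i}{j}\in S\}$, so that $\bPr{E_j}=\p(S)$. The event ``user $i$ observes at least one sample in $S$'' is $\bigcup_{j=1}^{\nspu}E_j$, whose complement is $\bigcap_{j=1}^{\nspu}E_j^{c}$, the event that none of the $\nspu$ samples falls in $S$. Using independence of the $\nspu$ samples, $\bPr{\bigcap_{j=1}^{\nspu}E_j^{c}}=\prod_{j=1}^{\nspu}\bPr{E_j^{c}}=(1-\p(S))^{\nspu}$, and taking complements gives $\bPr{\bigcup_{j=1}^{\nspu}E_j}=1-(1-\p(S))^{\nspu}=P_\nspu(S)$.

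The only hypothesis used is that a single user's $\nspu$ samples are i.i.d., which holds in the homogeneous model of \cref{sec:setting}. Under the limited-heterogeneity extension of the remark on heterogeneity, where user $i$'s samples are i.i.d.\ from $\p_i$ with $\totalvardist{\p_i}{\p}\le\gamma$, the same computation yields $1-(1-\p_i(S))^{\nspu}$ for user $i$, which is within $\nspu\gamma$ of $P_\nspu(S)$; so the identity is recovered up to the coupling error when $\gamma$ is polynomially small, consistent with the discussion there.
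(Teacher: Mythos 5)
Your proof is correct and is exactly the standard complement-plus-independence computation the paper implicitly relies on (the paper states this as an Observation without proof, since it is immediate from the i.i.d.\ assumption). Your added remarks on the consequence of the observation and on the heterogeneous extension are accurate and consistent with the paper's discussion, though not required for the statement itself.
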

The idea is to divide the domain $[\ab]$ into $\nspu$ subsets of equal size, denoted by $B_1, \ldots, B_\nspu$. The users are also divided into $\nspu$ groups, each responsible for estimating the probability of symbols in just one \newer{subset}. A user can only send useful information about a \newer{subset} $B_j$ if it observes at least \new{one} sample \new{in} $B_j$. If $\nspu=1$, this happens with probability $\p(B_j)$. However, with $\nspu$ samples, the probability increases to $P_m(S)$. At least 90\% of the blocks satisfy $\p(B_j)\le 10/\nspu$, in which case $P_\nspu(B_j)=\Theta(\nspu\p(B_j))$. Hence, the number of effective messages sent by the users roughly increases by a factor of $\nspu$.

\paragraph{Connection to \cite{ACLST:21}.}
\cite{ACLST:21} studied a similar problem under communication constraints where each user sends a message of at most $\ell$ bits. They show that more samples per user decreases the error by $O(1/\sqrt{\nspu})$ in certain parameter regimes. While our algorithms are inspired by their algorithms, nontrivial extensions and novel ideas are needed to obtain tight rates in the LDP case. We highlight the important differences with \cite{ACLST:21} in terms of algorithm design and proof technique below.
\begin{enumerate}
    \item \emph{Localization stage.} In the localization stage, the analysis for the Gray code scheme in \cite{ACLST:21} fails \newer{since the bits are not private. This issue cannot be resolved by flipping the bits sent by the Gray code scheme using Randomized Response since it requires the error probability for most of the bits in the Gray code to decrease exponentially.} %if we simply randomly flip the bits using Randomized Response sent by the Gray code scheme. The analysis relies on Lemma A.6 in \cite{ACLST:21}, which would fail for LDP. It requires that for a user $u$  sending a bit $Y_u$ which indicates the $j$th Gray code bit, $\Pr[Y_u=g_j(p)]$ decays exponentially with $md(p, G_j)$ (see Definition A.1 and Section A.1.1 in \cite{ACLST:21} for definitions of $g_j$ and $G_j$ respectively). However, with LDP constraint, the probability must be at least $1/(e^{\priv}+1)$  and cannot decay exponentially. In short, important information about $p$ is lost during privatization.

\newer{In this work, we view the localization localization stage as a private selection problem and resolve it based on private sparse distribution estimation in \cite{pmlr-v132-acharya21b}.} 
%Hence, we must turn to new localization schemes. Our new scheme based on private sparse distribution estimation is advantageous both technically and theoretically. 
\newer{In addition to circumventing the failure issue mentioned above in the LDP case, this new idea can also be used in the communication-constrained case considered in \cite{ACLST:21}, which leads to
a simpler analysis and better regularity condition. For example, Theorem 2.1 of \cite{ACLST:21} requires $\ns/\log\ns=\Omega(\ab\log\nspu)$ for 1-bit algorithms, while using communication-limited sparse distribution estimation algorithm in \cite{pmlr-v132-acharya21b} only requires $\ns=\Omega(\ab\log\nspu)$}.
%It has a much simpler analysis and removes the $\log\ns$ factor in the requirement for $\ns$. For example, Theorem 2.1 of \cite{ACLST:21} requires $\ns/\log\ns=\Omega(\ab\log\nspu)$ for $\ell=1$, while our result for $\eps<1$  only requires $\ns=\Omega(\ab\log\nspu/\priv^2)$. This new idea can be used in communication constraints (e.g., using sparse distribution estimation result in \cite{pmlr-v132-acharya21b}) and enjoy the same benefits.

\item  A \emph{unified algorithm} for $\nspu\le \ab/e^{\priv}$ and $\ab/e^{\priv}\le \nspu\le \ab$. 
For the algorithms with $\nspu\le \ab/e^{\priv}$, we divide the domain $[\ab]$ into $\nspu$  bins instead of $\ab/e^{\priv}$ as suggested by \cite{ACLST:21}. 
Intuitively, this modification ensures that for a uniform $\p$, for any block $B_j$, $P_m(B_j)$ is some constant away from 0 and 1, which ensures that privatization does not lose too much information. 
%If $t=\ab/e^{\priv}$, then $P_m(B_j)$ could be more extreme, and privatization may lose more information. 
Moreover, the algorithms for $\nspu\le \ab/e^{\priv}$ and $\ab/e^{\priv}\le \nspu\le \ab$ are now unified. We can make the same modification to the algorithms in \cite{ACLST:21} for $\nspu\le \ab/2^\ell$ and $\ab/2^\ell \le \nspu\le \ab$.

\item  \emph{Lower bound proof. }\cite{ACLST:21} uses a Poissonization trick, where each user needs to send one bit to indicate whether they get enough samples \newer{under Poisson sampling}, which might violate privacy constraints. 
We resolve this issue differently in different regimes. See \cref{sec:lower-bound} for a detailed discussion.
\end{enumerate}

\section{Algorithms for high privacy regime ($\eps \le 1$)}
\label{sec:high-priv}
We focus on the high privacy regime ($\eps \le 1$) and show that having more samples per user indeed brings an advantage and the rate decreases as $\Theta(1/\sqrt{\nspu})$.
\begin{theorem}
\label{thm:rate_small_eps}
When $\eps<1$ and $\ns\ge C \ab\log(\nspu)/\priv^2$ for some constant $C$,
\[
\risk(\eps, \ab, \ns, \nspu) = \Theta\Paren{ \sqrt{\frac{\ab^2}{\nspu\ns \eps^2}}}.
\]
Moreover, the bound is achieved by a \emph{non-interactive} protocol.
\end{theorem}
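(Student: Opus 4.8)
\emph{Proof plan.} The claimed rate has matching upper and lower bounds, which I would establish separately.

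\emph{Upper bound.} Write the target as $\ab\cdot\big(\tfrac1\eps\sqrt{1/(\nspu\ns)}\big)$; the goal is a non-interactive protocol that estimates each coordinate $\p(x)$ well enough that, after Cauchy--Schwarz over $x$, the total TV error is of this order. The protocol uses two \emph{disjoint} groups of users and Randomized Response~\cite{warner1965randomized} as the only privatization primitive, so each user sends a single $\eps$-LDP message and the whole protocol is $\eps$-LDP at user level. \emph{Stage 1 (localization):} a group of $\Theta(\ab\log\nspu/\eps^2)$ users runs an $\eps$-LDP coarse/sparse distribution estimator in the spirit of \cite{pmlr-v132-acharya21b} (this is the private-selection step alluded to in \cref{sec:high-priv}, and it is exactly what forces the hypothesis $\ns=\Omega(\ab\log\nspu/\eps^2)$ and keeps the overhead logarithmic rather than polynomial in $\nspu$). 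Using \cref{obs:concentration}, this stage (i) discards the symbols with $\p(x)\lesssim\tau:=\tfrac1\eps\sqrt{1/(\nspu\ns)}$ — these carry total mass $O(\ab\tau)$, which is $O(1)$ times the target rate, so zeroing them out is free — and (ii) for each surviving $x$ outputs a count threshold $\theta_x\in\{0,\dots,\nspu\}$ with $|\theta_x-\nspu\p(x)|=O(\sqrt{\nspu\p(x)}+1)$. \emph{Stage 2 (privatized threshold aggregation):} partition the remaining $\Theta(\ns)$ users into one group per surviving symbol; a user in group $x$ applies $\eps$-Randomized Response to the single bit $\indbig{Z_i(x)\le\theta_x}$, and the server debiases the empirical average to obtain $\hat q_x$, which estimates $q_x:=\probaDistrOf{Z\sim \binomial{\nspu}{\p(x)}}{Z\le\theta_x}$ with standard error $O\big(\tfrac1\eps\sqrt{N/\ns}\big)$, where $N\le\ab$ is the number of surviving symbols. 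Non-interactivity is obtained by pushing the threshold assignments into the common random seed and performing the final per-symbol selection at the server, as in \cite{ACLST:21}.

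\emph{From $\hat q_x$ to $\hp$.} The structural lemma I would prove is that, when $\theta_x$ lies within $O(\sqrt{\nspu\p(x)})$ of the mean $\nspu\p(x)$, the map $p\mapsto\probaDistrOf{Z\sim \binomial{\nspu}{p}}{Z\le\theta_x}$ is monotone with derivative of order $\sqrt{\nspu/(\p(x)(1-\p(x)))}$ on a neighbourhood of $\p(x)$ of width $\Omega(\sqrt{\p(x)(1-\p(x))/\nspu})$ — a Binomial (anti-)concentration / local-CLT estimate — so its inverse is $O(\sqrt{\p(x)(1-\p(x))/\nspu})$-Lipschitz there. Taking $\hp(x)$ to be this inverse evaluated at $\hat q_x$ (clipped to $[0,1]$) gives, on the good event for Stage 1, $\sqrt{\bEE{(\hp(x)-\p(x))^2}}=O\big(\tfrac1\eps\sqrt{\p(x)(1-\p(x))\,N/(\nspu\ns)}\big)$; summing over the $N$ surviving symbols and using $\sum_x\sqrt{\p(x)}\le\sqrt{N}$ (Cauchy--Schwarz, since $\sum_x\p(x)\le1$) yields $\bEE{\totalvardist{\hp}{\p}}=O\big(N/(\eps\sqrt{\nspu\ns})\big)+O(\ab\tau)=O\big(\sqrt{\ab^2/(\nspu\ns\eps^2)}\big)$ because $N\le\ab$ (renormalizing $\hp$ to the simplex only helps).

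\emph{Lower bound.} I would use the standard local-privacy Assouad/strong-data-processing machinery~\cite{DJW:13:FOCS,ASZ:18}, which applies even to sequentially interactive protocols. Take the paired family $\p_v$, $v\in\{\pm1\}^{\ab/2}$, with $\p_v(2i-1)=\frac{1+\delta v_i}{\ab}$ and $\p_v(2i)=\frac{1-\delta v_i}{\ab}$; flipping $v_i$ moves $\p_v$ by TV $\Theta(\delta/\ab)$ and by $\chi^2$ $\Theta(\delta^2/\ab)$, so by tensorization $\chi^2(\p_v^{\otimes\nspu}\|\p_{v'}^{\otimes\nspu})=\Theta(\nspu\delta^2/\ab)$ for neighbours $v,v'$ — the linearization being legitimate precisely because $\nspu\delta^2/\ab=\ab/(\ns\eps^2)\ll1$ under the regularity assumption. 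Combined with the $\eps$-LDP strong-data-processing inequality ($\chi^2(W_\#\mu\|W_\#\nu)=O(\eps^2)\chi^2(\mu\|\nu)$ for $\eps\le1$), one message from one user carries $O(\eps^2\nspu\delta^2/\ab)$ of information \emph{summed over all $\ab/2$ Assouad coordinates} — keeping this as a sum, rather than an $\ab/2$-fold blow-up, is exactly what produces $\ab^2$ (not $\ab$) in the numerator, matching the $\nspu=1$ rate in~\eqref{eqn:rate-one-sample} — i.e.\ $O(\eps^2\nspu\delta^2/\ab^2)$ per coordinate on average. Assouad then forces error $\Omega(\delta)$ unless $\ns\cdot\eps^2\nspu\delta^2/\ab^2=\Omega(1)$, hence $\risk=\Omega(\sqrt{\ab^2/(\nspu\ns\eps^2)})$.

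\emph{Main obstacle.} On the upper-bound side the hard part is implementing Stage 1 with only a $\log\nspu$ overhead and, crucially, allocating the user budget correctly \emph{across the scales of $\p(x)$} — deciding which small-mass symbols can be safely zeroed and pinning down $\theta_x$ for the rest — which is where the sparse-estimation viewpoint of \cite{pmlr-v132-acharya21b} and the hypothesis on $\ns$ enter. On the lower-bound side the hard part is establishing the \emph{summed}-information bound in the presence of the $\nspu$-fold product: unlike $\nspu=1$, the perturbation directions for different coordinates are no longer supported on disjoint symbols, so one must control the energy $\sum_i\|\cdot\|^2$ directly, or Poissonize the per-symbol counts to restore cross-coordinate independence — the latter raising the leakage issue flagged in \cref{sec:lower-bound} (revealing how many samples one drew is not free), which must be handled without an explicit ``enough-samples'' bit.
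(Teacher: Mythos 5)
Your upper-bound route is genuinely different from the paper's and the difference is instructive. The paper reduces the general-$\ab$ problem to $\ab$ independent \emph{fair-coin} subproblems via the Hadamard matrix: each user in group $i$ privately reports the interval containing $Z_u(T_i)/\nspu$ where $\p(T_i)\approx 1/2$, and the crucial $\sqrt{\ab}$ saving comes from the fact that the Hadamard inversion divides the aggregated $\ell_2^2$ error by $\ab$ (see the chain $\norm{\hp-\p}_2^2 = \frac{4}{K}\norm{\hp_T-\p_T}_2^2$). Because every subproblem is a balanced coin, the paper never has to exploit any dependence of the per-coordinate MSE on $\p(x)$, and the regularity condition $\ns \gtrsim \ab\log\nspu/\eps^2$ comes out cleanly from choosing the per-interval tail parameter $\delta$ in the localization stage. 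You instead estimate the coordinates $\p(x)$ directly and recover the $\sqrt{\ab}$ saving from a \emph{variance-adaptive} bound — the binomial-inversion error scales like $\sqrt{\p(x)(1-\p(x))/\nspu}$ — combined with $\sum_x\sqrt{\p(x)} \le \sqrt{\ab}$. That observation is correct and not in the paper; it is a nice alternative, and it can be pushed through.

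However, two of your ingredients do not work as stated and need to be repaired. First, the claim that Stage~1 can, with $\Theta(\ab\log\nspu/\eps^2)$ users running a single $\eps$-LDP sparse estimator over $[\ab]$, discard all symbols of mass $\lesssim \tau = 1/(\eps\sqrt{\nspu\ns})$ and output a count threshold $\theta_x$ accurate to $O(\sqrt{\nspu\p(x)}+1)$, is false: with that many users and $\eps$-LDP, the best per-coordinate resolution over $[\ab]$ is on the order of $1/\sqrt{\nspu\log\nspu}$, which is much coarser than $\tau$ whenever $\ns\eps^2 \gg \log\nspu$ (i.e.\ always under the hypothesis), and is also much coarser than $\sqrt{\p(x)/\nspu}$ once $\p(x) \lesssim 1/\log\nspu$ (i.e.\ for all symbols in the hard Paninski instance). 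The localization has to be done \emph{per symbol} (or per Hadamard set), with $\Theta(\log\nspu/\eps^2)$ users each, exactly as in the paper's Algorithm~1: it is a private selection over the $\Theta(\sqrt{\nspu})$ \emph{interval indices}, not over the $\ab$-element alphabet. Once you do it per symbol, the discarding is actually unnecessary. Second, you also need to show that the \emph{localization-failure} event — whose probability is independent of $\p(x)$ and whose contribution to $|\hp(x)-\p(x)|$ can be $\Theta(1)$ — does not wipe out the $\sqrt{\p(x)}$ gain; summing $\ab\cdot\Pr[\text{fail}]$ and comparing against the target $\sqrt{\ab^2/(\nspu\ns\eps^2)}$, the bound closes (because the target being nontrivial forces $\ab^2 \le \nspu\ns\eps^2$, hence $\ab \le \nspu\sqrt{\nspu\ns\eps^2}$), but this step is glossed over and is exactly the part that the Hadamard route avoids.

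On the lower bound you sketch the right plan (Assouad over the Paninski family plus a \emph{summed} strong-data-processing inequality so that the $\ab/2$ coordinates share a single $O(\eps^2)$ budget), and you correctly flag that the crux is the multinomial coupling between coordinates. The paper resolves this for $\eps<1$ by computing the information-contraction coefficient $\alpha$ of the multinomial family directly (their Lemma showing $\{\pmul_z\}$ satisfies the decomposition assumption with $\alpha = O(\sqrt{\nspu\gamma^2/\ab})$), then invoking the summed-TV bound of \cite{acharya2020unified} together with $\sum_y \var[W(y\mid X)]/\E[W(y\mid X)] \le 4\eps^2$. This avoids Poissonization entirely in the $\eps<1$ regime, which is preferable to your alternative of Poissonizing, precisely because (as you note) Poissonization leaks an ``enough-samples'' bit, a problem the paper only confronts — and patches via the $\eps$-LDP-plus-one-bit channel class — in the large-$\nspu$ regime where it is unavoidable. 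So your plan is headed in the right direction, but the missing step is precisely the multinomial $\chi^2$/information computation, which is the technical content of the proof.
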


We describe the upper bound part in this section and discuss the lower bound idea in \cref{sec:lower-bound}. \newer{For simplicity, we describe the \emph{interactive} algorithm in this section, which carries most of the algorithmic ideas. We discuss how to modify the algorithm to a \emph{non-interactive} version in~\cref{app:binomial}.}

Inspired by \cite{ACLST:21}, we start with the special case of $\ab=2$ and then generalize to $\ab>2$. 

\subsection{Coin estimation ($k=2$)}
We first consider a simple coin estimation problem, which corresponds to the special case of $k=2$: There are $\ns$ users, each has $\nspu$ i.i.d. samples from $\bernoulli{p}$. The goal is to estimate $p$ under $\eps$-LDP. Our solution to this simple problem will become a crucial building block for algorithms in the general case.
The formal guarantee is stated below.
\begin{theorem}
\label{thm:binomial_ldp}
For $\priv<1$, there exists an algorithm with an estimate $\hat{p}$ such that if $\ns\ge C\log(\nspu)/\priv^2$  for some constant $C$, 
\[
\expect{(\hat{p}-p)^2}=O\left({1}/\Paren{\nspu\ns\priv^2}\right).
\]
\end{theorem}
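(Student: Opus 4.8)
The plan is to follow the two-stage strategy sketched in the introduction: a \emph{localization} stage that confines $p$ to a short interval $I$ of length $O(\sqrt{p(1-p)/\nspu})$ (up to a multiplicative slack), followed by a \emph{refinement} stage that uses randomized-response on a single thresholded bit per user to estimate $p$ within $I$ at the centralized rate. Split the $\ns$ users into two halves, spending one on each stage, and allocate privacy budget $\eps/2$ (or $\eps$, since stages use disjoint users) to each.

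\emph{Refinement stage (the core estimate).} Suppose we are handed an interval $I=[\ell,\ell+L]$ containing $p$, with $L=c\sqrt{\bar p(1-\bar p)/\nspu}$ for some reference point $\bar p\in I$ and a suitable constant $c$. Fix the threshold $t$ to be (say) the midpoint of $I$. Each user $i$ in this stage computes $b_i=\indic{Z_i(1)/\nspu>t}$ where $Z_i(1)\sim\binomial{\nspu}{p}$, privatizes $b_i$ by randomized response with parameter $\eps$ to get $\tilde b_i\in\{0,1\}$, and sends it. The server forms the unbiased debiased empirical mean $\hat q$ of $P_{\nspu,t}(p)=\probaDistrOf{Z\sim\binomial{\nspu}{p}}{Z/\nspu>t}$; standard RR analysis gives $\expect{(\hat q-P_{\nspu,t}(p))^2}=O\paren{1/(\ns(e^\eps-1)^2)}=O\paren{1/(\ns\eps^2)}$ for $\eps<1$. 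Then set $\hat p=P_{\nspu,t}^{-1}(\hat q)$, truncated to $I$. The key analytic fact is a derivative lower bound: for $p,t$ both in $I$ (so $|p-t|=O(\sqrt{p(1-p)/\nspu})$), the anti-concentration of the binomial at distance $O(1)$ standard deviations from its mean gives $P_{\nspu,t}'(p)=\Theta\paren{\sqrt{\nspu/(p(1-p))}}$ — i.e. the CDF-at-a-fixed-threshold is steep precisely on the localization window. Mean-value theorem then converts the error in $\hat q$ into $\expect{(\hat p-p)^2}=O\paren{\tfrac{p(1-p)}{\nspu}\cdot\tfrac{1}{\ns\eps^2}}=O\paren{1/(\nspu\ns\eps^2)}$. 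Edge cases where $p$ or $1-p$ is $O(1/\nspu)$ (so the "standard deviation" heuristic degenerates) are handled separately: there $p$ itself is tiny, the target rate $1/\sqrt{\nspu\ns\eps^2}$ is comparatively large, and a crude estimate of $P_{\nspu,t}$ suffices.

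\emph{Localization stage.} Here I would invoke Observation \ref{obs:concentration}: each user's empirical frequency $Z_i(1)/\nspu$ is within $O(\sqrt{\log\nspu}\cdot\sqrt{p(1-p)/\nspu})$ of $p$ with high probability. Tile $[0,1]$ by a geometric/dyadic family of candidate intervals whose lengths match the possible values of $\sqrt{p(1-p)/\nspu}$ (there are $O(\log\nspu)$ distinct scales, and $O(\nspu)$ intervals per scale, so $\mathrm{poly}(\nspu)$ candidates in all), and frame "which interval does $p$ lie in" as a private selection / sparse-support estimation problem over this candidate set: this is exactly the point where the paper says to plug in the private sparse estimation routine of \cite{pmlr-v132-acharya21b}. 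Because the support size of the relevant histogram is constant (each user votes for $O(1)$ intervals, or we coarsen so that $p$'s true interval gets a clear plurality), the sample complexity of locating $p$ to the right scale is $O(\log(\#\text{candidates})/\eps^2)=O(\log\nspu/\eps^2)$ users — which is where the hypothesis $\ns\ge C\log(\nspu)/\eps^2$ comes from. Conditioned on the (high-probability) event that localization succeeds, the refinement analysis above applies.

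\emph{Main obstacle.} The delicate part is the derivative estimate for $P_{\nspu,t}$ and making the localization "length" match it on the nose: I need $P_{\nspu,t}'(p)=\Theta(\sqrt{\nspu/(p(1-p))})$ uniformly for all $p$ in the window, including the regimes where $\nspu p(1-p)$ is only moderately large, and I must ensure the window handed over by the localization stage is neither too long (then the derivative bound fails and $\hat p$ is imprecise) nor artificially short (then localization might fail to contain $p$). Getting these two scales to agree up to constants — together with a clean treatment of the small-$p$/large-$p$ boundary cases and a union bound over the $\mathrm{poly}(\nspu)$ localization candidates that only costs $\log\nspu$ in the sample complexity — is the technical heart of the argument; the RR/debiasing and mean-value-theorem steps are routine once that is in place.
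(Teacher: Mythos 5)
Your two-stage plan (localize $p$ to a window of length $\Theta(\sqrt{p(1-p)/\nspu})$, then invert a thresholded binomial-tail function whose derivative is $\Theta(\sqrt{\nspu/(p(1-p))})$ on that window, with randomized response on the bit and a mean-value-theorem conversion) is exactly the paper's strategy, and you correctly pinpoint the derivative lower bound as the technical crux; the RR variance and the edge-case $p=O(1/\nspu)$ treatment also match the paper's Lemma~\ref{lem:refine-derivative} and the $R_4(p)=\Pr[Z\ge 1]$ branch.

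The genuine difference is in the localization construction, and it is worth flagging because your multi-scale dyadic tiling introduces a chicken-and-egg problem that the paper's construction deliberately avoids. If you have $O(\log\nspu)$ scales of intervals, then ``which interval contains $p$'' is not well-posed: $p$ lies in one interval at \emph{every} scale, intervals overlap across scales, and to know which scale is the right one (the one whose interval length matches $\sqrt{p(1-p)/\nspu}$) you already need to know $p$ coarsely. Your hedge (``each user votes for $O(1)$ intervals, or we coarsen so that $p$'s true interval gets a clear plurality'') doesn't resolve this: a user's empirical frequency $Z_u/\nspu$ also falls in one interval per scale, so a naive one-hot vote is $\Theta(\log\nspu)$-sparse, not $O(1)$-sparse. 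The paper sidesteps the whole issue by using a single partition with \emph{quadratic} breakpoints $l_i = C_I i^2/\nspu$, $i\le r=\Theta(\sqrt{\nspu})$. Lemma~\ref{lem:partition-property} then shows $|I_i|=\Theta(\sqrt{p/\nspu})$ automatically whenever $p\in I_i$, i.e.\ the single partition \emph{is} adaptive to $p$'s scale with no scale-selection step. From there each user reports exactly one coordinate (the interval containing $Z_u/\nspu$), the induced distribution $\q$ over coordinates is $\ge 0.96$-concentrated on the three intervals around $i_p$, and the plurality winner is found by a direct Chernoff + union bound over the $\Theta(\sqrt{\nspu})$ coordinates — which is where the $\ns\ge C\log(\nspu)/\eps^2$ hypothesis enters, matching your sample-complexity count even though the argument is more elementary than a general private-selection invocation. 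A secondary difference: you set the refinement threshold $t$ to the midpoint of $\hat I$, which makes the refinement stage depend on the localization output (interactive). The paper's Algorithm~\ref{alg:non-interactive} pre-commits users in $S_2,S_3,S_4$ to three \emph{fixed} functions $R_2,R_3,R_4$ (indicators of two offset unions of sub-intervals, and $\indic{Z\ge1}$ for your boundary case) and the server merely chooses which inverse to evaluate after seeing $\hat I$ — this is what makes the protocol non-interactive, which Theorem~\ref{thm:rate_small_eps} claims even though Theorem~\ref{thm:binomial_ldp} as stated does not require it. If you replace the dyadic tiling by the quadratic partition and either accept interactivity or adopt the $R_2,R_3,R_4$ trick, your argument closes.
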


% The main idea is that if $p$ is known to be in an interval $I$ of length $O(\sqrt{ p(1-p)/\nspu})=std(Z_i/m)$, then we can choose an arbitrary $t\in I$ and send the privatized version of $\indic{Z_i/m\ge t}$. By inverting the binomial density function
% \[
% P_{\nspu, t}(p)\eqdef \probaDistrOf{Z\sim \binomial{\nspu}{p}}{Z/\nspu\ge t},
% \]
% $p$ can be estimated up to accuracy $\sqrt{1/mn\eps^2}$. Then we just need to design an algorithm that locates $p$ up to $std(Z_i/m)$. 

Let $Z_u\sim\binomial{\nspu}{p}$ be the number of 1's in user $u$.
Our algorithm is inspired by \cite[Section 2.1]{ACLST:21} and consists of two stages. In the first stage (\textbf{localization}), we estimate $p$ up to accuracy $O(\sqrt{ p(1-p)/\nspu})$, the standard deviation of the local empirical estimate $Z_i/\nspu$. Then in the second stage (\textbf{refinement}), we try to obtain a more accurate estimate by inverting a binomial density function. 

Similar to \cite{ACLST:21}, we divide the $[0, 1]$ interval into $\Theta(\sqrt{\nspu})$ sub-intervals. \newer{At a high level, the intervals are designed such that if $p \in I_i$, there exists $c$, such that
\[
    (p - c\sqrt{\frac{p(1-p)}{\nspu}}, p + c\sqrt{\frac{p(1-p)}{\nspu}}) \subset I_{i-1} \cup I_i \cup I_{i + 1}.
\]
This is important for the localization stage since by \cref{obs:concentration}, we know that the empirical estimate of $p$ will lie in an interval close to $p$. 
} 
Let $\CI$ be a 
constant and $r\eqdef \lfloor\sqrt{\frac{\nspu}{2\CI}}\rfloor$. We define a partition $\{I_i\}_{i \in [2\nin]}$. 
Let
$I_i\eqdef[l_{i-1}, l_{i}]$ for $1\leq i\leq r$, where
\[
  l_i\eqdef \min\left\{\frac{\CI i^2}{\nspu}, \frac{1}{2}\right\}, \quad 0\le i\le r.
\]
Furthermore $I_{2r+1-i}\eqdef[1-l_i, 1-l_{i-1}]$. 

\newer{Next we describe the algorithm, we divide users into two groups $S_1, S_2$ with equal size, which will be used for the localization stage and refinement stage respectively.}
%An important contribution is that we replaced the Gray-code based localization stage in \cite{ACLST:21} with one based on sparse distribution estimation~\cite{ACLST:21}. The former is not suitable for LDP constraints since the Gray code bits in \cite{ACLST:21} can be highly biased, and hence privatizing these bits yields unbearable error for localization. Moreover, the new localization scheme  removes a $\log \ns$ factor in the requirement for $\ns$. 
\newer{\paragraph{Localization stage.} 
In this stage, the server obtains a crude estimation of $p$ based on messages from $S_1$.

\begin{enumerate}
    \item \emph{Privatization scheme.} For $u \in S_1$, let $V_u$ be a $2r$-dimensional binary vector with $\forall i \in [2r], V_u(i) =  \indic{Z_u\in I_i}$, 
 which is a one-hot vector indicating the index of the interval that $Z_u$ falls in. Let $Y_u$ be obtained by flipping each coordinate of $V_u$ with probability $\beta := 1/(e^{\priv/2} + 1)$, \ie $\forall i \in [2r]$,
 \[
    Y_u(i) = \begin{cases}
        V_u(i) & \text{ with prob } 1 - \beta, \\
        1 - V_u(i) & \text{ with prob } \beta.
    \end{cases}
 \]
 \item \emph{Estimation scheme.} Here we obtain a confidence interval of $p$ using $Y_u$'s, whose index is given by
\[
\hat{i}=\arg\max_{i\in[2r]}\sum_{u\in S_1}Y_{u}(i).
\]
\end{enumerate}
}

\newer{\paragraph{Refinement stage.} In this stage, users in $S_2$ send messages based on $\hat{i}$ and the server obtains a refined estimate of $p$.
\begin{enumerate}
    \item \emph{Privatization scheme.} Let $t$ be the mid point of $I_{\hat{i}}$. Users in $S_2$ send a privatized version of $\indic{Z_u/\nspu>t}$, \ie
    \[
        Y_u = \begin{cases}
        \indic{Z_u/\nspu>t} & \text{ with prob } 1 - \beta, \\
        1 - \indic{Z_u/\nspu>t} & \text{ with prob } \beta.
        \end{cases}
    \]
    \item \emph{Estimation scheme.} Let $P_{\nspu, t}(p) \eqdef \bPr{Z_u/\nspu>t \mid Z_u \sim \binomial{n/2}{p}}$ and 
    \[
        \hat{P} \eqdef \frac{2}{\ns} \sum_{u \in S_2} Y_u,
    \]
    which is the empirical estimate of $P_{\nspu, t}(p)$. Return $\hat{p}=P_{\nspu, t}^{-1}(\hat{P})$.
\end{enumerate}
We defer the detailed analysis of the algorithm to~\cref{app:binomial}. In the localization stage, we show that $p  \in  I_{\hat{i}}\cup I_{\hat{i}-1}\cup I_{\hat{i}+1}$ with high probability. In the refinement stage, it is shown in~\cite{ACLST:21} that if the above holds, the gradient of $P_{\nspu, t}(p)=\bPr{Z_u/\nspu>t}$ with respect to $p$ is roughly $\Omega(\sqrt{\nspu})$. Hence, evaluating $\hat{p}=P_{\nspu, t}^{-1}(\hat{P})$ yields a squared error of $1/\nspu\ns\priv^2$ as desired. 
}
\subsection{General case $\ab>2$}

Using the algorithm for coin estimation, we can design an algorithm for $\ab>2$ using ideas from the 1-bit Hadamard Response algorithm~\cite{AS:19}.

Without loss of generality assume $\ab$ is a power of 2. Let $H_k$ be the Hadamard matrix defined as
\[
H_1=1, \;H_{2^l}=\begin{bmatrix}H_{2^{l-1}} & H_{2^{l-1}}\\
H_{2^{l-1}} & -H_{2^{l-1}}
\end{bmatrix},\forall l\ge 1.
\]
Let $T_i=\{j\in[\ab]:H_k(i, j)=1\}$ be the locations of 1's in the $i$th row of $H_\ab$. Users are divided into $k$ groups of  size $\ns/\ab$, each responsible for estimating one of $\p(T_i)$. By Theorem~\ref{thm:binomial_ldp}, we can obtain $\hp_T(i)$ such that 
\[
\expect{(\p(T_i)-\hp_T(i))^2}= O\left(\frac{k}{\nspu\ns\priv^2}\right).
\]
Let $\hp_T=(\hp_T(1), \ldots, \hp_T(\ab))$. We obtain $\hat{\p}$ with inverse Hadamard transform $\hp = H_k^{-1}(2\hp_T-\mathbf{1}_k)$.
Let $\p_T=(\p(T_1), \ldots, \p(T_\ab))$. Since $H_\ab^\top H_\ab=\ab I$, we have \[
\expect{\norm{\hp-\p}_{2}^2} =\frac{1}{\ab}\expect{\norm{\hp_T-\p_T}_{2}^2} = O\left(\frac{k}{\nspu\ns\priv^2}\right).
\] 
Applying Cauchy-Schwarz inequality, we can obtain the desired accuracy in Theorem~\ref{thm:rate_small_eps}.

\section{Algorithms for low privacy regime ($\priv>1 $)}
\label{sec:low-priv}
In this regime, the main challenge is to design algorithms that takes full advantage of both the increasing sample size $\nspu$ and extra privacy budget $\priv$. One may easily propose a simple extension of the algorithm for $\priv<1$: each user split the privacy budget into $\lfloor\priv\rfloor$ parts using the composition property of LDP (\cref{thm:composition}), each with a budget of 1 (the excess budget is omitted). Now each user can send information about $\lfloor\eps\rfloor$ different rows in $H_k$. Hence the effective sample size increases by a factor of $\eps$. Using Theorem~\ref{thm:rate_small_eps}, the guarantee of this algorithm is given by Corollary~\ref{cor:large_m}
\begin{corollary}
\label{cor:large_m}
For $\priv>1$, if $\ns>C\ab\log(\nspu)/\priv$ for some constant $C$, the simple extension outputs an estimate $\hp$ with
\[
\expect{\totalvardist{\hp}{\p}}=O\left(\sqrt{\frac{\ab^2}{\nspu\ns\priv}}\right).
\]
\end{corollary}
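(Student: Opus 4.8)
The plan is to show that the ``simple extension'' is, up to a constant loss in $\eps$, an instance of the $\ab>2$ Hadamard-response algorithm of \cref{sec:high-priv} run with unit privacy budget but with the effective number of users inflated from $\ns$ to $\Theta(\ns\eps)$. View each of the $\ns$ real users as $\lfloor\eps\rfloor$ ``virtual users'', each allotted a private budget of $1$ (the residual $\eps-\lfloor\eps\rfloor<1$ is discarded, which is free up to a factor of two since $\lfloor\eps\rfloor\ge\eps/2$ for $\eps\ge1$). If every virtual user emits a $1$-LDP message, then by basic composition (\cref{thm:composition}) the $\lfloor\eps\rfloor$ messages of a real user together form a $\lfloor\eps\rfloor\le\eps$-LDP view, so the protocol is $\eps$-LDP. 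We then assign the $\lfloor\eps\rfloor$ virtual copies of each real user to $\lfloor\eps\rfloor$ \emph{distinct} Hadamard rows (e.g.\ by round-robin, assuming $\lfloor\eps\rfloor\le\ab$; if $\eps>\ab$ we simply cap the budget at $\ab$, which only helps), producing $\ns\lfloor\eps\rfloor=\Theta(\ns\eps)$ virtual unit-budget users split into $\ab$ groups of size $\ns_i=\Theta(\ns\eps/\ab)$, each group estimating one inner product $\p(T_i)$ exactly as before.

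Next I would replay the analysis behind \cref{thm:rate_small_eps} verbatim with these parameters. For a fixed row $i$, the $\ns_i$ virtual users assigned to it produce messages about $\p(T_i)$ that are mutually independent (they are functions of disjoint blocks of original samples), so \cref{thm:binomial_ldp} --- applied with $\ns_i$ users, $\nspu$ samples each, and constant budget --- yields $\hp_T(i)$ with
\[
\expect{(\hp_T(i)-\p(T_i))^2}=O\Paren{\frac{1}{\nspu\ns_i}}=O\Paren{\frac{\ab}{\nspu\ns\eps}},
\]
whose regularity requirement $\ns_i=\Omega(\log\nspu)$ is exactly the hypothesis $\ns=\Omega(\ab\log(\nspu)/\eps)$ of the corollary. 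Summing over $i\in[\ab]$ and using $H_\ab^\top H_\ab=\ab I$, the inverse-Hadamard estimate $\hp=H_\ab^{-1}(2\hp_T-\mathbf{1}_\ab)$ obeys $\expect{\norm{\hp-\p}_2^2}=\frac{1}{\ab}\expect{\norm{\hp_T-\p_T}_2^2}=O(\ab/(\nspu\ns\eps))$; Cauchy--Schwarz ($\totalvardist{\hp}{\p}\le\frac{1}{2}\sqrt{\ab}\,\norm{\hp-\p}_2$) and Jensen's inequality then give $\expect{\totalvardist{\hp}{\p}}=O(\sqrt{\ab^2/(\nspu\ns\eps)})$, as claimed.

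The one step that needs genuine care --- the main obstacle --- is the combinatorial bookkeeping of the virtual-user-to-row assignment: I must certify simultaneously that (i) each of the $\ab$ rows receives $\Theta(\ns\eps/\ab)$ virtual users, and (ii) no real user ever spends two of its unit-budget messages on the same row (otherwise the promised $\eps$-fold gain in effective sample size is not realized). A balanced round-robin schedule achieves both whenever $\lfloor\eps\rfloor\le\ab$, and one should record the $\eps>\ab$ boundary case separately. It is worth noting explicitly that the \emph{cross-row} dependence of the counts $Z_u(T_i)$ (all computed from a single user's $\nspu$ samples) is irrelevant to the bound: the $\ell_2$ risk of $\hp_T$ decomposes coordinatewise by linearity of expectation, and the orthogonality of $H_\ab$ converts it to the $\ell_2$ risk of $\hp$; only the \emph{within-row} independence exploited above is used.
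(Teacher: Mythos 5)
Your proposal is correct and follows exactly the route the paper has in mind: the paper itself gives no separate proof of \cref{cor:large_m} beyond the informal paragraph preceding it (split the budget into $\lfloor\eps\rfloor$ unit-budget pieces via basic composition, have each user contribute to $\lfloor\eps\rfloor$ distinct Hadamard rows, thereby inflating the effective per-row sample count by a factor of $\Theta(\eps)$, and invoke \cref{thm:rate_small_eps}). You have filled in the bookkeeping the paper leaves implicit --- the round-robin assignment, the within-row independence, the regularity threshold $\ns_i=\Omega(\log\nspu)$, and the $\eps>\ab$ cap --- and each of these checks out. One small imprecision: the parenthetical that within-row virtual users ``are functions of disjoint blocks of original samples'' is not literally right (each virtual user computes its statistic from the \emph{entire} $\nspu$-sample block of its parent user), but the conclusion you actually need, and which you correctly state at the end, is that round-robin guarantees at most one virtual copy per real user per row, so within a fixed row the contributing messages come from distinct real users and are therefore independent. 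That is the correct reason for independence; you may simply drop the ``disjoint blocks'' phrasing.
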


Hence we can easily achieve a risk with $1/\sqrt{\priv}$ decay. Can we acheive better rates? It turns out that when $\ns > (\ab/\eps)^2$, for large $\nspu$ ($\nspu>\ab$) the simple extension achieves the following optimal risk.
\begin{theorem}
\label{thm:lb-large-m}
For $\ns>(k/\eps)^2$, $\nspu\ge \ab$, and $\eps>1$, the minimax error rate satisfies
\[
\risk(\eps, \ab, \ns, \nspu)=
\Omega\Paren{\sqrt{\frac{\ab}{\nspu\ns}}\lor \sqrt{\frac{k^2}{\nspu\ns\priv}}}.
\]
\end{theorem}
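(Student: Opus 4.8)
We prove the two claimed lower bounds separately; the stated rate is their maximum. The first term, $\sqrt{\ab/(\nspu\ns)}$, is simply the non-private minimax rate: a trusted server holding all raw samples does at least as well as any $\eps$-LDP protocol, so $\risk(\eps,\ab,\ns,\nspu)\ge\risk(\infty,\ab,\ns,\nspu)=\Theta\Paren{\sqrt{\ab/(\ns\nspu)}}$ by~\eqref{eqn:stat_rate}. So the real content is the privacy term $\sqrt{\ab^2/(\nspu\ns\eps)}$, which I would establish by Assouad's method applied to a ``checkerboard'' perturbation family. Assume $\ab$ is even and, for $v\in\{\pm1\}^{\ab/2}$, let $\p_v(2j-1)=(1+\tau v_j)/\ab$ and $\p_v(2j)=(1-\tau v_j)/\ab$, with $\tau$ to be chosen. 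Then $\totalvardist{\p_v}{\p_{v'}}=\tfrac{2\tau}{\ab}\ham{v}{v'}$, so Assouad's lemma gives $\risk\gtrsim\tau\,(1-\bar d)$, where $\bar d=\tfrac2\ab\sum_{j=1}^{\ab/2}\totalvardist{M_{+j}}{M_{-j}}$ and $M_{\pm j}$ is the law of the transcript $\msgs$ when $V_j$ is frozen to $\pm1$ (and $V$ is otherwise uniform). It therefore suffices to take $\tau$ as large as possible subject to $\bar d\le 1/2$.

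By Cauchy--Schwarz and Pinsker, $\bar d^{\,2}\le\tfrac1\ab\sum_j D_{\mathrm{KL}}(M_{+j}\|M_{-j})$. Freezing the remaining coordinates, using joint convexity of KL, then the chain rule over users (each $\samp{i}$ is independent of the previous messages and the common seed), and then data processing through the channel $W_i$, we obtain
\[
\sum_{j=1}^{\ab/2}D_{\mathrm{KL}}(M_{+j}\|M_{-j})\ \le\ \sum_{i=1}^{\ns}\ \max_{W\in\cW_\eps}\ \sum_{j=1}^{\ab/2}D_{\mathrm{KL}}\!\Paren{W\q_{+j}^{\otimes\nspu}\,\big\|\,W\q_{-j}^{\otimes\nspu}},
\]
where $\q_{\pm j}\in\Delta_\ab$ is the base distribution (nearly uniform --- every symbol of mass $\Theta(1/\ab)$) with the masses of symbols $2j-1,2j$ set to $\tfrac{1\pm\tau}{\ab},\tfrac{1\mp\tau}{\ab}$, and $W\mu$ denotes the output law of $W$ on input $\mu$. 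Bounding the inner sum by data processing alone gives $\sum_j D_{\mathrm{KL}}(\q_{+j}^{\otimes\nspu}\|\q_{-j}^{\otimes\nspu})=\Theta(\nspu\tau^2)$, which merely reproduces the statistical term; the crux is to do strictly better when $\eps\ll\ab$.

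\emph{Key inequality:} when $\nspu\ge\ab$ (and $\nspu\tau^2\lesssim\eps$),
\[
\max_{W\in\cW_\eps}\ \sum_{j=1}^{\ab/2}D_{\mathrm{KL}}\!\Paren{W\q_{+j}^{\otimes\nspu}\,\big\|\,W\q_{-j}^{\otimes\nspu}}\ \lesssim\ \frac{\nspu\,\eps\,\tau^2}{\ab}.
\]
This is exactly where $\nspu\ge\ab$ is used. With $\nspu\ge\ab$ samples every symbol is observed $\Theta(\nspu/\ab)\ge1$ times, so each user's sufficient statistic --- the empirical frequency vector $\widehat{\p}$ --- is ``dense'': by a local central limit argument it behaves like a Gaussian vector with mean $\p$ and covariance $\asymp(\ab\nspu)^{-1}$ (on $\mathbf 1^{\perp}$). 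A Gaussian draw has no sparse combinatorial structure, so the ``$e^{\eps}$-amplification'' available to $\eps$-LDP schemes for a single categorical sample (the mechanism behind Hadamard response, responsible for the $e^{\eps}$ in~\eqref{eqn:rate-one-sample}) disappears; the discriminating power of one $\eps$-LDP message is then governed only by the fact that such a channel leaks at most $\eps$ nats, i.e.\ $D_{\mathrm{KL}}(W(\cdot\mid\x)\|WQ)\le\eps$ for every input and prior $Q$. Concretely this is the tight contraction for $\ab$-dimensional Gaussian-mean estimation under $\eps$-LDP --- a message leaks at most $O(\eps)$ times the per-coordinate signal-to-noise ratio, yielding the $\Theta(\ab^2\sigma^2/(\ns\eps))$ squared-$\ell_2$ rate for $\eps>1$ --- transferred through the CLT with $\sigma^2\asymp(\ab\nspu)^{-1}$ and per-coordinate perturbation $2\tau/\ab$ (hence per-coordinate SNR $\Theta(\nspu\tau^2/\ab)$). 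One may also try a direct proof from the $\eps$-LDP constraint, expanding $W\q_{\pm j}^{\otimes\nspu}$ to first order in $\tau$ and applying Bessel's inequality against the $\ab/2$ orthogonal ``pair-imbalance'' directions $g_j(\x)=\#\{t:x_t=2j-1\}-\#\{t:x_t=2j\}$ (with $\|g_j\|_2^2=2\nspu/\ab$); the price of that route is that $\nspu\ge\ab$ must still be exploited to replace the easy factor $e^{\eps}$ by $\eps$.

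Putting it together, $\bar d^{\,2}\lesssim\ns\nspu\eps\tau^2/\ab^2$, so $\bar d\le1/2$ once $\tau^2\le c\,\ab^2/(\ns\nspu\eps)$ for a small constant $c$. The hypotheses $\eps>1$ and $\ns>(\ab/\eps)^2$ ensure this $\tau$ satisfies $\tau\le\tfrac12$, so the family is valid (one may also assume $\eps<\ab$, as otherwise the privacy term is dominated by the statistical one), and $\nspu\tau^2\lesssim\eps$, so the regularity hypothesis of the key inequality holds and the CLT error is of lower order than the signal $\tau$; Assouad then delivers $\risk\gtrsim\tau=\Omega\Paren{\sqrt{\ab^2/(\ns\nspu\eps)}}$. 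The main obstacle is, unsurprisingly, the key inequality: one must rigorously justify that an $\eps$-LDP scheme fed the near-Gaussian frequency vectors cannot profit from their discrepancy with an exact Gaussian (a coupling/simulation argument that, under $\nspu\ge\ab$ and $\ns>(\ab/\eps)^2$, costs only $1+o(1)$ factors), equivalently that the Gaussian-LDP information contraction holds with the correct factor $\eps$ and not $e^{\eps}$. This $\eps$-versus-$e^{\eps}$ dichotomy --- ``dense observations admit no sparse amplification'' --- is precisely why the error in this regime scales as $1/\sqrt{\eps}$ rather than $1/\sqrt{e^{\eps}}$.
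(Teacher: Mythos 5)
Your setup---the Paninski/checkerboard family, Assouad's lemma, reduction to a per-user KL contraction bound---is sound and mirrors what the paper does in spirit. But the heart of your argument, the ``key inequality''
\[
\max_{W\in\cW_\eps}\ \sum_{j=1}^{\ab/2}D_{\mathrm{KL}}\!\Paren{W\q_{+j}^{\otimes\nspu}\,\big\|\,W\q_{-j}^{\otimes\nspu}}\ \lesssim\ \frac{\nspu\,\eps\,\tau^2}{\ab},
\]
is exactly what needs to be proved, and you supply only a heuristic for it. The CLT/Gaussian-simulation argument you sketch has two real holes. First, an $\eps$-LDP channel on the multinomial input space does not automatically induce an $\eps$-LDP channel on a Gaussian surrogate: the privacy constraint is a worst-case pointwise ratio bound on the discrete inputs, and a coupling that matches the two models in total variation does not transfer that pointwise constraint. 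Second, even granting a clean Gaussian-LDP contraction at rate $\eps$ rather than $e^\eps$, you need the CLT error to be negligible \emph{inside} the information-theoretic quantity you are bounding, not merely in TV distance on the observation; a naive coupling gives an additive TV-level error, which is not obviously subdominant once you pass through a possibly very informative channel. You flag this yourself as ``the main obstacle,'' which is the right instinct --- it is a gap, not a technicality.

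The paper takes a different and self-contained route that sidesteps the Gaussian approximation entirely. It Poissonizes the multinomial so the coordinate counts become independent Poissons, and then uses the subgaussian information-contraction machinery of \cite{acharya2020unified, ACLST:21}: the perturbation score $\phi_{z,i}$ is decomposed into a subgaussian main term plus tail corrections, and the dominant contribution is bounded by $\max_W I(\ppoi_z;W)$, which for an $\eps$-LDP channel is at most $O(\eps)$ nats. This is the rigorous version of your ``a channel leaks at most $\eps$ nats'' intuition, executed directly on the Poissonized discrete model. Because Poissonization changes the number of samples a user holds, the paper also has to confront a privacy subtlety you would not encounter in your framing: a user must reveal whether it drew at least $\nspu$ samples, and that bit is not $\eps$-LDP. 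The paper resolves this by augmenting the channel family to ``$\eps$-LDP $+$ 1 clean bit'' ($\cW_{\eps,1}$), proving a reduction lemma (\cref{lem:reduction}) from the multinomial problem to the Poissonized one over $\cW_{\eps,1}$, and then showing $\max_{W\in\cW_{\eps,1}}I(\ppoi_z;W)\le\eps\log_2 e+1$ (\cref{lem:mutual_info}), so the extra bit costs only an additive $1$ and does not change the asymptotics. If you want to salvage your approach, the Bessel-inequality route you mention in passing is closer to how one would actually fill the gap, but you would still need to replace the Gaussian story with a direct argument on the multinomial (or Poissonized) likelihood ratios; at that point you have essentially reconstructed the paper's proof.
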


For small $\nspu$ and medium $\nspu$, we can design better algorithms, which we will elaborate  in this section.

\subsection{Small $\nspu$ ( $\nspu\le \ab/e^\priv$)}
For small $\nspu$, we are able to obtain the same $\Theta(1/\sqrt{\nspu})$ decrease in the rate as in the high privacy case. Moreover, the error rate decays exponentially with $\priv$, similar to the error rate for $\nspu=1$.
\begin{theorem}
\label{thm:small_m}
When $\priv > 1$ and $\nspu < \ab/e^\eps$, if $\ns > C \nspu \log(\nspu)$, we have
\[
\risk(\eps, \ab, \ns, \nspu) = \Theta\Paren{ \sqrt{\frac{\ab^2}{\nspu\ns e^\eps}}}.
\]
The bound is achieved by a non-interactive protocol.
\end{theorem}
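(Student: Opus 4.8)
The plan is to prove the upper and lower bounds separately. The upper bound is constructive and exploits \cref{obs:increased-prob}: once $[\ab]$ is cut into $\nspu$ blocks, a ``typical'' block is hit by a constant fraction of the users assigned to it, so the number of informative messages about each block grows by a factor $\Theta(\nspu)$; within a block we invoke a one-sample $\eps$-LDP estimator over the reduced alphabet of size $\ab/\nspu$. The lower bound adapts the standard $\eps$-LDP minimax argument that yields the $\sqrt{\ab^2/(\ns e^\eps)}$ rate for $\nspu=1$, taking care that the reduction to the single-sample regime does not implicitly perform a non-private operation.

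\emph{Upper bound.} Assume $\nspu\mid\ab$. Using the public coin, split $[\ab]$ into blocks $B_1,\dots,B_\nspu$ of size $\ab/\nspu>e^\eps$ and split the $\ns$ users into two halves. The first half is dedicated to estimating the $\nspu$ block masses $(\p(B_1),\dots,\p(B_\nspu))$ --- this is an instance of our problem with alphabet size and samples-per-user both equal to $\nspu$, so \cref{cor:large_m} (whose regularity condition $\ns>C\nspu\log\nspu$ is exactly our hypothesis) produces estimates $\hat\p(B_j)$ with total error $O(\sqrt{\nspu/(\ns\eps)})$. The second half is further split into $\nspu$ groups, group $j$ responsible for the conditional distribution $q_j(x):=\p(x)/\p(B_j)$ on $B_j$: a user in group $j$, if at least one of its $\nspu$ samples lands in $B_j$, picks one such sample uniformly and feeds its identity into a one-sample $\eps$-LDP estimator on $B_j\cup\{\bot\}$ (Hadamard Response \cite{AS:19}, valid for every $\eps$ and alphabet), otherwise it feeds $\bot$; the induced source distribution has mass $r_j(x)=(1-(1-\p(B_j))^\nspu)q_j(x)$ on $B_j$, from which the server forms $\hat q_j$ by restricting to $B_j$ and renormalizing, and outputs $\hp(x)=\hat q_j(x)\,\hat\p(B_j)$ for $x\in B_j$ (finally projecting onto the simplex). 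This protocol is non-interactive and $\eps$-LDP per user. For the error bound, the one-sample guarantee over an alphabet of size $\ab/\nspu$ with $\Theta(\ns/\nspu)$ users in the $\eps>1$ regime gives $\sum_{x\in B_j}\expect{(\hat r_j(x)-r_j(x))^2}=O\Paren{\frac{\ab/\nspu}{(\ns/\nspu)e^\eps}}=O\Paren{\frac{\ab}{\ns e^\eps}}$ uniformly in $j$. The triangle inequality splits $\totalvardist{\hp}{\p}$ into the block-mass term $\sum_j|\hat\p(B_j)-\p(B_j)|=O(\sqrt{\nspu/(\ns\eps)})$ and the conditional term $\sum_j\p(B_j)\,\|\hat q_j-q_j\|_1$; a case analysis on $\p(B_j)$ --- heavy ($P_\nspu(B_j)=\Theta(1)$), light ($P_\nspu(B_j)=\Theta(\nspu\p(B_j))$), and very light (where $\|\hat q_j-q_j\|_1\le2$ is used) --- together with block-by-block Cauchy--Schwarz with factor $\sqrt{\ab/\nspu}$, bounds the conditional term by $O(\sqrt{\ab^2/(\nspu\ns e^\eps)})$ using only $\nspu<\ab$. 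Since $\nspu<\ab/e^\eps$ and $\eps>1$, the block-mass term is also $O(\sqrt{\ab^2/(\nspu\ns e^\eps)})$, yielding the claimed rate. (Alternatively, following the paper's unified viewpoint, the per-block and heavy-block steps can be organized as a private sparse distribution estimation problem and handled via \cite{pmlr-v132-acharya21b}.)

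\emph{Lower bound.} We use Assouad's method with the Paninski-type family $\{\p_z\}_{z\in\{\pm1\}^{\ab/2}}$, $\p_z(2i-1)=\frac1\ab(1+z_i\delta)$, $\p_z(2i)=\frac1\ab(1-z_i\delta)$, for which $\totalvardist{\p_z}{\p_{z'}}=\frac\delta\ab\,\ham{z}{z'}$, so it suffices to show that recovering a constant fraction of the $z_i$'s is hard. Fixing a coordinate $i$, a user's $\nspu$ samples contain $\binomial{\nspu}{2/\ab}$ samples in $\{2i-1,2i\}$, so without privacy the per-user information about $z_i$ scales like $(\nspu/\ab)\delta^2$; the information-contraction / $\chi^2$ bounds for $\eps$-LDP channels (the same device that produces the $\ab^2$ rather than $\ab$ dependence in the $\nspu=1$ rate \cite{DJW:13:FOCS,YeB17}) upgrade this to $O((\nspu/\ab)e^\eps\delta^2)$ per user \emph{even after summing over all $\ab/2$ coordinates}, since an $\eps$-LDP message is informative about only $O(e^\eps)$ of the $\ab$ symbols at once. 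Hence over $\ns$ users the coordinates are individually $\Omega(1)$-hard whenever $\delta^2=O\Paren{\ab^2/(\nspu\ns e^\eps)}$, giving $\expect{\totalvardist{\hp}{\p}}=\Omega(\delta)=\Omega\Paren{\sqrt{\ab^2/(\nspu\ns e^\eps)}}$.

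\emph{Main obstacle.} The delicate step is the lower bound's treatment of the $\nspu$ samples per user: the natural Poissonization argument of \cite{ACLST:21}, in which each user first announces whether it received ``enough'' relevant samples, is not $\eps$-LDP. We must instead condition only on the public coin and bound the per-user $\chi^2$ contribution directly, and the crux is to verify that this still yields the clean $\nspu e^\eps/\ab$ scaling --- tight against the upper bound. On the construction side, the analogous nuisance is the accounting for heavier blocks, where the within-block conditional estimate and the separately obtained block-mass estimate have to be controlled simultaneously; this is precisely where the assumption $\nspu<\ab/e^\eps$ enters.
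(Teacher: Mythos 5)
Your proposal is correct and tracks the paper's proof essentially step for step: the same partition of $[\ab]$ into $\nspu$ blocks, a block-mass estimate via the $\eps<1$ machinery, a per-block one-sample Hadamard Response over $B_j\cup\{\bot\}$, and a decomposition of the TV error with the $\p(B_j)/(\nspu\p(B_j)\land 1)$ weighting (the paper packages your heavy/light/very-light case analysis as Lemma~\ref{lem:decompose}); the lower bound likewise avoids Poissonization by computing the per-user $\chi^2$ contraction for the multinomial directly and combining with the $e^\eps$ LDP channel bound. The only cosmetic differences are that you split users into disjoint halves for the two subtasks (using full budget $\eps$ in each) whereas the paper has each user do both with a $\priv_0$/$(\eps-\priv_0)$ budget split; both are valid, non-interactive, and yield the same rate.
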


We focus on the upper bound part in this seciton and discuss the lower bound proof in \cref{sec:lower-bound}. At first glance, it may seem overly ambitious to achieve both exponential decay in $\priv$ and $1/\sqrt{\nspu}$ improvement in $\nspu$. Nevertheless, we accomplish this goal by taking advantage of both Observation~\ref{obs:concentration} and~\ref{obs:increased-prob}, and using the algorithm for $\nspu=1$ which enjoys exponential dependence on $\priv$ as a subroutine. Details of the algorithm are described as follows,

\begin{enumerate}
    \item Let $\priv_0=0.5$\footnote{We choose $\priv_0 = 0.5$ for simplicity. Any constant $\priv_0 < 0.5$ will work without changing the bounds up to constant.}. Divide the domain $[\ab]$ into $\nspu$ blocks $B_1, \ldots, B_\nspu$, each with size $
    \ab/\nspu$. 
    \item Each user uses $\priv_0=0.5$ to estimate the block distribution $\p_B\eqdef [\p(B_1), \ldots, \p(B_\nspu)]$ with the algorithm for $\priv\le 1$ in Section~\ref{sec:high-priv}. Denote the estimate as $\hp_B=[\hp_B(1), \ldots, \hp_B(\nspu)]$.
    \item Divide all users into $\nspu$ groups. The $j$th group tries to estimate $\bar{\p}_j:=\p(\cdot |B_j)$,
    the distribution conditioned on a sample is in $B_j$ (treated as uniform if $\p(B_j)=0$). Note that for $x \in B_j$,
    $
        \bar{\p}_j(x) = \frac{\p(x)}{\p(B_j)}.
    $
    
    To do this, each user in the $j$th group considers the distribution $\tilde{{\p}}_j$ over $B_j \cup \{\bot\}$ where
    \[
        \tp_j(\bot) := \bP{X^\nspu \sim \p}{ \forall x \in B_j, x \notin X^\nspu} = \Paren{1 - \p(B_j)}^\nspu,
    \]
    and for $x \in B_j$, $\tp_j(x)$ is the probability that $x$ is the first symbol in $B_j$ that appears in $X^\nspu$ . It can be obtained that
    \[
        \tp_j(x) = \bp_j(x) \Paren{1 - \tp(\bot)}.
    \]
    A user can simulate a sample from $\tp_j$ by getting $\bot$ if $B_j\cap X^\nspu=\varnothing$ and getting the first sample in $X^m \cap B_j$ if it is not empty. 
    Each user then sends a message using Hadamard Response~\citep{ASZ:18} for $(\eps - \eps_0)$-LDP.
    
    The server can then get an estimate $\hat{\tp}_j$ for $\tp_j$ using the messages above. Using $\hat{\tp}_j$, an estimate $\hp_j$ for $\bp_j$ can be obtained by  $\forall x \in B_j$
    \[
        \hp_j(x) =\frac{\hat{\tp}_j(x)}{1 - \hat{\tp}_j(\bot)},
    \]
    or $\nspu / \ab$ if $1 - \hat{\tp}_j(\bot) = 0$.
    \item To obtain an estimate $\hp$ for the underlying distribution, for each $x\in B_j$, 
    \[
    \hat{\p}(x)=\hat{\p}_B(j)\cdot \hat{\p}_j(x).
    \]
\end{enumerate}

To derive the guarantee for the algorithm, we need to relate the estimation errors for $\p, \tp_j$, and $\bp_j$. 
\newerest{
\begin{lemma}
\label{lem:decompose}
The estimation errors can be decomposed as
\begin{align}
    \expect{\totalvardist{\hp}{\p}}&\le\sum_{j \in [\nspu]} \frac{\p(B_j)}{(\nspu\p(B_j))\land 1}\expect{ \totalvardist{\hat{\tp}_j}{\tp_j} } \nonumber\\
    &\quad \quad  \quad \quad+\expect{\totalvardist{\hat{\p}_B}{\p_B}} \label{equ:decompose}
\end{align}
\end{lemma}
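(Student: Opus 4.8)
The plan is to decompose the total variation distance $\totalvardist{\hp}{\p}$ into contributions from each block $B_j$ and then handle each one using the fact that, within block $j$, we have $\hp(x) = \hp_B(j)\hp_j(x)$ while $\p(x) = \p(B_j)\bp_j(x)$. First I would write
\[
\totalvardist{\hp}{\p} = \frac12 \sum_{j\in[\nspu]} \sum_{x\in B_j} \left| \hp_B(j)\hp_j(x) - \p(B_j)\bp_j(x) \right|,
\]
and insert and subtract the cross term $\p(B_j)\hp_j(x)$ (or $\hp_B(j)\bp_j(x)$) inside each absolute value. By the triangle inequality this splits into a term of the form $\sum_j |\hp_B(j) - \p(B_j)| \cdot \frac12\sum_{x\in B_j}\hp_j(x)$, which is at most $\sum_j |\hp_B(j)-\p(B_j)| = 2\totalvardist{\hp_B}{\p_B}$ since $\hp_j$ is a distribution on $B_j$, plus a term $\sum_j \p(B_j)\cdot \totalvardist{\hp_j}{\bp_j}$ (using $\sum_{x\in B_j}|\hp_j(x)-\bp_j(x)| = 2\totalvardist{\hp_j}{\bp_j}$). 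After taking expectations, the constant factors should be absorbed so the $\hp_B$ term becomes $\expect{\totalvardist{\hp_B}{\p_B}}$ as in~\eqref{equ:decompose}; one should double-check whether a factor of $2$ survives, but the paper's statement suggests the clean form, so I would track constants carefully and possibly use the other cross-term split to avoid the extra factor.

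The second and more delicate step is to bound $\expect{\totalvardist{\hp_j}{\bp_j}}$ in terms of $\expect{\totalvardist{\hat{\tp}_j}{\tp_j}}$, producing the amplification factor $\frac{\p(B_j)}{(\nspu\p(B_j))\wedge 1}$ that multiplies the per-block error. The key relation is $\tp_j(x) = \bp_j(x)(1-\tp_j(\bot))$ for $x\in B_j$ and $1-\tp_j(\bot) = 1-(1-\p(B_j))^\nspu = P_\nspu(B_j)$. Since $\hp_j(x) = \hat{\tp}_j(x)/(1-\hat{\tp}_j(\bot))$, I would argue that renormalizing a good estimate $\hat{\tp}_j$ of $\tp_j$ (supported on $B_j\cup\{\bot\}$) to a distribution on $B_j$ can only lose a bounded amount: a standard fact is that if $\hat{\tp}_j$ is $\alpha$-close to $\tp_j$ in TV, then the conditional distributions are $O(\alpha / (1-\tp_j(\bot)))$-close, i.e. $\totalvardist{\hp_j}{\bp_j} \le \frac{2}{1-\tp_j(\bot)}\totalvardist{\hat{\tp}_j}{\tp_j}$ or something of that flavor. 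Combined with $\p(B_j)\cdot \frac{1}{1-\tp_j(\bot)} = \frac{\p(B_j)}{P_\nspu(B_j)} = \Theta\left(\frac{\p(B_j)}{(\nspu\p(B_j))\wedge 1}\right)$ using Observation~\ref{obs:increased-prob} (the bound $P_\nspu(B_j) = 1-(1-\p(B_j))^\nspu \ge \frac12((\nspu\p(B_j))\wedge 1)$ up to constants), the per-block coefficient $\frac{\p(B_j)}{(\nspu\p(B_j))\wedge 1}$ emerges.

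I would then assemble: after the triangle-inequality decomposition, the block-$j$ contribution to $\expect{\totalvardist{\hp}{\p}}$ is $\p(B_j)\expect{\totalvardist{\hp_j}{\bp_j}}$, and applying the conditioning bound turns this into $\frac{\p(B_j)}{(\nspu\p(B_j))\wedge 1}\expect{\totalvardist{\hat{\tp}_j}{\tp_j}}$ up to constants, while the second term is $\expect{\totalvardist{\hp_B}{\p_B}}$; summing over $j$ gives~\eqref{equ:decompose}. I expect the main obstacle to be the renormalization bound: one must handle the edge case $1-\hat{\tp}_j(\bot)=0$ (where $\hp_j$ is set to uniform $\nspu/\ab$) and verify that this event, or more generally the case where $\hat{\tp}_j(\bot)$ is close to $1$ while $\tp_j(\bot)$ is not, does not blow up the expectation — this should follow because such a large discrepancy already forces $\totalvardist{\hat{\tp}_j}{\tp_j}$ to be large, so the contribution is still controlled, but making this rigorous (rather than hand-wavy) is where the real work lies. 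A clean way is to prove the deterministic inequality $\totalvardist{\hp_j}{\bp_j} \le \frac{2\,\totalvardist{\hat{\tp}_j}{\tp_j}}{1-\tp_j(\bot)}$ valid for every realization (including the degenerate one), then take expectations.
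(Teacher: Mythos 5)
Your proposal follows essentially the same route as the paper: a triangle-inequality decomposition introducing the cross term $\p(B_j)\hp_j(x)$ (the paper cites this step from Lemma 3.2 of \cite{ACLST:21}, which is exactly your computation), followed by the renormalization bound $\totalvardist{\hp_j}{\bp_j}\le \totalvardist{\hat{\tp}_j}{\tp_j}/(1-\tp_j(\bot))$ (the paper's Claim, proved by the same add-and-subtract trick you describe), and finally the observation that $1-\tp_j(\bot)=\Theta\bigl((\nspu\p(B_j))\wedge 1\bigr)$. The edge case $1-\hat{\tp}_j(\bot)=0$ that worries you is in fact harmless — there $\hat{\tp}_j$ is the point mass at $\bot$, so $\totalvardist{\hat{\tp}_j}{\tp_j}=1-\tp_j(\bot)$ and the claimed bound reads $\totalvardist{\hp_j}{\bp_j}\le 1$, which is trivially true — and the paper skips this check as well.
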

From~\cref{thm:rate_small_eps}, when $\ns/\nspu>C\log(\nspu)$,
\begin{equation}
    \expect{\totalvardist{\hp_B}{\p_B}}= O\Paren{ \sqrt{\frac{\nspu^2}{mn}}} = O\Paren{ \sqrt{\frac{\ab^2}{\nspu \ns e^\priv}}}.
    \label{equ:block_decomposition}
\end{equation}
The second inequality is due to $\nspu\le \ab^{2}/(\nspu e^{\priv})$ whenever $\nspu \le \ab/e^{\priv/2}$. By the guarantee of the Hadamard Response algorithm~\cite[Corollary 8]{ASZ:18}, 
$$\expect{\totalvardist{\hat{\tp}_j}{\tp_j}}=O\Paren{\sqrt{\frac{(\ab/\nspu)^2}{(\ns/\nspu)e^\priv}}}=O\Paren{\sqrt{\frac{\ab^2}{\nspu\ns e^\priv}}}.
$$
Plugging in ~\eqref{equ:decompose} yields the desired bound. Detailed proofs of~\cref{lem:decompose} and~\cref{thm:small_m} are in~\cref{app:small_m}. }

\subsection{Medium $\nspu$ ($\ab/e^\priv<\nspu< \ab$) }
In this regime, we discover that increasing $\nspu$ barely helps with improving the error rates in certain parameter regimes.
\begin{theorem}
\label{thm:mid_m}
For $\priv > 1$ and $\ab/e^\eps<\nspu < \ab$, if $\ns > C \nspu \log(\nspu)/\priv$ for some constant $C$, we have
\[
\risk(\eps, \ab, \ns, \nspu) = O\Paren{ \sqrt{\frac{\ab}{\nspu\ns}} \lor \sqrt{\frac{\ab\ln(\ab/\nspu+1)}{\ns \priv}}}.
\]
The bound is achieved by a non-interactive protocol.
\end{theorem}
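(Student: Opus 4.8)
\textbf{Proof plan for \cref{thm:mid_m}.}
My plan is to reuse the block-decomposition template of the small-$\nspu$ algorithm from \cref{thm:small_m} — divide $[\ab]$ into $\nspu$ blocks $B_1,\dots,B_\nspu$ of size $s:=\ab/\nspu$ (WLOG $\nspu\mid\ab$, so $s\ge 2$) — but to cash in the surplus privacy budget: since $\nspu>\ab/e^\priv$ forces $s<e^\priv$, a single within-block report costs only $\approx\ln s$ budget, so a user can afford to report about $L:=\max\bigl\{1,\min\{\nspu,\lfloor\priv/\ln(s+1)\rfloor\}\bigr\}$ blocks at once via composition. Concretely, split the $\ns$ users into two halves. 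The first half estimates the block distribution $\p_B=(\p(B_1),\dots,\p(B_\nspu))$: a user's $\nspu$ samples are $\nspu$ i.i.d.\ draws from $\p_B$, so the $\priv$-LDP protocol of \cref{cor:large_m} (with alphabet size $\nspu$) gives $\expect{\totalvardist{\hp_B}{\p_B}}=O\bigl(\sqrt{\nspu/(\ns\priv)}\bigr)$ under $\ns=\Omega(\nspu\log\nspu/\priv)$. The second half handles the within-block laws: each user $u$ draws a uniformly random set $R_u\subseteq[\nspu]$ with $|R_u|=L$ (independently of its data), and for each $j\in R_u$ forms $W_{u,j}:=$ the first of its $\nspu$ samples lying in $B_j$ (or $\bot$ if none does) and transmits it through an $(\priv/L)$-LDP Hadamard/randomized-response channel on $B_j\cup\{\bot\}$. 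Since $R_u$ uses exactly $L$ channels each $(\priv/L)$-LDP, \cref{thm:composition} makes $u$'s transcript $\priv$-LDP, and the protocol is non-interactive. From the users with $j\in R_u$ the server estimates $\tilde\p_j$, the law of $W_{u,j}$, which is exactly the distribution $\tp_j$ over $B_j\cup\{\bot\}$ of \cref{sec:low-priv} ($\tilde\p_j(\bot)=(1-\p(B_j))^\nspu$, $\tilde\p_j(x)=\bar\p_j(x)(1-\tilde\p_j(\bot))$), producing $\hp_j(x):=\hat{\tilde\p}_j(x)/(1-\hat{\tilde\p}_j(\bot))$, and outputs $\hp(x):=\hp_B(j)\,\hp_j(x)$ for $x\in B_j$.

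For the analysis I invoke \cref{lem:decompose} verbatim (it only uses the combination rule, not how $\hat{\tilde\p}_j$ is obtained): $\expect{\totalvardist{\hp}{\p}}\le\sum_j\frac{\p(B_j)}{(\nspu\p(B_j))\wedge1}\expect{\totalvardist{\hat{\tilde\p}_j}{\tilde\p_j}}+\expect{\totalvardist{\hp_B}{\p_B}}$. The decisive gain is that, because $R_u$ is a uniform size-$L$ set independent of the samples, every block $j$ receives $N_j$ reports with $\expect{N_j}=\Theta(\ns L/\nspu)$ — the same for all $j$ — each an i.i.d.\ $\tilde\p_j$-draw privatized with budget $\priv/L\ge\ln s$. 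In every branch of the definition of $L$ (the cases $L=\lfloor\priv/\ln(s+1)\rfloor$, $L=\nspu$, and the forced $L=1$ using $\priv>\ln s$) one checks $e^{\priv/L}\ge s$, so the privacy penalty inside a block is negligible and the Hadamard-response bound of \cite{ASZ:18} gives $\expect{\totalvardist{\hat{\tilde\p}_j}{\tilde\p_j}\mid N_j}=O(\sqrt{s/N_j})$, hence $\expect{\totalvardist{\hat{\tilde\p}_j}{\tilde\p_j}}=O(\sqrt{s\nspu/(\ns L)})$ after averaging over $N_j$ (a Chernoff bound kills the contribution of $\{N_j\ \text{small}\}$). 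Plugging in, together with the elementary $\sum_j\frac{\p(B_j)}{(\nspu\p(B_j))\wedge1}\le2$ (split blocks by whether $\nspu\p(B_j)\le1$), and using $s\nspu=\ab$, yields $\expect{\totalvardist{\hp}{\p}}=O\bigl(\sqrt{\nspu/(\ns\priv)}+\sqrt{\ab/(\ns L)}\bigr)$.

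It remains to unwind $L$. If $\lfloor\priv/\ln(s+1)\rfloor\in[1,\nspu]$ then $L=\Theta(\priv/\ln(s+1))$ and $\sqrt{\ab/(\ns L)}=O(\sqrt{\ab\ln(\ab/\nspu+1)/(\ns\priv)})$; if $L$ saturates at $\nspu$ (i.e.\ $\priv=\Omega(\nspu\ln(s+1))$) it equals $\sqrt{\ab/(\nspu\ns)}$; and if $L=1$ is forced (i.e.\ $\ln s<\priv<\ln(s+1)$) it equals $\sqrt{\ab/\ns}\le\sqrt{\ab\ln(s+1)/(\ns\priv)}$ since $\priv<\ln(s+1)$. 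In all three cases the block-distribution term $\sqrt{\nspu/(\ns\priv)}\le\sqrt{\ab/(\ns\priv)}\le\sqrt{\ab\ln(\ab/\nspu+1)/(\ns\priv)}$ (using $\ln(s+1)\ge1$ for $s\ge2$) is dominated, so altogether $\expect{\totalvardist{\hp}{\p}}=O\bigl(\sqrt{\ab/(\nspu\ns)}\lor\sqrt{\ab\ln(\ab/\nspu+1)/(\ns\priv)}\bigr)$, as claimed. The same protocol applied with $s\ge e^\priv$ (the small-$\nspu$ regime) forces $L=1$ and recovers the $O(\sqrt{\ab^2/(\nspu\ns e^\priv)})$ rate of \cref{thm:small_m}, which is the sense in which the algorithm is unified.

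The step I expect to be the real work is the per-block bookkeeping in the second paragraph: controlling $\expect{1/(N_j\vee1)}$ and $\Pr[N_j=0]$ so that $\expect{\totalvardist{\hat{\tilde\p}_j}{\tilde\p_j}}=O(\sqrt{s/\expect{N_j}})$ genuinely holds in expectation, and confirming this only requires $\ns=\Omega(\nspu\log\nspu/\priv)$ rather than a larger polylogarithmic factor — in the few regimes where this quantity is too small for the concentration to bite, the asserted rate already exceeds $1$ and the statement is vacuous. The remaining ingredients — the exchangeability identity $W_{u,j}\sim\tilde\p_j$, the (now textbook, since $R_u$ is data-independent) composition accounting, and the Hadamard-response guarantee at budget $\ge\ln s$ — are routine given \cref{lem:decompose}, \cref{thm:composition}, \cref{cor:large_m}, and \cite{ASZ:18}.
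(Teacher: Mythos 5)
Your proof is correct and follows the paper's approach: the same partition of $[\ab]$ into $\nspu$ blocks of size $s=\ab/\nspu$, the same invocation of \cref{lem:decompose}, the same observation that $s<e^\priv$ lets a user afford several per-block reports via \cref{thm:composition}, and the same Hadamard-response subroutine from \cite{ASZ:18} for the conditional distributions $\tp_j$. The one genuine stylistic improvement is that you handle the whole range $\ab/e^\priv<\nspu<\ab$ with a single clipped parameter $L=\max\{1,\min\{\nspu,\lfloor\priv/\ln(s+1)\rfloor\}\}$, whereas the paper splits into two sub-cases at $\nspu\lessgtr \ab/e^{\priv/2}$ and re-runs the \cref{thm:small_m} algorithm verbatim in the first sub-case; your version is cleaner (and using \cref{cor:large_m} with full budget $\priv$ on half the users for $\p_B$ also dodges the slightly awkward dependence on the regularity condition of \cref{thm:rate_small_eps} that the paper's first sub-case incurs).

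One caveat on a choice that the paper avoids: you sample a \emph{random} size-$L$ set $R_u$, so you must control $\expect{1/(N_j\vee 1)}$ and $\Pr[N_j=0]$, and your vacuity escape hatch does not cover all regimes. For instance, take $s\asymp\log\nspu$ (so $L\asymp\priv/\log\log\nspu$) and $\priv$ much larger than $\log\nspu$; then $\ns>C\nspu\log(\nspu)/\priv$ permits $\expect{N_j}\asymp \ns L/\nspu\ll\log\nspu$, so the Chernoff-plus-union step over $\nspu$ blocks fails, yet the asserted rate $\sqrt{\ab\ln(s+1)/(\ns\priv)}\asymp\sqrt{s\log\log\nspu/\log\nspu}$ can be well below $1$. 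The fix is to drop the randomness: assign each user $L$ blocks deterministically (round-robin) so that every block receives $\Theta(\ns L/\nspu)$ reports exactly, eliminating the concentration step and the $N_j=0$ event altogether. With that modification your argument matches the paper's, including the final arithmetic that unwinds $L$ into the stated $\sqrt{\ab/(\nspu\ns)}\lor\sqrt{\ab\ln(\ab/\nspu+1)/(\ns\priv)}$ bound.
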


Note that $\risk(\eps, \ab, \ns, \nspu)$ is non-increasing with $\nspu$. Setting $\nspu=\ab$ in~\cref{thm:lb-large-m} yields a lower bound of $\Omega\Paren{\sqrt{{\ab/\nspu\ns}}\lor \sqrt{k/\ns\priv}}$ for $\ab/e^\eps<\nspu<\ab$ when $\ns > (\ab/\eps)^2$. Thus \cref{thm:mid_m} is tight up to logarithmic factors.

When $\nspu \le \ab/e^{\priv/2}$, we use the same algorithm as $\nspu\le \ab/e^\priv$, and the guarantee is proved similarly (see~\cref{app:medium_m} for details). When $\nspu >\ab/e^{\priv/2}$, we make the following changes,
\begin{enumerate}
    \item To learn $\p_B=[\p(B_1), \ldots, \p(B_\nspu)]$, we use $\eps/2$ privacy budget with the algorithm for $\nspu\ge \ab$. Hence, the estimation error for $\p_B$ satisfies
    \ifnum\doublecolumn=1
        \begin{align*}
            \expect{\totalvardist{\hat{\p}_B}{\p_B}}&= O\Paren{ \sqrt{\frac{\nspu^2}{\nspu\ns\eps}}}=O \Paren{ \sqrt{\frac{\ab}{\ns\eps}}}.
        \end{align*}  
    \else
        \[
        \expect{\totalvardist{\hat{\p}_B}{\p_B}}= O\Paren{ \sqrt{\frac{\nspu^2}{\nspu\ns\eps}}} = O\Paren{ \sqrt{\frac{\nspu}{\ns\eps}}}=O \Paren{ \sqrt{\frac{\ab}{\ns\eps}}}.
        \]
    \fi
The final equality is due to $\nspu<\ab$.

\item To estimate $\p(\cdot|B_j)$, we divide the remaining budget of $\priv/2$ into $t'\eqdef \lfloor\frac{\priv}{2\ln (\ab/\nspu)}\rfloor$ parts. Note that with $\ln(\ab/\nspu)$ privacy budget and $\ns/\nspu$ samples, we can learn $\tp_j$ with accuracy $O(\sqrt{\ab/\ns})$.
Since $\ab/\nspu<e^{\eps/2}$, we can assign $\nspu\land t'$ blocks to each user. The effective sample size increases by a factor of $\nspu\land t'$. Thus 
    \ifnum\doublecolumn=1
        \begin{align*}
             \expect{\totalvardist{\hat{\tp}_j}{\tp_j}}&=O\left(\sqrt{\frac{\ab}{\ns (\nspu\land t')}}\right)\\
             &= O\left(\sqrt{\frac{\ab}{\nspu\ns}}\lor\sqrt{\frac{\ab\ln(\ab/\nspu+1)}{\ns \priv}}\right).
        \end{align*}
    \else
        \[
         \expect{\totalvardist{\hat{\tp}_j}{\tp_j}}=O\left(\sqrt{\frac{\ab}{\ns (\nspu\land t')}}\right)= O\left(\sqrt{\frac{\ab}{\nspu\ns}} \lor\sqrt{\frac{\ab\ln(\ab/\nspu+1)}{\ns \priv}}\right).
        \]
    \fi
\end{enumerate}
Applying~\cref{lem:decompose} yields the desired upper bound.

\section{Lower bound}
\label{sec:lower-bound}
\begin{figure*}[ht]
    \centering
    \includegraphics[width=0.3\linewidth]{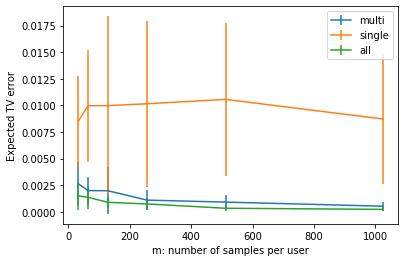}
    \includegraphics[width=0.3\linewidth]{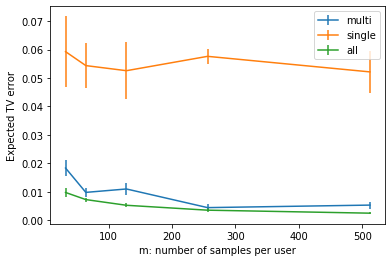}
    \includegraphics[width=0.3\linewidth]{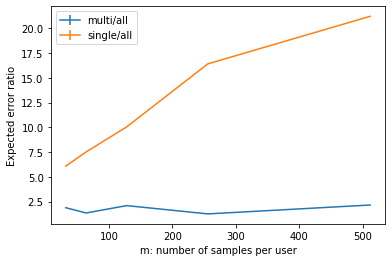}    
    \caption{\textbf{Left and Middle}: expected TV error with respect to $\nspu$ of our algorithm (blue), 1-sample HR (orange), and all-sample HR (green) in the high privacy regime with $\priv=0.9$. \textbf{Left}: $ \ab=2, \ns = 9000, \p=(0.6, 0.4)$. \textbf{Middle}: $\ab=32, \ns = 9000\ab, \p$ uniform. \textbf{Right}: orange/green and blue/green ratio in the middle plot.} 
    \label{fig:eps_small}
\end{figure*}
\begin{figure*}[ht]
    \centering
    \includegraphics[width=0.3\linewidth]{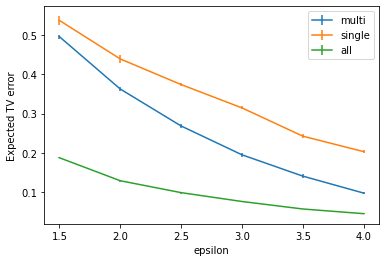}
    \includegraphics[width=0.3\linewidth]{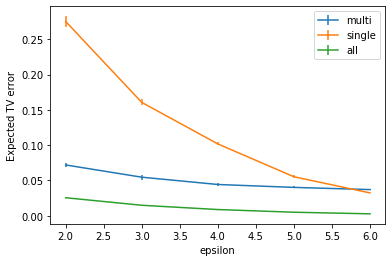}    \includegraphics[width=0.3\linewidth]{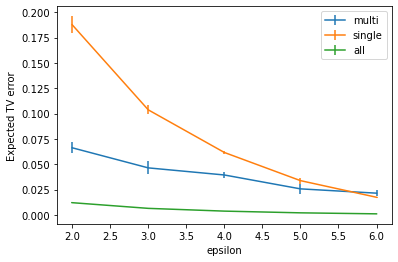}
    \caption{Expected TV error with respect to $\priv$ of our algorithm (blue), 1-sample HR (orange), and all-sample HR (green) in the $\eps>1$ regime with $\p$ uniform. \textbf{Left: $\nspu< \ab/e^{\priv}$}, $\ab=1000, \nspu=20, \ns = 600\ab$. \textbf{Middle: $\nspu\in[\ab/e^{\priv}, \ab)$. } $\ab=500, \nspu=128, \ns = 1200\ab$. \textbf{Right: $\nspu\ge \ab$}, $\ab=200, \nspu=256, \ns = 1200\ab$.}
    \label{fig:low_priv_wrt_eps}
\end{figure*}
In this section, we discuss the proof of lower bounds in \cref{thm:rate_small_eps}, \cref{thm:small_m}, and \cref{thm:lb-large-m}. 
We use the information contraction framework in \cite{acharya2020unified} and the lower bound construction in \cite{ACLST:21}. 
Our hard instances are from the ``Paninski'' family~\cite{Paninski:08}.
Let $\gamma\in(0, 1/2)$ be a parameter related to the expected error. We consider a family of distributions defined as follows: for each vector $z\in\mathcal{Z}\eqdef\{-1, 1\}^{k/2}$, define a discrete distribution $p_z$ as
\[
p_z(2i-1)=\frac{1+\gamma z_i}{k}, \quad p_z(2i)=\frac{1-\gamma z_i}{k}.
\]

The $\nspu$ samples observed by each user can be viewed as a $\ab$-dimensional vector indicating the histogram from a multinomial distribution. 
While the proof builds on \cite{ACLST:21, acharya2020unified} for the communication-constrained case, their techniques cannot directly translate to $\eps$-LDP. Their proof relies on the ``Poissonization'' trick to make each coordinate independent. However, for the trick to work,  each user needs to send one bit to indicate whether they get enough samples, which might violate privacy constraints. Our solution is as follows,
\begin{itemize}
    \item For $\nspu \le \ab/e^\eps$ and $\eps<1$, we compute the information contraction bound for multinomial distributions directly, which leads to tight lower bounds \newer{without ``Poissonization''}.
    \item For $\nspu > \ab/e^\eps$ and $\eps > 1$, ``Poissonization'' is still used. 
    We show that 
    even if we allow each user to send an extra clean bit, which we term ``$\eps$-LDP $+$ 1-bit'' channels, the desired lower bound still holds. 
\end{itemize}
We defer the detailed proof to~\cref{app:lower}.

\section{Experiments}
\label{sec:experiments}
\newer{The main goal of the section is to demonstrate the effectiveness of our algorithmic ideas and verify our theoretical findings. The experiments are based on prototype algorithms without extensive tuning on constants.} We mainly focus on the \textit{interactive} versions since they give better constants than the non-interactive ones numerically.
%Moreover, the \emph{interactive} case is sufficient to demonstrate the effectiveness of our algorithmic ideas and verify our theoretical findings. 
We compare our algorithms to Hadamard Response~\cite{ASZ:18} with 1 sample per user on either $\ns$ users (referred as \textbf{1-sample HR}) or $\nspu\ns$ users (\textbf{all-sample HR}) in various parameter regimes. They serve as baseline upper and lower bounds on the achievable rates under user-level LDP \footnote{\newer{RAPPOR \cite{EPK:14} outperforms HR numerically by a small margin (e.g., \cite{ASZ:18}). We compare with HR here since our algorithms use ideas from HR and both algorithms are optimal up to constants.}}. Average TV error and standard deviation over 5 independent runs are reported. Additional results for both interactive and non-interactive algorithms are provided in~\cref{app:experiment}.

\textbf{High privacy $ \priv\le 1$}.
Figure~\ref{fig:eps_small} shows the result for the high privacy regime for $\ab=2$ and $\ab=32$, with $\nspu=[32, 64, 128, 256, 512]$. In both cases, the performance of the 1-sample HR remains nearly the same, while the performance of our algorithm is always within a \newer{constant (2.5)} factor to that of all-sample HR, as Theorem~\ref{thm:rate_small_eps} suggests.

\textbf{Low privacy $\priv>1$}. In this regime, we mainly focus on the dependence on $\eps$. Figure~\ref{fig:low_priv_wrt_eps} shows the expected TV error with respect to $\priv$. When $\nspu<\ab/e^\priv$, our algorithm approaches all-sample HR as $\priv$ increases. The rate of decay is much faster than 1-sample HR, indicating an exponential decay with $\priv$ as suggested by Theorem~\ref{thm:small_m}. 

When $\nspu\ge \ab/e^{\priv}$, as suggested by~\cref{thm:mid_m} and~\cref{cor:large_m}, our algorithms no longer improve exponentially with $\priv$ and gradually approaches 1-sample HR (near $\eps=6$). This is expected, as their rates differ by at most a factor of $\Theta(\sqrt{\ln k})$ when $\nspu=\Theta(\ab)$ and $\eps\simeq \ln k$. 

%\paragraph{Social impact.} 

%\section{Conclusion}
%We prove tight min-max rates for discrete distribution learning under user-level LDP, verified by experimental results. The limitations are that we assume user samples are i.i.d. with equal size, and our algorithms may be inefficient in practice. We leave these directions as future work.

\bibliographystyle{plain} % we cannot afford to lose space using abbrvnat
\bibliography{bibliography}

\newpage
\appendix

\section{Detailed algorithm and proof for $\priv < 1$}
\subsection{$\priv < 1, k=2$}
\label{app:binomial}
In this section, we provide the detailed proof of \cref{thm:binomial_ldp}. Recall the coin estimation problem: there are $\ns$ users, each has $\nspu$ i.i.d. samples from $\bernoulli{p}$. We want to estimate $p$ under LDP. We start with the necessary definitions.

Let $\CI=10$ and $r\eqdef \lfloor\sqrt{\frac{\nspu}{2\CI}}\rfloor$. We now recall the 
definition of the intervals $\{I_i\}_{i \in [2\nin]}$. Let
$I_i\eqdef[l_{i-1}, l_{i}]$ for $1\leq i\leq r$, where
\[
  l_i\eqdef \min\left\{\frac{\CI i^2}{\nspu}, \frac{1}{2}\right\}, \quad 0\le i\le r.
\]
Furthermore $I_{2r+1-i}\eqdef[1-l_i, 1-l_{i-1}]$. 

Let $\CR=100\CI^2$ and $r'=\lfloor\sqrt{\frac{\nspu}{2\CR}}\rfloor$. For $1\le i\le r'$, define $ L_i=[l_{i-1}'-l_{i}'] $ similarly as $\{I_i\}_{i\in [2r]}$ with $\CI$ replaced by $\CR$. Let $j_i=(l_{i-1}'+l_i)/2$ and $\{J_i\}_{i 
\in 
[2\nin + 1]}$ be the partition defined by $j_i$s.

For user $u$, let $Z_u$ be the number of 1s. $Z_u$ induces a random variable $V_u:=\arg\max_{i\in [2r]}\indic{Z_u\in I_i}$, which follows a discrete distribution $\q$, with 
$
\q_i=\Pr[Z_u\in I_i], \;i\in[2r].
$

Recall that $\beta=1/(e^\priv+1)$ and define $\gamma=1-2\beta=\frac{e^\priv-1}{e^\priv+1}$. The detailed protocol is described in~\cref{alg:non-interactive}. 

In~\cref{alg:non-interactive} we define the functions $R_2, R_3, R_4$ as
\begin{equation}
  R_2(p)\eqdef \bPr{\frac{Z}{\nspu}\in\bigcup_{i}L_{2i}}, \quad
  R_3(p)\eqdef \bPr{\frac{Z}{\nspu}\in\bigcup_{i}J_{2i}}, \quad
  R_4(p)=\bPr{Z\geq 1},
\end{equation}
where $Z\sim\binomial{\nspu}{p}$.
\begin{algorithm}[h]
\caption{Non-interactive binomial Estimation Protocol.}

Divide users into 4 groups $S_1, 
\ldots, S_4$. $|S_1|=\frac{\ns}{2}=:\nloc$, $|S_2|=|S_3|=|S_4|=\frac{\ns}{6}=:N$.

\medskip
\textbf{Localization stage.}  In this stage, the goal is to obtain an interval 
$I$, which corresponds to a crude 
estimate of $p$.
\begin{itemize}
	\item \textbf{Users}: $u\in S_1$ computes the one-hot encoding of $V_u$ and flips each coordinate with probability $\beta := 1/(e^\priv + 1)$. Denote the flipped vector as $Y_u:=(Y_{u, 1}, \ldots, Y_{u, 2r})$. 
	\item \textbf{The server}: Let 
    \[
        \hat{i}=\arg\max_{i\in[2r]}\sum_{u\in S_1}Y_{u, i}.
    \]
    Set the confidence interval $\hat{I}=\cup_{i:|i-\hat{i}|\le 1}I_i$.
\end{itemize}

\textbf{Refinement stage.} In this stage, we improve the accuracy to 
$\Theta(1/\nspu\ns)$. 
%using samples from 
%$S_2, S_3, S_4$. Users in each of the groups will send indicators of 
%whether 
%their sample lies in a certain subset, defined below.
\begin{itemize}
	\item \textbf{Users}: 
	\begin{enumerate}
		\item $u\in S_2$ flip $\indic{Z_u/\nspu\in 
			\cup_{i}L_{2i}}$ with probability $\beta$.
		\item 	$u\in S_3$ flip $\indic{Z_u/\nspu\in \cup_{i} 
		J_{2i}}$ with probability $\beta$. 
		\item $u\in S_4$ flips $\indic{Z_u\ge 1}$ with probability $\beta$. 
	\end{enumerate}
	Denote $Y_u$ as the flipped bit and send $Y_u$
	\item \textbf{The server}: According to \cite[Lemma A.8]{ACLST:21}, one of the 3  cases must hold.
	\begin{description}
		\item[If]  $\hat{I}\subseteq[0, 65\CR/m]$, let $\bar{Y}_4 = 
		\Paren{\frac{1}{N}\sum_{u\in 
		S_4}Y_u-\beta}/\gamma$
		\[
		\hat{p}=R_4^{-1}\left(\bar{Y}_4 \right) \eqdef \setOfSuchThat{p \in [0,1] }{ R_4(p) }
		=\bar{Y}_4\}.  
		\]
		\item[Else if] there exists $i\in [2r]$ such that
		$
		\hat{I}\subseteq I'_i\eqdef \left[l_i'-\frac{0.55\CR i}{\nspu}, l_i'+\frac{0.55\CR 
			i}{\nspu}\right],
		$ let $\bar{Y}_2 =\Paren{\frac{1}{N}\sum_{u\in 
			S_2}Y_u-\beta}/\gamma$ 
		\[
		\hat{p}=R_{2, I_i'}^{-1}\left(\bar{Y}_2\right) \eqdef \setOfSuchThat{p \in I_i' }{ R_2(p) 
		=\bar{Y}_2 }.
		\]
		\item[Else if] there exists $i\in [2r+1]$ such that
		$
		\hat{I}\subseteq J'_i\eqdef \left[j_i-\frac{0.55\CR i}{\nspu}, j_i+\frac{0.55\CR 
			i}{\nspu}\right],
		$ let $\bar{Y}_3 = \Paren{\frac{1}{N}\sum_{u\in 
			S_3}Y_u-\beta}/\gamma$
		\[
		\hat{p}=R_{3, J_i'}^{-1}\eqdef \setOfSuchThat{p \in J_i' }{ R_3(p) =\bar{Y}_3 }.
		\]
	\end{description}
\end{itemize}
\label{alg:non-interactive}

\end{algorithm}

\paragraph{Guarantee}  First we prove the guarantee of the localization stage. We start with the following observation about the partition $\{I_i\}_{i\in[2r]}$.
\begin{lemma}
\label{lem:partition-property}
Suppose that $p\in I_i$ and $p\le 1/2$. Then $$\max\{\frac{\CI}{\nspu}, \frac{5}{3}\sqrt{\CI}\sqrt{\frac{p}{\nspu}}\}\le |I_i|\le \max\{\frac{\CI}{\nspu}, 2.5\sqrt{\CI}\sqrt{\frac{p}{\nspu}}\}$$
\end{lemma}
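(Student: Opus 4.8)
\textbf{Proof plan for Lemma~\ref{lem:partition-property}.}

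The plan is to unfold the definition of the interval endpoints and reduce everything to elementary estimates on the quantity $l_i - l_{i-1}$ where $l_i = \CI i^2/\nspu$ (ignoring the $\min$ with $1/2$, since we assume $p \le 1/2$ and the interesting intervals are those with $l_i \le 1/2$; the truncation only shrinks an interval, so the upper bound is unaffected, and I will note separately that when truncation occurs the lower bound still holds because $|I_i| \ge \CI/\nspu$ trivially from the previous gap). First I would write $|I_i| = l_i - l_{i-1} = \frac{\CI}{\nspu}(i^2 - (i-1)^2) = \frac{\CI}{\nspu}(2i-1)$. So everything comes down to bounding $2i-1$ from above and below in terms of $p$ and $\nspu$, given the constraint $p \in I_i$, i.e. $l_{i-1} \le p \le l_i$.

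The key steps: (1) From $p \le l_i = \CI i^2 /\nspu$ we get $i \ge \sqrt{p\nspu/\CI}$, hence $2i - 1 \ge 2\sqrt{p\nspu/\CI} - 1$. When $\sqrt{p\nspu/\CI} \ge 3$ (say), the ``$-1$'' costs at most a factor $5/6$, giving $2i-1 \ge \frac{5}{3}\sqrt{p\nspu/\CI}$, and multiplying by $\CI/\nspu$ yields $|I_i| \ge \frac{5}{3}\sqrt{\CI}\sqrt{p/\nspu}$. When $\sqrt{p\nspu/\CI} < 3$, i.e. $p$ small, the bound $|I_i| \ge \CI/\nspu$ (valid since $2i - 1 \ge 1$) dominates the claimed maximum — I should check the crossover constant works, i.e. that $\CI/\nspu \ge \frac53\sqrt{\CI}\sqrt{p/\nspu}$ exactly when $\sqrt{p\nspu/\CI}$ is below the threshold, which is where the specific constant in the statement ($5/3$ vs $2$, and $\CI = 10$) is calibrated. (2) For the upper bound, from $p \ge l_{i-1} = \CI(i-1)^2/\nspu$ we get $i - 1 \le \sqrt{p\nspu/\CI}$, so $2i - 1 = 2(i-1) + 1 \le 2\sqrt{p\nspu/\CI} + 1$. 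Again split: if $\sqrt{p\nspu/\CI} \ge$ some constant, the ``$+1$'' inflates by at most a factor $9/8$ or so, giving $2i-1 \le 2.5\sqrt{p\nspu/\CI}$ after absorbing, hence $|I_i| \le 2.5\sqrt{\CI}\sqrt{p/\nspu}$; if $p$ is tiny then $i = 1$, $|I_i| = \CI/\nspu$, matching the other term of the $\max$. (3) Finally, handle the truncation case $l_i = 1/2$: then $|I_i| = 1/2 - l_{i-1}$, and since $p \le 1/2$ we still have $l_{i-1} \le p \le 1/2$, so $|I_i| \ge 1/2 - p$; combined with $p \le 1/2$ one checks the bounds degrade gracefully — though actually the cleanest route is to observe that the truncation can only happen for the last index and the bounds are stated with a $\max$, so I only need $|I_i| \ge \CI/\nspu$ there, which holds as long as $l_{i-1} \le 1/2 - \CI/\nspu$, and if not, $i$ is so large that... this edge deserves a careful line.

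The main obstacle I anticipate is not conceptual but bookkeeping: getting the numerical constants ($5/3$, $2.5$, $\CI = 10$) to line up so that the two regimes (``$\CI/\nspu$ dominates'' vs. ``$\sqrt{p/\nspu}$ dominates'') glue together cleanly under a single $\max$ with exactly these constants. In particular I would need to verify the threshold value of $\sqrt{p\nspu/\CI}$ at which $\frac{\CI}{\nspu} = \frac{5}{3}\sqrt{\CI}\sqrt{\frac{p}{\nspu}}$ (namely $\sqrt{p\nspu/\CI} = 3/5$) is compatible with the integer rounding $r = \lfloor\sqrt{\nspu/(2\CI)}\rfloor$ and with the $2i-1 \ge 2\sqrt{p\nspu/\CI} - 1$ slack, i.e. that for all relevant $i \ge 1$ one of the two bounds is active. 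I would organize the write-up as: state $|I_i| = \frac{\CI}{\nspu}(2i-1)$; prove $\sqrt{p\nspu/\CI} \le i \le \sqrt{p\nspu/\CI} + 1$ (with the left inequality only when $l_i < 1/2$); then a short case analysis on whether $\sqrt{p\nspu/\CI} \lessgtr 1$ (or whatever constant makes it work) to convert these into the claimed sandwich, and a remark that the $\min$ truncation only helps the upper bound and is covered by $|I_i|\ge\CI/\nspu$ for the lower bound.
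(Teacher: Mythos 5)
Your approach is essentially the same as the paper's: write $|I_i|$ as an explicit multiple of $\CI/\nspu$, convert $p\in I_i$ into two-sided bounds on $i$ in terms of $\sqrt{p\nspu/\CI}$, and split into a small-$i$ case (where $\CI/\nspu$ is the active term of the $\max$) and a large-$i$ case (where the $\sqrt{p/\nspu}$ term is active). Two points of comparison are worth noting. First, the paper's written proof computes $|I_i|=\CI(2i+1)/\nspu$ and uses $p\geq\CI i^2/\nspu$, $p\leq\CI(i+1)^2/\nspu$ — i.e.\ it silently relabels $I_i=[l_i,l_{i+1}]$ — whereas you use $|I_i|=\CI(2i-1)/\nspu$, which actually matches the definition $I_i=[l_{i-1},l_i]$. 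This is a cosmetic off-by-one in the paper, not a substantive difference. Second, the paper calibrates the constants by splitting on the integer $i$ itself: it observes $2i+1\leq 2.5\,i$ and $2i+1\geq(5/3)(i+1)$ both hold iff $i\geq 2$, and handles $i=1$ separately. That is cleaner than your plan of comparing $\sqrt{p\nspu/\CI}$ to a numerical threshold like $3$, and I would recommend you switch to that formulation — with your indexing the analogous facts are $2i-1\leq 2.5(i-1)$ and $2i-1\geq(5/3)i$, both again for $i\geq 2$.

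Your flagged worry about the crossover is legitimate and you would in fact hit a snag if you pushed the $\sqrt{p\nspu/\CI}$-threshold route: the two terms of the $\max$ in the claimed lower bound cross at $\sqrt{p\nspu/\CI}=3/5$, not at $3$, and in the range $(3/5,1]$ one has $i=1$, $|I_1|=\CI/\nspu$, yet $(5/3)\sqrt{\CI p/\nspu}>\CI/\nspu$, so the stated sandwich is actually violated at the right end of $I_1$ (the paper's proof, for its part, merely records $|I_1|=\CI/\nspu$ and does not verify the inequalities there). In other words the boundary issue you anticipated is real, and the paper glosses over it; the downstream use of the lemma (a three-interval concentration argument) is insensitive to this, but a fully rigorous write-up should either weaken the $5/3$ for $i=1$ or restrict the claim. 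Finally, you need not worry about the $\min\{\cdot,1/2\}$ truncation: with $r=\lfloor\sqrt{\nspu/(2\CI)}\rfloor$ one has $\CI r^2/\nspu\leq 1/2$, so the $\min$ never bites for $1\leq i\leq r$, which is the whole range relevant under $p\leq 1/2$; that edge case is vacuous.
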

\begin{proof}
If $i=1$, then $|I_i|=\CI/\nspu$. If $i\ge 2$, then since $p\ge \CI i^2/\nspu$,
$$
|I_i|=\frac{\CI(2i+1)}{\nspu}\le \frac{\CI(2.5i)}{\nspu}\le2.5\sqrt{\CI}\sqrt{\frac{p}{\nspu}}.
$$
Since $p\le \CI (i+1)^2/\nspu$ and $1\le (i+1)/3$,
\[
|I_i|=\frac{\CI(2i+1)}{\nspu}\ge \frac{\CI(2-1/3)(i+1)}{\nspu}=\frac{5}{3}\sqrt{\CI}\sqrt{\frac{p}{\nspu}}.
\]
\end{proof}

\begin{theorem}
\label{thm:priv_local}
Recall that $\hat{I}=\cup_{|i-\hat{i}|\le 1}I_i$. There exists a constant $C$ such that if $\nloc\ge C\log (\nspu)/\priv^2$, we have
\[
\expect{(\hat{p}-p)^2\indic{p\notin \hat{I}}}= O\left(\frac{1}{m\nloc\priv^2}\right)
\]
\end{theorem}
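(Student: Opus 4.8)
The goal is to show that the localization stage rarely fails, and even when it does, the squared error it contributes is small. The event $\{p \notin \hat I\}$ is the event that the true parameter $p$ is not within distance one (in interval index) of the selected index $\hat i$. My plan is to reduce this to a statement about the flipped one-hot vectors $Y_u$, $u \in S_1$, behaving like noisy samples from the distribution $\q$ on $[2r]$ induced by $V_u$, and then to invoke the concentration/near-sparsity structure of $\q$.

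\textbf{Step 1: Near-sparsity of $\q$.} Suppose $p \in I_{i^\star}$. By Observation~\ref{obs:concentration}, $Z_u/\nspu$ concentrates around $p$ with standard deviation $O(\sqrt{p(1-p)/\nspu})$, and by \cref{lem:partition-property} the interval $I_{i^\star}$ (and its neighbors) have width $\Theta(\max\{1/\nspu, \sqrt{p/\nspu}\})$, which is $\Omega(\sqrt{p(1-p)/\nspu})$ up to constants. So a Chernoff/Bernstein bound for the binomial gives $\q_{i^\star-1} + \q_{i^\star} + \q_{i^\star+1} \ge 1 - \eta$ for some small constant $\eta$, and moreover $\q_j$ decays geometrically in $|j - i^\star|$. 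In other words $\q$ is, up to exponentially small tails, supported on the three indices around $i^\star$. This is the key structural input and it is exactly the ``near-sparse distribution'' observation the paper alludes to in the sketch.

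\textbf{Step 2: From the noisy histogram to selection.} For each $i$, $\sum_{u \in S_1} Y_u(i)$ has mean $\nloc(\q_i(1-\beta) + (1-\q_i)\beta) = \nloc(\beta + \gamma \q_i)$ with $\gamma = 1-2\beta = \Theta(\priv)$ (using $\priv < 1$), and the coordinates are sums of independent Bernoullis, so each concentrates within $O(\sqrt{\nloc \log \nspu})$ of its mean with probability $1 - \mathrm{poly}(\nspu)^{-1}$ after a union bound over all $2r = O(\sqrt{\nspu})$ coordinates. If $\hat i$ is chosen with $|\hat i - i^\star| \ge 2$, then (on this good event) we would need $\beta + \gamma \q_{\hat i} \gtrsim \beta + \gamma \q_{i^\star} - O(\sqrt{\log\nspu / \nloc})$, i.e. $\gamma(\q_{i^\star} - \q_{\hat i}) \lesssim \sqrt{\log \nspu/\nloc}$. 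Since $\q_{i^\star} \ge (1-\eta)/3$ is a constant while $\q_{\hat i}$ is exponentially small by Step 1, the left side is $\Omega(\gamma) = \Omega(\priv)$, so this fails once $\nloc \ge C \log\nspu/\priv^2$ for a suitable constant $C$. Hence $\Pr[p \notin \hat I] \le \Pr[|\hat i - i^\star| \ge 2] = \mathrm{poly}(\nspu)^{-1}$, and by choosing $C$ large enough this is $\le 1/(\nspu \nloc)$, say.

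\textbf{Step 3: Bounding the squared error on the failure event.} On $\{p \notin \hat I\}$ we cannot control $|\hat p - p|$ pointwise, but $\hat p, p \in [0,1]$ so $(\hat p - p)^2 \le 1$ deterministically. Therefore
\[
\expect{(\hat p - p)^2 \indic{p \notin \hat I}} \le \Pr[p \notin \hat I] \le \frac{1}{\nspu \nloc} = O\!\left(\frac{1}{m \nloc \priv^2}\right),
\]
using $\priv < 1$ in the last step (or just absorbing it into the constant). This gives the claim.

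\textbf{Main obstacle.} The delicate point is Step 1–2: making the constants line up so that the geometric decay of $\q$ away from $i^\star$ genuinely dominates the $O(\sqrt{\log\nspu/\nloc})$ sampling fluctuation in the argmax, uniformly over all $p \in [0,1]$ (including the boundary regimes $p = \Theta(1/\nspu)$ and $p$ near $1/2$ where the interval widths switch behavior per \cref{lem:partition-property}). One has to be careful that $\eta$ can be made a small enough absolute constant and that the tail bound on $\q$ is strong enough (sub-Gaussian in $|j-i^\star|$ at the relevant scale) — this is where the specific choice $\CI = 10$ and the width lower bound $\tfrac{5}{3}\sqrt{\CI}\sqrt{p/\nspu}$ in \cref{lem:partition-property} are used. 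Everything after the argmax analysis is a routine union bound plus the trivial $(\hat p - p)^2 \le 1$ bound.
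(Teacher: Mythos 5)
Your proposal follows the same route as the paper's proof of Theorem~\ref{thm:priv_local}: (i) use the partition geometry (Lemma~\ref{lem:partition-property}) plus binomial concentration to show $\q$ puts mass $\geq 1-\eta$ on the three indices around $i_p$; (ii) concentrate each coordinate $M_i=\sum_{u\in S_1}Y_{u,i}$ of the flipped histogram around $\nloc(\beta+\gamma\q_i)$ and union-bound over the $O(\sqrt{\nspu})$ coordinates; (iii) conclude the argmax $\hat i$ lands in the good set with high probability and pay a trivial $(\hat p - p)^2\leq 1$ on the failure event. This is exactly the paper's structure, so the approach is sound.

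Two small points where you are slightly loose, both of which the paper handles explicitly. First, you use a single index $i^\star$ both as the interval containing $p$ and as the index with $\q_{i^\star}\ge(1-\eta)/3$; these need not coincide when $p$ sits near a cell boundary. The paper keeps them separate, writing $i_p$ for the cell containing $p$ and $i^*=\arg\max_i\q_i$, noting $|i^*-i_p|\le 1$ and $\q_{i^*}\ge(1-\alpha)/3$, and then shows every index with $|i-i_p|>1$ has $M_i\le M^*<M_{i^*}$, so the argmax $\hat i$ cannot land outside $\{i_p-1,i_p,i_p+1\}$ --- this comparison against the mode rather than against $\q_{i_p}$ itself is what makes the pigeonhole constant actually available. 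Second, your Step~3 asserts $\Pr[p\notin\hat I]\le 1/(\nspu\nloc)$ by ``choosing $C$ large enough,'' but the concentration you invoked only gives $\mathrm{poly}(\nspu)^{-1}$; to push the failure probability below $1/(\nspu\nloc)$ you need the per-coordinate tail parameter to be $\delta\asymp 1/(\nspu^{3/2}\nloc)$, so the deviation must be $O(\sqrt{\nloc\log(\nspu\nloc)})$, and one must then check that $\nloc\priv^2\gtrsim\log\nspu+\log(\nloc\priv^2)$ still follows from the stated hypothesis (it does, because $x\ge c\log x$ for $x$ above a constant, which the paper absorbs into $C$). Neither issue changes the argument, but both are worth writing out.
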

\begin{proof}
Let $i_p$ be such that $p\in I_{i_p}$. Due to \cref{lem:partition-property}, by concentration inequalities for binomials
\[
\sum_{|i-i_p|\le 1}\q_{i}\ge 0.96=:1-\alpha.
\]

For $i\in[2r]$, let $M_i=\sum_{u\in S_1}Y_{u, i}$. By Chernoff bound, for $i$ such that $|i-i_p|>1$, with probability at least $1-\delta$, 
\[
M_i\le \nloc(\beta +\alpha\gamma)+\sqrt{3\nloc(\beta+\alpha\gamma)\log\left(\frac{1}{\delta}\right)}=:M^*. 
\]

By union bound, with probability at least $1-2r\delta = 1-\Theta(\sqrt{\nspu}\delta)$, $M_i\le M^*$ for all $|i-i_p|>1$. 

Let $i^*=\arg\max_{i}\q_i$. It is clear that $|i^*-i_p|\le 1$, and $\q_{i^*}\ge (1-\alpha)/3=0.32$.
Next we argue that with high probability, $M_{i^*}>M^*$, 
and hence the maximum of $M_i$'s must be achieved at some $i\in[i_p-1, i_p+1]$. 

First, there exists a constant $C_1$ such that when $\nloc\ge C_1\log(1/\delta)/\priv^2$, 
\[
\expect{M_{i^*}-M^*}=\frac{1-4\alpha}{3}\nloc\gamma-\sqrt{3\nloc(\beta+\alpha\gamma)\log\frac{1}{\delta}}\ge \frac{1-4\alpha}{6}\nloc\gamma.
\]
Therefore, by Chernoff bound, 
\begin{align*}
    \probaOf{M_{i^*}\le M^*}&=\probaOf{M_{i^*}\le \expect{M_{i^*}}-\expect{M_{i^*}-M^*}}\\
    &\le \exp\left(-\frac{\gamma^2 0.28^2}{\beta+0.32\gamma}\nloc\right), 
\end{align*}
which is at most $\delta$ as long as $\nloc\ge C_2\log(1/\delta)/\priv^2$ for some constant $C_2$. 

Set $\delta=\frac{1}{\nspu^2\nloc\priv^2}$ and $C'=\max\{C_1, C_2\}$. Then 
\[
\expect{(\hat{p}-p)^2\indic{p\notin\hat{I}}}\le \probaOf{p\notin\hat{I}}\le (\sqrt{\nspu}+1)\delta=O\left(\frac{1}{\nspu\nloc\priv^2}\right), 
\]
as long as 
$$\nloc\priv^2\ge C'\log(\nspu^2\nloc \priv^2)=2C'\log \nspu +C'\log(\nloc \priv^2).$$ 
In addition, there exists a constant $C_3$ such that as long as $\nloc\priv^2\ge C_3$, we can guarantee $\nloc\priv^2/2\ge C'\log(\nloc \priv^2)$. Hence, let $C=\max\{C', C_3\}$, we can guarantee the desired error as long as $\nloc\ge C\log(\nspu)/\priv^2$
\end{proof}

Next we show the guarantee of the refinement stage. Using a similar argument in the proof of \cite[Lemma A.8]{ACLST:21}, the interval $\hat{I}$ has the following property.

\begin{lemma}
	\label{lem:large-deriv-intervals}
	Conditioned on $p\in\hat{I}$, at least one of the following must hold,
	\begin{enumerate}
		\item There exists $i\in [2r]$, such that $
		\hat{I}\subseteq I_i'=\left[\frac{\CR i^2}{\nspu}-\frac{0.55\CR i}{\nspu}, 
		\frac{\CR 
			i^2}{\nspu}+\frac{0.55\CR i}{\nspu}\right]
		$
		\item There exists $i\in [2r+1]$ such that $\hat{I}\subseteq 
		J_i'=\left[j_i-\frac{0.55\CR i}{\nspu}, j_i+\frac{0.55\CR 
			i}{\nspu}\right]$
		\item $J\subseteq [0, 65\CR/\nspu]$   
	\end{enumerate}
\end{lemma}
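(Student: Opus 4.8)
\textbf{Proof proposal for Lemma~\ref{lem:large-deriv-intervals}.}

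The plan is to mimic the proof of \cite[Lemma A.8]{ACLST:21}, transferring the geometric case analysis about the interval $\hat I=\cup_{|i-\hat i|\le 1}I_i$ from the communication-constrained setting to our setting essentially verbatim, since the statement is purely a deterministic claim about the partition $\{J_i\}$ and $\{L_i\}$ and does not involve privacy at all. First I would fix $i_p$ with $p\in I_{i_p}$; conditioned on $p\in\hat I$ we have $|\hat i-i_p|\le 1$, so $\hat I$ is a union of at most five consecutive intervals $I_{i_p-2},\dots,I_{i_p+2}$, and by \cref{lem:partition-property} its total length is $O(\max\{1/\nspu,\sqrt{p/\nspu}\})$ and it is contained in a window of that width around $p$. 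The goal is then to show that \emph{any} such short window around $p$ is swallowed by one of the three structured families: the ``near-zero'' region $[0,65\CR/\nspu]$, one of the ``even-block'' windows $I_i'$ centered at the left endpoints $l_i'=\CR i^2/\nspu$ of the coarser partition $\{L_i\}$, or one of the ``midpoint'' windows $J_i'$ centered at the midpoints $j_i$ of that partition.

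The key steps, in order: (i) if $p\le 65\CR/\nspu$, then because $\hat I$ has width $O(1/\nspu)$ with a small enough hidden constant (recall $\CR=100\CI^2$ is chosen much larger than $\CI$), $\hat I\subseteq[0,65\CR/\nspu]$ and we are in case 3; (ii) otherwise $p>65\CR/\nspu$, so $p$ lies strictly inside some $L_i$ with $i\ge$ a small constant, hence $p$ is at distance at most $|L_i|/2 = \Theta(\CR i/\nspu)$ from either the left endpoint $l_{i-1}'$ (equivalently some $l_{i'}'$) or the midpoint $j_i$; (iii) since the width of $\hat I$ is $O(\sqrt{\CI}\sqrt{p/\nspu})=O(\sqrt{\CI}\cdot\sqrt{\CR}\,i/\nspu)$ and $\sqrt{\CI}\sqrt{\CR}=10\CI^{3/2}\ll \CR=100\CI^2$, the window $\hat I$ fits inside the $0.55\CR i/\nspu$-radius interval $I_i'$ or $J_i'$ around whichever of these two points $p$ is closer to. The bookkeeping is just checking that ``distance from $p$ to the nearest center'' plus ``half-width of $\hat I$'' is at most $0.55\CR i/\nspu$, which follows from the constant gap between $\CI$ and $\CR$; one also has to handle the transition region near $i\approx r'$ where $l_i'$ saturates at $1/2$, and the symmetric tail $p>1/2$, both exactly as in \cite{ACLST:21}.

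The main obstacle is not conceptual but the constant-chasing in step (iii): one must verify that the specific constants ($65$, $0.55$, $\CR=100\CI^2$, $\CI=10$) are mutually consistent so that a width-$O(\sqrt{\CI p/\nspu})$ interval always lands inside a width-$1.1\CR i/\nspu$ interval centered at the nearest grid point, including the edge cases at the boundary between the ``near-zero'' regime and the quadratic regime, and at $p$ near $1/2$. Since this is precisely the content of \cite[Lemma A.8]{ACLST:21} with $\CI,\CR$ playing the roles of the analogous constants there, I would state that the proof is identical after substituting our partition definitions, and only reproduce the constant verification if the referee asks. No randomness or LDP machinery enters here — the lemma is used downstream (in the refinement stage) precisely to guarantee that in each of the three cases the relevant probability function $R_2,R_3,R_4$ has derivative $\Omega(\sqrt{\nspu})$ on the identified interval, which is where the statistical argument resumes.
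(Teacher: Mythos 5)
Your proposal is correct and takes essentially the same approach as the paper: the paper also states simply that the proof is identical to \cite[Lemma A.8]{ACLST:21}, since the lemma is a purely deterministic statement about the partition geometry and does not depend on the privacy mechanism. Your sketch of the underlying case analysis (near-zero region, proximity to $l_i'$ or $j_i$, and the constant-gap argument showing a width-$O(\sqrt{\CI p/\nspu})$ interval fits inside a width-$1.1\CR i/\nspu$ window) is an accurate reconstruction of what that cited argument does.
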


The proof is identical to \cite[Lemma A.8]{ACLST:21}. 
Furthermore, in the respective intervals stated in~\cref{lem:large-deriv-intervals}, there is at least one of $R_2(p), R_3(p), R_4(p)$ with large derivatives.
\begin{lemma}
\label{lem:refine-derivative}
There exists some absolute constant $C>0$ such that the following holds.
\begin{enumerate}
	\item For all $i \in [2\nin]$, $R_2(p)$ is monotonic in 
	$I'_i\eqdef\left[l_i'-\frac{0.55\CR 
	i}{\nspu}, 
	l_i'+\frac{0.55\CR i}{\nspu}\right]$, and for $p \in I'_i$, 
	\[
		\left|\frac{dR_2(p)}{dp}\right|\ge C\sqrt{\frac{\nspu}{p}}.
	\]
	\item  For all $i \in [2\nin + 1]$, $R_3(p)$ is monotonic in 
	$J'_i\eqdef\left[j_i-\frac{2\CR i}{\nspu}, j_i+\frac{0.55\CR 
	i}{\nspu}\right]$, and for  $p \in J'_i$,
		\[
	\left|\frac{dR_3(p)}{dp}\right|\ge C\sqrt{\frac{\nspu}{p}}.
	\]
	\item $R_4(p)$ is monotonic in $[0, 65\CR/m]$, and for $p \in [0, 
	65\CR/m]$,
	\[
		\frac{dR_4(p)}{dp}\ge C\nspu.
	\]
\end{enumerate}
\end{lemma}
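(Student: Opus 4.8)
The plan is to handle $R_4$ by a one-line computation and to treat $R_2$ and $R_3$ by differentiating a telescoped representation of the distribution function and showing that, on the prescribed interval, a single term of the resulting signed sum dominates.

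For $R_4(p)=1-(1-p)^\nspu$ we have $R_4'(p)=\nspu(1-p)^{\nspu-1}>0$, so $R_4$ is strictly increasing, and for $p\le 65\CR/\nspu$ we get $(1-p)^{\nspu-1}\ge(1-65\CR/\nspu)^{\nspu}\ge e^{-c}$ for an absolute constant $c$ once $\nspu$ is larger than an absolute multiple of $\CR$ (which is implied by the theorem's regularity hypothesis and also ensures the partitions are nonempty); hence $R_4'(p)\ge C\nspu$ on $[0,65\CR/\nspu]$, and $R_4$ is monotone there.

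For $R_2$ (the argument for $R_3$ is identical after replacing the $L$-partition and its boundaries $l_i'$ by the $J$-partition and its midpoint boundaries $j_i$, and $I_i'$ by $J_i'$), first write $R_2(p)=\sum_i\bigl(\Pr[Z/\nspu\ge l_{2i-1}']-\Pr[Z/\nspu> l_{2i}']\bigr)$ and recall the classical identity $\frac{d}{dp}\Pr[\binomial{\nspu}{p}\ge k]=\nspu\Pr[\binomial{\nspu-1}{p}=k-1]$. Differentiating term by term gives $R_2'(p)=\nspu\sum_j\sigma_j\Pr[\binomial{\nspu-1}{p}=k_j-1]$, where $k_j$ ($\approx\nspu l_j'$ up to an integer-part adjustment) ranges over the block boundaries and $\sigma_j=+1$ if $l_j'$ is the left endpoint of an even block $L_{2i}$ and $\sigma_j=-1$ if it is the right endpoint. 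Now fix $i$ and $p\in I_i'$; I would show (i) the term at $l_i'$ is at least $C_1\sqrt{\nspu/p}$, and (ii) the absolute value of the sum of all remaining terms is at most $\frac{1}{2}C_1\sqrt{\nspu/p}$. For (i): $p\in I_i'$ forces $|\nspu l_i'-\nspu p|\le 0.55\CR i$, while for $p\ge l_i'-0.55\CR i/\nspu$ the standard deviation $\sqrt{\nspu p(1-p)}$ of $\binomial{\nspu-1}{p}$ is at least of order $\sqrt{\CR}\,i$, so $k_i$ lies within $O(\sqrt{\CR})$ standard deviations of the mean; a local limit theorem for the binomial with an explicit constant then gives $\Pr[\binomial{\nspu-1}{p}=k_i-1]\ge c_1/\sqrt{\nspu p}$, i.e.\ the term is $\ge c_1\sqrt{\nspu/p}$. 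For (ii): for the boundary $l_{i\pm t}'$ with $t\ge1$, the separation $|l_{i\pm t}'-l_i'|$ is of order at least $\CR t i/\nspu$, which places $k_{i\pm t}$ at order $\sqrt{\CR}\,t$ standard deviations from the mean, so a Chernoff / sub-Gaussian estimate bounds the corresponding probability by $\frac{1}{\sqrt{\nspu p}}e^{-\Omega(\CR t^2)}$; summing over $t\ge1$ gives a total of order $\frac{1}{\sqrt{\nspu p}}e^{-\Omega(\CR)}$, which is below $\frac{1}{2}c_1/\sqrt{\nspu p}$ since $\CR=100\CI^2$ is a large absolute constant. (The $O(1)$ initial indices $i$ — those below a constant depending only on $\CR$, for which $\nspu p$ stays bounded — are handled directly via comparison with a Poisson law.) Combining (i) and (ii), $|R_2'(p)|\ge C\sqrt{\nspu/p}$ on $I_i'$ for an absolute constant $C$ absorbing the $e^{-O(\CR)}$ loss, and since the sign of $R_2'$ on $I_i'$ is the fixed sign $\sigma_i$ of the dominant term, $R_2$ is monotone on $I_i'$.

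The main obstacle is step (ii): ruling out cancellation by showing the nearest-boundary term dominates the full signed sum. This is exactly what dictates the choice $\CR=100\CI^2$ — the constant must be large enough that consecutive block boundaries are separated by many local standard deviations (so the off-center binomial probabilities decay super-geometrically and cannot accumulate to cancel the central term), while the construction of $I_i'$ and $J_i'$ still keeps the relevant boundary within a bounded number of standard deviations (a constant depending only on $\CR$) of every $p$ in the interval, so the central term stays of order $\sqrt{\nspu/p}$. A secondary technical point is to invoke a clean local limit theorem for binomial probabilities with a constant uniform over the relevant range of $p$ and valid for $\nspu$ past the stated regularity threshold.
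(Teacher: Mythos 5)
The paper's own ``proof'' of this lemma is a single sentence deferring entirely to \cite[Lemma A.9]{ACLST:21}, so there is no in-paper argument to compare against line by line. Your blind reconstruction is, however, a sound self-contained route to the statement, and is almost certainly the argument that citation is pointing to. The key mechanics are all correct: the identity $\frac{d}{dp}\Pr[\binomial{\nspu}{p}\ge k]=\nspu\Pr[\binomial{\nspu-1}{p}=k-1]$ turns $R_2'$ (resp.\ $R_3'$) into a signed sum of binomial point masses evaluated at the integer block boundaries; on $I_i'$ (resp.\ $J_i'$) the boundary $l_i'$ (resp.\ $j_i$) is the unique nearest one, and that single term dominates; a local CLT lower bound plus a Chernoff upper bound on the other terms then gives the derivative lower bound, and the fixed sign of the dominant term gives monotonicity on the whole interval. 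The $R_4$ case by direct differentiation is fine.

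Two small points worth tightening. First, the ``constant'' $c_1$ in your step (i) is itself of order $e^{-\Theta(\CR)}$: since $p\in I_i'$ only pins $\nspu l_i'$ to within $0.55\sqrt{\CR}$ local standard deviations of $\nspu p$, the LLT lower bound on the central term is $\approx e^{-(0.55)^2\CR/2}/\sqrt{\nspu p}$, not a universal $c_1/\sqrt{\nspu p}$. You do acknowledge this later when you say $C$ ``absorbs the $e^{-O(\CR)}$ loss,'' but the bookkeeping that makes (ii) close is exactly the gap between the exponents: the nearest \emph{competing} boundary sits at $\ge(2-0.55)\sqrt{\CR}$ local standard deviations, so its contribution is $\lesssim e^{-(1.45)^2\CR/2}/\sqrt{\nspu p}$; the ratio to the central term is $e^{-\Theta(\CR)}$, and then the geometric sum over farther boundaries is negligible. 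Stating this exponent comparison explicitly is what justifies ``the tail is below $\tfrac12 c_1/\sqrt{\nspu p}$''; as written the inequality between two unrelated $e^{-\Theta(\CR)}$ quantities looks circular. Second, for indices $i>\nin$ (i.e.\ $p$ near $1$) the local standard deviation is $\sqrt{\nspu p(1-p)}\approx\sqrt{\nspu(1-p)}$, not $\sqrt{\nspu p}$; the stated lower bound $C\sqrt{\nspu/p}$ is weaker there, so nothing breaks, but you should carry out the argument in the $(1-p)$ variable (or invoke the $p\leftrightarrow 1-p$ symmetry of the partition) rather than using $\sqrt{\nspu p}$ throughout. With those two clarifications your proposal is a correct and complete proof.
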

The proof is identical to \cite[Lemma A.9]{ACLST:21}

\begin{proof}[Proof of Theorem~\ref{thm:binomial_ldp}]
 We note that for $h\in\{2, 3, 4\}$, using the analysis for Randomized response, 
 \[
 \expect{(\bar{Y}_h-R_h(p))^2}\le \frac{1}{N}\cdot\Paren{\frac{e^\priv+1}{e^\priv -1}+\frac{e^\priv+1}{(e^\priv-1)^2}}=O\Paren{\frac{1}{N\priv^2}}
\]
 
Using the same analysis in \cite[Theorem A.3]{ACLST:21}, we have
\[
\expect{(\hat{p}-p)^2\indic{p\in \hat{I}}}=O\left(\frac{1}{\nspu\ns\priv^2}\right).
\]
Combining with Theorem~\ref{thm:priv_local} yields the desired result.
\end{proof}

\subsection{$\priv<1, k>2$}
\label{sec:small_eps_general_k}
\begin{theorem}
\label{thm:privacy_main}
There exists a constant $C$ and an $\priv$-LDP algorithm such that when $\ns\ge C \ab\log(\nspu)/\priv^2$, 
\[
\expect{\totalvardist{\hp}{\p}}=O\left(\sqrt{\frac{\ab^2}{\nspu\ns\priv^2}}\right)
\]
\end{theorem}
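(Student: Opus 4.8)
The plan is to reduce the $\ab$-ary estimation problem to $\ab$ essentially-independent instances of the coin-estimation problem already solved by \cref{thm:binomial_ldp}, exploiting the Hadamard structure in the way indicated in the ``General case $\ab>2$'' discussion of \cref{sec:high-priv}. Assume WLOG that $\ab$ is a power of two, and let $T_i=\{j\in[\ab]:H_\ab(i,j)=1\}$ be the support of the $+1$'s in the $i$-th row of the Hadamard matrix. Since every row of $H_\ab$ other than the first is balanced, $\p(T_i)=\tfrac12\Paren{1+(H_\ab\p)_i}$, so the vector $(\p(T_1),\dots,\p(T_\ab))$ determines $\p$ through a single inverse Hadamard transform. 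First I would partition the $\ns$ users into $\ab$ groups of (roughly) equal size $\ns/\ab$; each user in group $i$ collapses its $\nspu$ i.i.d.\ samples to the count $Z_u\sim\binomial{\nspu}{\p(T_i)}$ of samples landing in $T_i$ -- a function of the user's data, so applying a private mechanism to $Z_u$ preserves LDP -- and runs the coin-estimation protocol of \cref{thm:binomial_ldp} to produce an estimate $\hp_T(i)$ of $\p(T_i)$. The server then forms $\hp_T=(\hp_T(1),\dots,\hp_T(\ab))$ and outputs $\hp$, the projection onto $\Delta_\ab$ of $H_\ab^{-1}(2\hp_T-\mathbf 1_\ab)=\tfrac1\ab H_\ab(2\hp_T-\mathbf 1_\ab)$.

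For privacy, each user lies in exactly one group and therefore emits exactly one message, namely the single $\eps$-LDP randomized-response message used inside the coin-estimation protocol; hence the whole protocol is $\eps$-LDP, and it is non-interactive whenever the non-interactive variant of \cref{thm:binomial_ldp} (\cref{alg:non-interactive}) is used inside each group. For the error, the key observation is that the regularity hypothesis $\ns\ge C\ab\log(\nspu)/\priv^2$ is exactly what guarantees that every subproblem -- which sees $\ns/\ab$ users -- satisfies the hypothesis $\ns/\ab\ge C\log(\nspu)/\priv^2$ of \cref{thm:binomial_ldp}. Applying that theorem gives $\expect{(\hp_T(i)-\p(T_i))^2}=O\Paren{\ab/(\nspu\ns\priv^2)}$ for each $i$, hence $\expect{\norm{\hp_T-\p_T}_2^2}=O\Paren{\ab^2/(\nspu\ns\priv^2)}$. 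Because $H_\ab$ is symmetric with $H_\ab^\top H_\ab=\ab I$, the linear map $v\mapsto\tfrac1\ab H_\ab v$ scales squared $\ell_2$ norm by $1/\ab$, and projecting onto $\Delta_\ab$ only decreases the distance to $\p\in\Delta_\ab$, so $\expect{\norm{\hp-\p}_2^2}=O\Paren{\tfrac1\ab\,\expect{\norm{\hp_T-\p_T}_2^2}}=O\Paren{\ab/(\nspu\ns\priv^2)}$. Finally I would pass from this $\ell_2$ bound to the total-variation bound via Cauchy--Schwarz ($\norm{v}_1\le\sqrt\ab\,\norm{v}_2$ for $v\in\mathbb R^\ab$) and Jensen's inequality:
\[
\expect{\totalvardist{\hp}{\p}}=\tfrac12\expect{\norm{\hp-\p}_1}\le\tfrac{\sqrt\ab}{2}\sqrt{\expect{\norm{\hp-\p}_2^2}}=O\Paren{\sqrt{\tfrac{\ab^2}{\nspu\ns\priv^2}}},
\]
which is the claimed rate.

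The main obstacle is not in this reduction -- which is routine once \cref{thm:binomial_ldp} is in hand -- but in \cref{thm:binomial_ldp} itself: designing the two-stage localization-then-refinement coin estimator, and proving that on the localized candidate intervals one of the threshold statistics $R_2,R_3,R_4$ has derivative $\Omega(\sqrt{\nspu/p})$ (respectively $\Omega(\nspu)$), which is what converts randomized-response noise of variance $O(1/(N\priv^2))$ into a squared estimation error of $O(1/(\nspu\ns\priv^2))$. Within the present argument the only points requiring genuine care are (i) checking that the $\ab$-way split leaves each group with enough users to meet the regularity condition of \cref{thm:binomial_ldp}, (ii) noting that single-group participation keeps the composed protocol $\eps$-LDP and non-interactive, and (iii) the harmless projection onto $\Delta_\ab$ so that the output is a genuine distribution and the Cauchy--Schwarz/Jensen step is valid.
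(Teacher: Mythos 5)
Your proposal is correct and follows essentially the same route as the paper's own proof: reduce to the $k=2$ coin-estimation problem via the Hadamard subsets $T_i$, split users into $k$ groups of size $n/k$, apply \cref{thm:binomial_ldp} per group, invert the Hadamard transform, and pass from the resulting $\ell_2$ bound to total variation by Cauchy--Schwarz. The only cosmetic additions on your side are the optional projection onto $\Delta_\ab$ (harmless, and not needed since the $\ell_1\le\sqrt\ab\,\ell_2$ step works for any vector) and the explicit remark that each user contributes to exactly one group, so the composed protocol remains $\eps$-LDP and non-interactive.
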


\begin{proof}
We use an idea considered in~\cite{ASZ:18} and  
estimate
the 
probabilities of 
subsets of $[\ab]$ 
defined 
below. 

Let  $K = 2^{\clg{\log_2 (k+1)}}$ be the smallest power of 2 larger 
than $\ab$ and $H_K$ be the $K \times K$ Hadamard matrix. Define $T_i = 
\{j \in [\ab]: H_K(i, j) = 1\}$, i.e., the locations of 1's in the $i$th row of 
$H_K$. Let $\p_T = (\p(T_1), \p(T_2), \ldots, \p(T_K))$. The following two 
claims are shown 
in~\cite{AS:19}.
\begin{claim} \label{clm:ptopt}
	 For any distribution 
	$\p$, we have
	\[
	\p_T = \frac{H_K \cdot \p + \mathbf{1}_K}{2},
	\]
	where we append 0's to $\p$ to make it of dimension $K$.
\end{claim}
\begin{claim}\label{clm:errptoerrpt}
	For all $\p, \hat{\p}$, we have 
	\[
		\norm{\p_T -\hat{\p} _T}_2^2 = \frac{K}{4} \norm{\p - \hat{\p} }_2^2.
	\]
\end{claim}

The above two claims show that for any estimate for the set probabilities 
$\hat{\p}_T$, we can obtain an estimate for $\p$ by inverting the formula 
in Claim~\ref{clm:ptopt}. Moreover, Claim~\ref{clm:errptoerrpt} establishes 
the relation between the errors for the two estimates.

Now we described the protocol.
\begin{itemize}
	\item[1.] Divide users into $K$ subsets, each with size $\ns/K$.
	\item[2.] Users in the $i$th set count the number of samples in $T_i$ 
	and apply the $\priv$-LDP protocol in Theorem~\ref{thm:binomial_ldp} to 
	estimate $\p(T_i)$.
	\item[3.] After obtaining the estimates $\hat{\p}(T_i)$, the server returns 
	the first $\ab$ coordinates of
	$\hat{\p}$ where
\[
	\hat{\p} = H_K^{-1}\Paren{ 2 \hat{\p}_T - \mathbf{1}_K},
\]
where $\hp_T = (\hp(T_1), \hp(T_2), \ldots, \hp(T_K))$.
\end{itemize}

By Theorem~\ref{thm:binomial_ldp}, for $\ns/K\ge C\log \nspu/\priv^2$ where $C$ is the constant in \cref{thm:binomial_ldp}, 
\[
	\bEE{\norm{\hp_T - \p_T}_2^2} = \sum_{i = 1}^\ab \bEE{\Paren{\hp(T_i) 
	- \p(T_i)}^2}  = O\Paren{\frac{K}{\nspu (\ns/K) \priv^2}} = 
	O\Paren{\frac{K^2}{\nspu \ns \priv^2}}.
\]

Combining with Claim~\ref{clm:errptoerrpt}, we get
\[
	\bEE{\totalvardist{\hp}{\p}}  \le \frac{1}{2}\sqrt{K\bEE{\norm{\hat{\p} - 
				\p}_2^2}} = \frac{1}{2}\sqrt{K\bEE{\frac{4}{K}\norm{\hat{\p}_T - 
				\p_T}_2^2}} = O\Paren{\sqrt{\frac{K^2}{\nspu \ns \priv^2}}}.
\]
Then the upper bound of Theorem~\ref{thm:privacy_main} follows by $K \le 2\ab$.

\end{proof}

\section{Missing proofs for $\priv>1$}
\label{app:low_priv}
\subsection{$m\le k/e^{\priv}$}
To prove \cref{lem:decompose}, we use ~\cite[Lemma 3.2]{ACLST:21}, which states
\[
\expect{\totalvardist{\hp}{\p}}\le\expect{\totalvardist{\hat{\p}_B}{\p_B}}+\sum_j \p(B_j)\totalvardist{\hat{\p}_j}{\bar{\p}_j}
\]

The only missing part is the following claim.
\begin{claim} 
For all $j \in [t]$,
\[
    \totalvardist{\hat{\p}_j}{\bar{\p}_j} \le \frac{\totalvardist{\hat{\tp}_j}{\tp_j}}{1 - \tp_j(\bot)}.
    \]
\end{claim}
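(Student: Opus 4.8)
The claim relates the total variation error between the conditional-distribution estimate $\hat{\p}_j$ and $\bar{\p}_j$ to the error between the estimate $\hat{\tp}_j$ and $\tp_j$ of the augmented distribution over $B_j \cup \{\bot\}$. The key structural fact is the multiplicative relation $\tp_j(x) = \bp_j(x)(1 - \tp_j(\bot))$ for $x \in B_j$, together with the algorithm's definition $\hp_j(x) = \hat{\tp}_j(x) / (1 - \hat{\tp}_j(\bot))$. The plan is to expand $\totalvardist{\hat{\p}_j}{\bar{\p}_j} = \frac12 \sum_{x \in B_j} \mleft| \hp_j(x) - \bp_j(x) \mright|$ and substitute both definitions.

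First I would write, for each $x \in B_j$,
\[
\hp_j(x) - \bp_j(x) = \frac{\hat{\tp}_j(x)}{1 - \hat{\tp}_j(\bot)} - \frac{\tp_j(x)}{1 - \tp_j(\bot)}.
\]
The natural move is to bound this by comparing $\hat{\tp}_j(x)$ directly with $\tp_j(x)$. A clean way is to write $\hp_j(x) = \hat{\tp}_j(x)/(1-\hat{\tp}_j(\bot))$, note that $\bar{\p}_j$ is itself a probability distribution on $B_j$ (so $\sum_{x\in B_j}\bp_j(x)=1$), and similarly $\sum_{x\in B_j}\hp_j(x)=1$ by construction (assuming $1-\hat{\tp}_j(\bot) > 0$; the degenerate case is handled separately and trivially since we can also assume WLOG that both are valid distributions on $B_j$). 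Then $\totalvardist{\hat{\p}_j}{\bar{\p}_j}$ is a TV distance between two honest distributions on $B_j$, and I can use the fact that for two distributions obtained by renormalizing sub-distributions, the TV distance scales. Concretely, since $\tp_j(x) = (1-\tp_j(\bot))\bp_j(x)$,
\[
\frac12 \sum_{x\in B_j} \mleft| \hat{\tp}_j(x) - \tp_j(x) \mright| \;\ge\; \frac12 \sum_{x \in B_j} \mleft| \hat{\tp}_j(x) - (1-\tp_j(\bot))\bp_j(x)\mright|,
\]
and this last sum is at least $(1-\tp_j(\bot))$ times $\totalvardist{\hat{\p}_j}{\bar{\p}_j}$ provided $1 - \hat{\tp}_j(\bot) \ge 1 - \tp_j(\bot)$; the reverse case needs a small symmetrization. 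The cleanest route, which I would actually carry out, is: $\totalvardist{\hat{\tp}_j}{\tp_j} \ge \frac12\sum_{x\in B_j}|\hat{\tp}_j(x) - \tp_j(x)|$, and then observe $\hat{\tp}_j(x) = (1-\hat{\tp}_j(\bot))\hp_j(x)$, $\tp_j(x) = (1-\tp_j(\bot))\bp_j(x)$, so the triangle inequality gives
\[
|\hat{\tp}_j(x) - \tp_j(x)| \ge (1-\tp_j(\bot))|\hp_j(x) - \bp_j(x)| - |\hat{\tp}_j(\bot) - \tp_j(\bot)| \hp_j(x).
\]
Summing over $x \in B_j$, using $\sum_{x\in B_j}\hp_j(x) = 1$ and that $|\hat{\tp}_j(\bot) - \tp_j(\bot)| \le \totalvardist{\hat{\tp}_j}{\tp_j}$ while $\frac12\sum_{x\in B_j}|\hat{\tp}_j(x)-\tp_j(x)| \le \totalvardist{\hat{\tp}_j}{\tp_j}$ as well — actually these two pieces together just give $2\totalvardist{\hat{\tp}_j}{\tp_j}$ on the left after accounting, which would yield the claim with the correct constant once one is careful that the $\bot$-coordinate discrepancy and the $B_j$-coordinate discrepancy are two halves of the \emph{same} TV sum, not additive.

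The main obstacle, and the step I would be most careful about, is bookkeeping the constant: $\totalvardist{\hat{\tp}_j}{\tp_j} = \frac12\mleft(|\hat{\tp}_j(\bot)-\tp_j(\bot)| + \sum_{x\in B_j}|\hat{\tp}_j(x)-\tp_j(x)|\mright)$, and one must exploit that the $\bot$-mass difference exactly equals the total $B_j$-mass difference in signed terms (since both sum to $1$), i.e. $|\hat{\tp}_j(\bot) - \tp_j(\bot)| = |\sum_{x\in B_j}(\hat{\tp}_j(x) - \tp_j(x))| \le \sum_{x\in B_j}|\hat{\tp}_j(x)-\tp_j(x)|$. So in fact $\totalvardist{\hat{\tp}_j}{\tp_j} \le \sum_{x\in B_j}|\hat{\tp}_j(x)-\tp_j(x)|$, and combined with the displayed lower bound on each $|\hat{\tp}_j(x)-\tp_j(x)|$ rearranged, one gets $(1-\tp_j(\bot))\sum_{x\in B_j}|\hp_j(x)-\bp_j(x)| \le \sum_{x\in B_j}|\hat{\tp}_j(x)-\tp_j(x)| + |\hat{\tp}_j(\bot)-\tp_j(\bot)| \le 2\totalvardist{\hat{\tp}_j}{\tp_j} \cdot(\text{const})$. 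Dividing by $2(1-\tp_j(\bot))$ recovers $\totalvardist{\hat{\p}_j}{\bar{\p}_j} \le \totalvardist{\hat{\tp}_j}{\tp_j}/(1-\tp_j(\bot))$. I would also dispatch the edge case $1 - \hat{\tp}_j(\bot) = 0$ at the start, where $\hp_j$ is set to uniform $\nspu/\ab$; here one checks the inequality holds trivially because $\tp_j(\bot)$ close to $1$ forces the right-hand side large, or simply note this event is absorbed into the stated bound via the regularity conditions of \cref{thm:small_m}.
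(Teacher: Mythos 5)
Your proof is correct and rests on the same two ingredients as the paper's: the algebraic identity linking $\hat{\tp}_j(x)-\tp_j(x)$ to $(1-\tp_j(\bot))(\hp_j(x)-\bp_j(x))$ plus a $\bot$-mass correction term, together with the normalization $\sum_{x\in B_j}\hp_j(x)=1$ that collapses the correction to exactly $|\hat{\tp}_j(\bot)-\tp_j(\bot)|$. The paper arrives at the same conclusion going forward (insert the intermediate term $\hat{\tp}_j(x)/(1-\tp_j(\bot))$ into $|\hp_j(x)-\bp_j(x)|$ and apply the triangle inequality), which avoids the reverse-triangle-inequality rearrangement and makes the constant-tracking that you worried about disappear, but the two arguments are essentially equivalent.
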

\begin{proof}
\begin{align}
    \totalvardist{\hat{\p}_j}{\bar{\p}_j} & = \sum_{x \in B_j} \abs{\hat{\p}_j(x) - \bp_j(x)} \\
    & = \sum_{x \in B_j} \abs{ \frac{\hat{\tp}_j(x)}{1 - \hat{\tp}_j(\bot)} -  \frac{\tp_j(x)}{1 - \tp_j(\bot)}} \\
    & \le \sum_{x \in B_j}  \Paren{\abs{ \frac{\hat{\tp}_j(x)}{1 - \hat{\tp}_j(\bot)} -  \frac{\hat{\tp}_j(x)}{1 - \tp_j(\bot)}} + \abs{ \frac{\hat{\tp}_j(x)}{1 - \tp_j(\bot)} -  \frac{\tp_j(x)}{1 - \tp_j(\bot)}}} \\
    & = \sum_{x \in B_j}  \frac{\hat{\tp}_j(x) \abs{\hat{\tp}_j(\bot) - \tp_j(\bot)} }{\Paren{1 - \hat{\tp}_j(\bot)}\Paren{1 - \tp_j(\bot)}}
    + \frac{\sum_{x \in B_j} \abs{ \hat{\tp}_j(x) - \tp_j(x) }}{1 - \tp_j(\bot)} \\
    & = \frac{\abs{\hat{\tp}_j(\bot) - \tp_j(\bot)} + \sum_{x \in B_j}\abs{\hat{\tp}_j(x) - \tp_j(x)}  }{1 - \tp_j(\bot)} \\
    & = \frac{\totalvardist{\hat{\tp}_j}{\tp_j}}{1 - \tp_j(\bot)}.
\end{align}

\end{proof}
Noting that $1-\tp_j(\bot)=\Theta( m\p(B_j)\land 1)$ completes the proof of~\cref{lem:decompose}.

Finally to prove \cref{thm:small_m}, recall that 
\begin{equation*}
     \quad \totalvardist{\hat{\tp}_j}{\tp_j}=O\Paren{\sqrt{\frac{(\ab/\nspu)^2}{(\ns/\nspu)e^\priv}}}=O\Paren{\sqrt{\frac{\ab^2}{\nspu\ns e^\priv}}}.
\end{equation*}

Therefore,
\begin{align*}
    &\quad\sum_{j \in [t]} \p(B_j) \expect{\totalvardist{\hat{\p}_j}{\bar{\p}_j} }  \le O\Paren{\sum_{j \in [t]} \frac{\p(B_j)}{ m\p(B_j)\land 1}\expect{ \totalvardist{\hat{\tp}_j}{\tp_j} }}  \label{equ:hadamard_step}\\
    & = O\Paren{\sqrt{\frac{\ab^2}{\nspu \ns e^{\priv}}}} \cdot \sum_{j \in [\nspu]} \Paren{\p(B_j)+\frac{1}{m}} =  O\Paren{\sqrt{\frac{\ab^2}{\nspu \ns e^{\priv}}}}.
\end{align*}
Combining with~\eqref{equ:block_decomposition}
completes the proof of \cref{thm:small_m}.

\ignore{
\begin{enumerate}
    \item Let $\priv_0=0.5$. Divide the domain $[\ab]$ into $t = \nspu$ blocks $B_1, \ldots, B_t$, each with size $
    \ab/\nspu$.
    \item Each user uses $\priv_0=0.5$ to estimate the block distribution $\p_B\eqdef [\p(B_1), \ldots, \p(B_t)]$ with the algorithm for $\priv\le 1$ in Section~\ref{sec:small_eps_general_k}.
    \item Divide all users into $t$ groups. The $j$th group tries to estimate $\bar{\p}_j:=\p(\cdot |B_j)$,
    the distribution conditioned on a sample is in $B_j$ (treated as uniform if $\p(B_j)=0$). \ie, for $x \in B_j$
    \[
        \bar{\p}_j(x) = \frac{\p(x)}{\p(B_j)}.
    \]
    To do this, each user in the $j$th group considers the following distribution $\tilde{{\p}}_j$ over $B_j \cup \{\bot\}$ where
    \[
        \tp_j(\bot) := \bP{X^\nspu \sim \p}{ \forall x \in B_j, x \notin X^\nspu} = \Paren{1 - \p(B_j)}^\nspu.
    \]
    And for $x \in B_j$, $\tp_j(\bot)(x)$ is the probability that $x$ is the first symbol in $B_j$ that appears in $X^\nspu$ conditioned on that $\exists x' \in B_j, x' \in B_j$. It can be otained that
    \[
        \tp_j(x) = \bp_j(x) \Paren{1 - \tp(\bot)}.
    \]
    It can be verified that a user can simulate a sample from $\tp_j$ by getting $\bot$ if $\forall x \in B_j, x \notin X^\nspu$ and getting the first sample in $X^m \cap B_j$ if it is not empty. Now each user can use the Hadamard response scheme in \cite{ASZ:18} for $(\eps - \eps_0)$-LDP to send messages to the server.
    
    The server can then get an estimate $\hat{\tp}_j$ for $\tp_j$ using the messages above. Using $\hat{\tp}_j$, an estimate $\hp_j$ for $\bp_j$ can be obtained by  $\forall x \in B_j$
    \[
        \hp_j(x) = \frac{\hat{\tp}_j(x)}{1 - \hat{\tp}_j(\bot)}.
    \]
    %Use $\priv-\priv_0$ for the Randomized Response algorithm to estimate the distribution.
\end{enumerate}

\begin{theorem}
   For $\priv>1$ and $m\le k/e^\priv$, there exists an algorithm with an output $\hat{\p}$ that achieves 
   \[
   \expect{\totalvardist{\hat{\p}}{\p}}=O\left(\sqrt{\frac{\ab^2}{\nspu\ns e^\priv}}\right).
   \]
\end{theorem}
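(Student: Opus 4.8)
The plan is to reduce the task to two subproblems --- estimating a coarse \emph{block distribution} and estimating the \emph{within-block conditionals} --- and to exploit that with $\nspu$ samples per user the probability a user sees a symbol from any fixed block is boosted by a factor of roughly $\nspu$ (\cref{obs:increased-prob}), which is exactly what compensates for splitting the $\ns$ users into $\nspu$ groups. I would partition $[\ab]$ into $\nspu$ blocks $B_1,\dots,B_\nspu$ of size $\ab/\nspu$; using exactly $\nspu$ blocks (rather than, say, $\ab/e^\eps$) is the key design choice, because then each block has mass $\Theta(1/\nspu)$ for near-uniform $\p$ and is observed by a user with probability $\Theta(1)$, which is what prevents the error-propagation step below from blowing up.

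Each user splits its budget as $\eps_0 = 1/2$ for the block distribution $\p_B = (\p(B_1),\dots,\p(B_\nspu))$ and $\eps - \eps_0$ for the within-block stage, so composition (\cref{thm:composition}) gives $\eps$-LDP. For $\p_B$: map each of the user's $\nspu$ samples to its block index and run the $\eps<1$ algorithm of \cref{thm:rate_small_eps} with domain size $\nspu$, $\nspu$ samples per user, budget $\eps_0$, and all $\ns$ users; under $\ns = \Omega(\nspu\log\nspu)$ this gives $\expect{\totalvardist{\hp_B}{\p_B}} = O(\sqrt{\nspu/\ns})$. For the conditionals, divide users into $\nspu$ groups, group $j$ targeting $\bar{\p}_j = \p(\cdot\mid B_j)$; a user in group $j$ simulates one draw from $\tp_j$ on $B_j\cup\{\bot\}$ (output $\bot$ if no sample lands in $B_j$, else the first one that does), so $\tp_j(\bot) = (1-\p(B_j))^\nspu$ and $\tp_j(x) = \bar{\p}_j(x)(1-\tp_j(\bot))$, and sends a Hadamard-response message~\cite{ASZ:18} for $(\eps-\eps_0)$-LDP over a domain of size $\ab/\nspu+1$. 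With $\ns/\nspu$ users per group and $e^{\eps-\eps_0} = \Theta(e^\eps)$, the HR guarantee gives $\expect{\totalvardist{\hat{\tp}_j}{\tp_j}} = O(\sqrt{(\ab/\nspu)^2/((\ns/\nspu)e^\eps)}) = O(\sqrt{\ab^2/(\nspu\ns e^\eps)})$. Finally set $\hp(x) = \hp_B(j)\,\hp_j(x)$ for $x\in B_j$, with $\hp_j(x) = \hat{\tp}_j(x)/(1-\hat{\tp}_j(\bot))$; since the partition and group assignment are fixed via the public seed, the protocol is non-interactive.

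For the analysis I would invoke \cref{lem:decompose} (which follows from the hierarchical bound $\totalvardist{\hp}{\p}\le\totalvardist{\hp_B}{\p_B}+\sum_j\p(B_j)\totalvardist{\hp_j}{\bar{\p}_j}$ of \cite[Lemma 3.2]{ACLST:21}), namely
\[
\expect{\totalvardist{\hp}{\p}} \le \sum_{j}\frac{\p(B_j)}{(\nspu\p(B_j))\land 1}\,\expect{\totalvardist{\hat{\tp}_j}{\tp_j}} + \expect{\totalvardist{\hp_B}{\p_B}}.
\]
The first term is handled using the uniform bound $\expect{\totalvardist{\hat{\tp}_j}{\tp_j}} = O(\sqrt{\ab^2/(\nspu\ns e^\eps)})$ and $\frac{\p(B_j)}{(\nspu\p(B_j))\land 1} \le \p(B_j)+\frac{1}{\nspu}$, so the weights sum to at most $2$; the second term is $O(\sqrt{\nspu/\ns}) = O(\sqrt{\ab^2/(\nspu\ns e^\eps)})$ because $\nspu\le\ab/e^\eps\le\ab/e^{\eps/2}$ forces $\nspu^2 e^\eps\le\ab^2$. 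Adding the two gives $\expect{\totalvardist{\hp}{\p}} = O(\sqrt{\ab^2/(\nspu\ns e^\eps)})$, which is the claim.

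The crux I anticipate is the error propagation from $\tp_j$ back to $\p$, i.e.\ the proof of \cref{lem:decompose}: the naive bound would divide $\totalvardist{\hat{\tp}_j}{\tp_j}$ by $\p(B_j)$, making $\sum_j\p(B_j)\cdot\frac{1}{\p(B_j)}\totalvardist{\hat{\tp}_j}{\tp_j}$ scale with the number of blocks, which is useless. The remedy is to divide instead by $1-\tp_j(\bot) = 1-(1-\p(B_j))^\nspu = \Theta((\nspu\p(B_j))\land 1)$, the ``effective'' observation probability of block $j$; showing $\totalvardist{\hp_j}{\bar{\p}_j}\le\totalvardist{\hat{\tp}_j}{\tp_j}/(1-\tp_j(\bot))$ needs a careful triangle-inequality manipulation of the ratios $\hat{\tp}_j(x)/(1-\hat{\tp}_j(\bot))$ against $\tp_j(x)/(1-\tp_j(\bot))$, peeling off the error in the $\bot$-mass, plus the two-sided estimate $1-(1-x)^\nspu = \Theta((\nspu x)\land 1)$. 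Together with the choice of exactly $\nspu$ blocks --- which keeps the weights $\frac{\p(B_j)}{(\nspu\p(B_j))\land 1}$ summable --- this is what makes the bound close; a routine last check is that the only nontrivial regularity requirement, $\ns = \Omega(\nspu\log\nspu)$, is exactly the hypothesis, inherited from \cref{thm:rate_small_eps} applied to the block stage.
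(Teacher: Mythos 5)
Your proposal is correct and matches the paper's proof essentially step for step: same partition into $\nspu$ blocks, same budget split $\eps_0 = 1/2$, same use of the $\eps<1$ algorithm for the block distribution, same simulation of a draw from $\tp_j$ over $B_j\cup\{\bot\}$ fed into Hadamard Response, same decomposition lemma with the $1-\tp_j(\bot)=\Theta((\nspu\p(B_j))\land 1)$ denominator, and the same bound $\p(B_j)/((\nspu\p(B_j))\land 1)\le\p(B_j)+1/\nspu$ to control the sum. You also correctly identified that the block-distribution term is absorbed because $\nspu\le\ab/e^{\eps/2}$, which is exactly how the paper handles it.
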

\begin{proof}
The idea is the following:
The total variation error can be decomposed into 
$$
\expect{\totalvardist{\hat{\p}_B}{\p_B}}+\sum_j \p(B_j)\totalvardist{\hat{\p}_j}{\bar{\p}_j}
$$
By the guarantee of the $\eps_0$-LDP algorithm, We have
$$
\expect{\totalvardist{\hat{p}_B}{p_B}}= O\Paren{ \sqrt{\frac{m^2}{mn}}} = O\Paren{ \sqrt{\frac{m}{n}}}.
$$
Whenever $\nspu < \ab/e^{\eps/2}$, this term is always smaller than the desired bound, and hence naturally holds for $\nspu<\ab/e^\eps$. Next we bound the second term.

By the guarantee of the Hadamard response algorithm, we get 
\begin{equation}\label{eqn:bound_hadamard}
    \expect{\totalvardist{\hat{\tp}_j}{\tp_j}} = O\Paren{\sqrt{\frac{(\ab/\nspu)^2}{(\ns/t) e^{\priv - \priv_0}}}} = O\Paren{\sqrt{\frac{\ab^2}{\nspu \ns e^{\priv}}}}.
\end{equation}
The following claim relates $\totalvardist{\hat{\p}_j}{\bar{\p}_j}$ to $\totalvardist{\hat{\tp}_j}{\bar{\tp}_j}$.

Combining \cref{eqn:bound_hadamard} and \cref{clm:tp2bp}, we get
\begin{align}
    \sum_{j \in [t]} \p(B_j) \expect{\totalvardist{\hat{\p}_j}{\bar{\p}_j} } & \le \sum_{j \in [t]} \frac{\p(B_j)}{1 - \tp_j(\bot)}\expect{ \totalvardist{\hat{\p}_j}{\bar{\p}_j} } \\
    & = O\Paren{\sqrt{\frac{\ab^2}{\nspu \ns e^{\priv}}}} \cdot \sum_{j \in [t]} \frac{\p(B_j)}{1 - \tp_j(\bot)}\\
    & \le  O\Paren{\sqrt{\frac{\ab^2}{\nspu \ns e^{\priv}}}} \cdot \sum_{j \in [2\nspu]} \frac{\p(B_j)}{\min \{ m\p(B_j), 1\}} \\
    & =  O\Paren{\sqrt{\frac{\ab^2}{\nspu \ns e^{\priv}}}}.
\end{align}
This completes the proof.
\end{proof}
}

\subsection{$\ab/e^\eps<\nspu<\ab$}
\label{app:medium_m}
We provide the detailed proof for $m\le \ab/e^{\eps/2}$. Recall that in this regime we use the algorithm for $m\le k/e^\eps$. Since $m\le \ab/e^{\eps/2}$, by \cref{thm:rate_small_eps}
\[\expect{\totalvardist{\hat{\p}_B}{\p_B}}=O\Paren{\sqrt{\frac{\nspu}{\ns}}}=O\Paren{\sqrt{\frac{\ab}{\ns e^{\priv/2}}}}
\]
However, since each block only has $\ab/\nspu\le e^{\eps}$ elements, the error for estimating $\tilde{\p}_j$ satisfies
\[
 \expect{\totalvardist{\hat{\tp}_j}{\tp_j}} = O\Paren{\sqrt{\frac{\ab/\nspu}{(\ns/t) }}} = O\Paren{\sqrt{\frac{\ab}{\ns }}}.
\]
Using the same argument as $\nspu\le \ab/e^{\priv}$, we have
\[
\expect{\totalvardist{\hat{\p}}{\p}}=O\left(\sqrt{\frac{\ab}{\ns }}\right)= O\left(\sqrt{\frac{\ab\ln(\ab/\nspu+1)}{\ns \priv}}\right).
\]
The final equality is due to $\priv/2\le \ln (k/m)$.
\ignore{
Next we consider $\nspu>\ab/e^{\eps/2}$. In this regime, we make the following changes to the algorithm for $\nspu\le \ab/e^\eps$,
\begin{enumerate}
    \item When learning the block distribution $\p_B=[\p(B_1), \ldots, \p(B_t)]$, we use $\eps/2$ privacy budget with the algorithm for $\nspu\ge \ab$. Hence, the estimation error for $\p_B$ satisfies
\[
\expect{\totalvardist{\hat{\p}_B}{\p_B}}= O\Paren{ \sqrt{\frac{\nspu^2}{\nspu\ns\eps}}} = O\Paren{ \sqrt{\frac{\nspu}{\ns\eps}}}=O \Paren{ \sqrt{\frac{\ab}{\nspu\eps}}}.
\]
The final inequality is due to $\nspu<\ab$.

\item When estimating the conditional distributions $\p(\cdot|B_j)$, we split divide remaining privacy budget into $\ln(\ab/\nspu)$ parts. Note that for each $B_j$, with privacy budget at least $\ln(\ab/\nspu)$, we can guarantee
\[
 \expect{\totalvardist{\hat{\tp}_j}{\tp_j}} = O\Paren{\sqrt{\frac{\ab/\nspu}{(\ns/t) }}} = O\Paren{\sqrt{\frac{\ab}{\ns }}}.
\]
Since $\ab/\nspu<e^{\eps/2}$, each user can be assigned to $t'\eqdef \lfloor\frac{\priv}{2\ln (\ab/\nspu)}\rfloor$ blocks. Hence, the effective sample size increases by a factor of $t'$, and we have 
\[
 \expect{\totalvardist{\hat{\tp}_j}{\tp_j}}=O\left(\sqrt{\frac{\ab}{\ns t'}}\right)= O\left(\sqrt{\frac{\ab\ln(\ab/\nspu+1)}{\ns \priv}}\right).
\]
\end{enumerate}
}

\section{Connection to central DP and the shuffle model.} \label{app:shuffle}
In this section, we provide the proof of \cref{thm:shuffle}. The bound can be obtained by a combination of amplification by shuffling (\cref{thm:shuffling}) and the upper bound results in \cref{thm:rate_small_eps} and \cref{thm:small_m}. We assume without shuffling, each user sends an $\eps_0$-LDP message.

\paragraph{Small $\eps_0: \eps_0 \le 1$.} Note that in this case, in the shuffle model, $\eps = O\Paren{\eps_0 \sqrt{\frac{ \log(1/\delta)}{\ns}}}$. More specifically, for $\eps < \sqrt{\frac{9e \log(4/\delta)}{\ns}}$, there exists $\eps_0 =\eps \cdot  \sqrt{\frac{\ns}{9e \log(4/\delta)}} < 1$ such that the $\eps_0$-LDP algorithm is $(\eps, \delta)$-DP in the shuffle model. Plugging this into \cref{thm:rate_small_eps}, we get the desired bound in \cref{thm:shuffle}.

\paragraph{Large $\eps_0: 1 \le \eps_0 \le \log (\ab/\nspu)$.} In this case, in the shuffling model, $\eps = O\Paren{\sqrt{\frac{e^{\eps_0} \log(1/\delta)}{n}}}$. More specifically, when $\eps < \sqrt{\frac{\ab \log(1/\delta)^2}{\nspu \ns}}$, there exists $\eps_0 = \frac{1}{2} \log \frac{\ns\eps^2}{\log(1/\delta)} < \log(\ab/\nspu)$ such that the $\eps_0$-LDP algorithm is $(\eps, \delta)$-DP in the shuffle model. Plugging this into \cref{thm:small_m}, we get the desired bound in \cref{thm:shuffle}.

\section{Missing proofs for the lower bounds} \label{app:lower}
In this section, we present complete proofs for lower bound part of \cref{thm:rate_small_eps}, \cref{thm:small_m}, and \cref{thm:lb-large-m}. We use the information contraction framework in \cite{acharya2020unified} and the lower bound construction in \cite{ACLST:21}. 
Our hard instances are from the ``Paninski'' family~\cite{Paninski:08}.
Let $\gamma\in(0, 1/2)$ be a parameter related to the expected error. We consider a family of distributions defined as follows: for each vector $z\in\mathcal{Z}\eqdef\{-1, 1\}^{k/2}$, define a discrete distribution $\p_z$ as
\[
    \p_z(2i-1)=\frac{1+\gamma z_i}{k}, \quad \p_z(2i)=\frac{1-\gamma z_i}{k}.
\]

The $\nspu$ samples observed by each user can be viewed as a $\ab$-dimensional vector indicating the histogram from a multinomial distribution. We denote this distribution as $\pmul_z=\Multinomial(\nspu, \p_z)$. In this section, we use $\bm = (\bm(1), \bm(2), \ldots, \bm(k))$ to denote the histogram observed from $\pmul_z$ where $\bm(x)$ denotes the number of times $x$ appears in the observed $m$ samples.

When $\nspu$ is large ($\nspu \ge \ab$), we will consider the ``Poissonization" of the multinomial distribution, which we denote as $\ppoi_z=\Poisson(\nspu, \p_z)$. To generate a sample from $\ppoi_z$, first a random integer $M$ is generated from  $\poisson{\nspu}$ and the final observed samples are generated from $\Multinomial(M, \p_z)$. It is a folklore (\eg~\cite{gs009}) for $\bm = (\bm(1), \bm(2), \ldots, \bm(k)) \sim \ppoi_z$, we have: (1) All $\bm(x)$'s are mutually independent; (2) $\forall x \in [\ab],$ $\bm(x)$ follows a Poisson distribution with mean $\nspu \p_z(x)$. 

As discussed in \cref{sec:lower-bound}, we provide our proof in two separate regimes. In \cref{app:small_m}, we prove the lower bound part of \cref{thm:rate_small_eps} and  \cref{thm:small_m} by directly analyzing the multinomial setting. In \cref{app:large_m}, we prove \cref{thm:lb-large-m} using the Poissonization trick introduced above. We first introduce the information contraction framework in \cite{acharya2020unified} and necessary results.

\subsection{Information contraction bounds} \label{app:contraction}
Let $\cZ\eqdef\bool^\zdims$ and
$\{\q_z\}_{z\in\cZ}$ be a collection of
distributions over $\cX$, indexed by $z\in\cZ$.  For $z\in\cZ$,
denote by
$z^{\oplus i}\in\cZ$ the vector obtained by flipping the sign of the
$i$th coordinate of $z$. The following two assumptions on the density functions are needed.
\begin{assumption}%[Densities Exist]
  \label{assn:decomposition-by-coordinates}
For every $z\in\cZ$ and $i\in[\zdims]$ it holds that $\q_{z^{\oplus i}} \ll \q_{z}$, and
there exist measurable functions $\phi_{z,i}\colon\cX\to\R$
such that
\[
 \frac{d\q_{z^{\oplus i}}}{d\q_z}=1+\alpha_{z,i}\phi_{z,i}
,
 \]
 where $|\alpha_{z,i}|\leq \alpha$ for some constant $\alpha\in \R$ independent of $z,i$. Moreover, for all $z\in\cZ$ and $i,j\in[\zdims]$,
$\bE{\q_{z}}{\phi_{z,i}\phi_{z,j}}=\indic{i=j}$. (In particular, $\shortexpect_{\p_z}[\phi_{z,i}^2]=1$.)
\end{assumption}

\begin{assumption}%[Subgaussianity and independence]
  \label{assn:subgaussianity} There exists some $\sigma\geq 0$ such
that, for all $z\in\cZ$, the random vector $\phi_z(X)\eqdef
(\phi_{z,i}(X))_{i\in[\zdims]}\in\R^\zdims$ is $\sigma^2$-subgaussian
for $X\sim\q_z$, with independent coordinates.
\end{assumption}

%Let $Z$ be a random variable uniformly distributed over $\cZ$.  
Consider the following generating process. We first pick $Z$ uniformly at random from $\cZ$. Then each user observes a sample from $\q_Z$. The users follow the protocol $\Pi$ where each user uses a messaging scheme from a constrained set $\cW$ (\eg $\cW_\eps$ denotes all $\eps$-LDP schemes) to send a message $Y_i$ about there sample. The server observes all the messages $Y^\ns$ and estimate the distribution as $\hat{\p}$.

We denote the distribution of $Y^\ns$ when the samples are from $\q_Z$ as $\q_Z^{Y^\ns}$. We also denote the mixture of message distributions conditioned on a fixed $Z_i$ as the following $\q_{+i}^{Y^\ns}\eqdef \bEEC{\q_Z^{Y^\ns}}{Z_i=1}$, $\q_{-i}^{Y^\ns}\eqdef \bEEC{\q_Z^{Y^\ns}}{Z_i=1}$. Note that $\totalvardist{\q_{+i}^{Y^\ns}}{\q_{-i}^{Y^\ns}}$ can be viewed as an information measure that describes how much information $Y^\ns$ carries about $Z_i$. The following theorem provides an upper bound on this information measure.

\begin{theorem}[Main theorem of~\cite{acharya2020unified}]
  \label{thm:avg:coordinate} 
  Let $\Pi$ be a sequentially interactive
protocol using messaging schemes from $\cW$ and $(Y^\ns,U)$ be the transcript of $\Pi$ when
the input $X_1, \ldots, X_\ns$ is i.i.d.\ with common distribution
$\q_Z$.  Then,
under 
Assumption~\ref{assn:decomposition-by-coordinates},
we have
\begin{align}
\Paren{\frac{1}{\zdims}\sum_{i=1}^\zdims
	\totalvardist{\q_{+i}^{Y^\ns}}{\q_{-i}^{Y^\ns}}}^2\le 
	\frac{7}{\zdims} \ns \alpha^2 
	\max_{z\in\cZ}\max_{W\in\cW}\sum_{y\in\cY} 
	\frac{\var_{\q_{z}}[W(y\mid X)]}{\bE{\q_{z}}{W(y\mid 
	X)}},\label{eqn:var-bound}
\end{align}
Finally, if Assumption~\ref{assn:subgaussianity} holds as well, we have
\begin{align}
\Paren{\frac{1}{\zdims}\sum_{i=1}^\zdims\totalvardist{\q_{+i}^{Y^\ns}}{\q_{-i}^{Y^\ns}}}^2\le \frac{14 \ln 2}{\zdims}  \ns\alpha^2\sigma^2 \max_{z\in\cZ}\max_{W\in\cW}I(\q_z; W),\label{eqn:subgaussian-bound}
\end{align}
where $I(\q_z; W)$ denotes the mutual information $\mutualinfo{X}{Y}$ between the input $X\sim \q_z$
and the output $Y$ of the channel $W$ with $X$ as input.
\end{theorem}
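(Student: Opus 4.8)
The plan is to (i) reduce the left-hand side to a sum of per-coordinate $\chi^2$ divergences; (ii) use the sequential structure of $\Pi$ to split each such divergence into a sum of $\ns$ single-round contributions; (iii) bound each single-round contribution by the variance-ratio quantity on the right, using the orthonormality built into Assumption~\ref{assn:decomposition-by-coordinates}; and finally derive the refinement \eqref{eqn:subgaussian-bound} by replacing step (iii) with a strong-data-processing estimate.

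\emph{Reduction (step i).} First I would apply Cauchy–Schwarz, reducing the task to bounding $\frac1\zdims\sum_i\totalvardist{\q_{+i}^{Y^\ns}}{\q_{-i}^{Y^\ns}}^2$. Writing $\q_{\pm i}^{Y^\ns}$ as uniform mixtures over the remaining coordinates and using joint convexity, I would pass to $\bE{Z}{\totalvardist{\q_{Z^{\oplus i}}^{Y^\ns}}{\q_Z^{Y^\ns}}^2}$, and then to $\bE{Z}{\chi^2\big(\q_{Z^{\oplus i}}^{Y^\ns}\,\big\|\,\q_Z^{Y^\ns}\big)}$ via $\totalvardist{P}{Q}\le\frac12\sqrt{\chi^2(P\|Q)}$. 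The goal thus becomes to show that $\bE{Z}{\sum_i\chi^2\big(\q_{Z^{\oplus i}}^{Y^\ns}\,\big\|\,\q_Z^{Y^\ns}\big)}$ is at most a constant times $\ns\alpha^2$ times the variance-ratio term.

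\emph{Subadditivity over rounds (step ii).} Since $\Pi$ is sequentially interactive, the transcript law factorizes as $\q_z^{Y^\ns}(y^\ns\mid u)=\prod_{t=1}^\ns\bE{X_t\sim\q_z}{W_t^{y^{t-1},u}(y_t\mid X_t)}$ with $X_1,\dots,X_\ns$ i.i.d. Expanding the $\chi^2$ of a product of Markov kernels and taking expectation over the past transcript, the cross-round terms cancel because, conditioned on the past, round $t$ depends only on $X_t$; this yields a bound of the form $\bE{Z}{\sum_i\chi^2(\q_{Z^{\oplus i}}^{Y^\ns}\|\q_Z^{Y^\ns})}\le c\sum_{t=1}^\ns\max_{z\in\cZ}\max_{W\in\cW}\sum_i\chi^2_{\mathrm{rd}}(z,W,i)$, where $\chi^2_{\mathrm{rd}}(z,W,i)$ is the one-round contribution of applying $W$ to $\q_z$ and flipping coordinate $i$. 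The constant $7$ in \eqref{eqn:var-bound} absorbs this $c$ together with the conversion factors of the previous step. Then I would fix $z$, $W\colon\cX\to\cY$, set $M_z(y)=\bE{\q_z}{W(y\mid X)}$, so that $\chi^2_{\mathrm{rd}}(z,W,i)=\sum_y\frac{(\q_{z^{\oplus i}}^Y(y)-M_z(y))^2}{M_z(y)}$. Assumption~\ref{assn:decomposition-by-coordinates} gives $\q_{z^{\oplus i}}^Y(y)-M_z(y)=\alpha_{z,i}\bE{\q_z}{W(y\mid X)\phi_{z,i}(X)}$, and since $\bE{\q_z}{\phi_{z,i}}=0$ this coefficient equals $\langle W(y\mid\cdot)-M_z(y),\phi_{z,i}\rangle_{\q_z}$. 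As $\{\phi_{z,i}\}_i$ is orthonormal in $L^2(\q_z)$, Bessel's inequality gives $\sum_i\big(\bE{\q_z}{W(y\mid X)\phi_{z,i}(X)}\big)^2\le\var_{\q_z}[W(y\mid X)]$; dividing by $M_z(y)$, summing over $y$, and using $|\alpha_{z,i}|\le\alpha$ yields $\sum_i\chi^2_{\mathrm{rd}}(z,W,i)\le\alpha^2\sum_y\frac{\var_{\q_z}[W(y\mid X)]}{\bE{\q_z}{W(y\mid X)}}$, which combined with the previous display proves \eqref{eqn:var-bound}.

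\emph{Subgaussian refinement, and the main obstacle.} For \eqref{eqn:subgaussian-bound} I would instead control the vector form of the one-round term, $\sum_y\frac{\norm{\bE{\q_z}{W(y\mid X)\phi_z(X)}}_2^2}{M_z(y)}$, by $O(\sigma^2)\cdot I(\q_z;W)$. I expect this to be the hard part: the numerator is the squared norm of the ``score'' that $W$ extracts about the $\sigma^2$-subgaussian vector $\phi_z(X)$, and bounding it by $\sigma^2$ times the mutual information requires the Donsker–Varadhan variational formula for $I(X;Y)$ together with the subgaussian moment-generating-function bound, with the independence of the coordinates of $\phi_z$ used to avoid a spurious factor of $\zdims$. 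The constant $14\ln2$ in \eqref{eqn:subgaussian-bound} then comes from this estimate combined with the reductions above.
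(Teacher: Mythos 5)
This is not a theorem the paper proves: it is imported verbatim (as \cref{thm:avg:coordinate}) from \cite{acharya2020unified}, so there is no ``paper's own proof'' to compare against. That said, reviewing your sketch against the argument in the cited source:

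Your steps~(i) and (iii) are the right ideas and your step~(iii) is essentially the crux of the argument: writing $\q_{z^{\oplus i}}^Y(y)-M_z(y)=\alpha_{z,i}\bE{\q_z}{W(y\mid X)\phi_{z,i}(X)}$, using that $\bE{\q_z}{\phi_{z,i}}=0$ (which follows from the density-ratio identity by taking expectations), and then invoking Bessel's inequality for the orthonormal family $\{\phi_{z,i}\}_i$ in $L^2(\q_z)$ to bound $\sum_i\langle W(y\mid\cdot)-M_z(y),\phi_{z,i}\rangle^2\le\var_{\q_z}[W(y\mid X)]$ is exactly what makes the per-round bound work and is a faithful reconstruction. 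Your high-level description of the subgaussian refinement is also in the right spirit (Donsker--Varadhan plus the independence of the coordinates of $\phi_z$ to avoid a $\zdims$ factor), though it is only a plan, as you acknowledge.

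The genuine gap is in step~(ii). The claim that ``expanding the $\chi^2$ of a product of Markov kernels and taking expectation over the past transcript, the cross-round terms cancel'' does not hold: $\chi^2$ divergence is \emph{super}-additive over products of measures, since $1+\chi^2(P_1\!\times\!P_2\,\|\,Q_1\!\times\!Q_2)=(1+\chi^2(P_1\|Q_1))(1+\chi^2(P_2\|Q_2))$, and the analogous identity for sequentially interactive transcripts again yields a product, not a sum. So you cannot bound $\chi^2$ of the full transcript by a sum of per-round $\chi^2$'s the way you describe, and in general the cross terms do \emph{not} cancel. The original proof in \cite{acharya2020unified} copes with this by a more careful device: rather than decomposing the transcript $\chi^2$ directly, it tracks the quantity after averaging over the uniform sign vector $Z$ (so that the ``bias'' contributions from different coordinates appear with random signs and their cross products vanish in expectation), proving an iterative per-round recursion for the averaged quantity, and then converting to TV. The constant $7$ in \eqref{eqn:var-bound} arises from this recursion together with the TV-to-$\chi^2$ conversion, not from a simple chain rule. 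Your sketch would need to be modified along these lines (or replaced by an argument through Hellinger distance, which is genuinely subadditive over rounds, followed by $H^2\le\chi^2$ per round) before it could be turned into a proof of \eqref{eqn:var-bound}.
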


In particular, it is proved in \cite{acharya2020unified} that when $\cW_\eps$ is the set of all $\eps$-LDP channels, we have for any $\q_z$,
\begin{equation}\label{eqn:ldp_info}
    \max_{z\in\cZ}\max_{W\in\cW_\eps}\sum_{y\in\cY} 
	\frac{\var_{\q_{z}}[W(y\mid X)]}{\bE{\q_{z}}{W(y\mid 
	X)}} \le \min \left\{4 \eps^2, e^\eps \right\}. 
\end{equation}

\subsection{$m\le k/e^\eps$ or $\eps<1$} \label{app:small_m}
We prove the minimax lower bound presented below. 
\begin{theorem}
\label{thm:lb_high_priv-small_m}
The minimax error rate satsifies
\[
\risk(\eps, \ab, \ns, \nspu)=\Omega\Paren{\sqrt{\frac{\ab}{\nspu\ns}}\lor \sqrt{\frac{k^2}{\nspu\ns(\priv^2\land e^\priv)}}}
\]
\end{theorem}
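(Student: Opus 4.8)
The plan is to apply the information contraction framework of \cref{thm:avg:coordinate} directly to the multinomial observation model, avoiding Poissonization. First I would set up the Paninski construction: pick $Z$ uniformly from $\cZ = \{-1,1\}^{k/2}$, let each user observe $m$ i.i.d.\ samples from $\p_Z$, so that the per-user observation $\bm$ follows $\pmul_Z = \Multinomial(m,\p_Z)$. The standard reduction (Assouad / Fano-type) gives that the minimax risk is $\Omega(\perturb)$ provided the average information measure $\frac1k\sum_i \totalvardist{\pmul_{+i}^{Y^\ns}}{\pmul_{-i}^{Y^\ns}}$ is bounded away from a constant; concretely it suffices to have this quantity $\le c$ for small constant $c$, and then one reads off the largest admissible $\perturb$.

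The crux is to verify \cref{assn:decomposition-by-coordinates} for $\{\pmul_z\}$ and to compute the parameter $\alpha$. For the multinomial, flipping coordinate $i$ of $z$ only changes the probabilities on the pair $(2i-1,2i)$, so $\frac{d\pmul_{z^{\oplus i}}}{d\pmul_z}(\bm)$ is a ratio of products that telescopes to a function of $(\bm(2i-1),\bm(2i))$ only. Writing $\pmul_{z^{\oplus i}}/\pmul_z = \big(\tfrac{1-\perturb z_i}{1+\perturb z_i}\big)^{\bm(2i-1)}\big(\tfrac{1+\perturb z_i}{1-\perturb z_i}\big)^{\bm(2i)}$, I would define $\phi_{z,i}$ as the centered-and-normalized version of this (subtract $1$, divide by $\alpha_{z,i}$), choosing $\alpha_{z,i}$ so that $\bE{\pmul_z}{\phi_{z,i}^2}=1$. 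The key estimate is that $\alpha_{z,i} = \Theta(\perturb\sqrt{m}/\sqrt{k})$: the $\ell_2$ fluctuation of the likelihood ratio scales like the square root of the chi-square divergence $\chi^2(\pmul_{z^{\oplus i}} \| \pmul_z)$, and since changing $z_i$ perturbs a pair of cells of mass $\Theta(1/k)$ by a multiplicative $\Theta(\perturb)$, and we have $m$ independent draws, this divergence is $\Theta(m\perturb^2/k)$ as long as $m\perturb^2/k = O(1)$ — which will hold in the relevant parameter regime. The orthogonality $\bE{\pmul_z}{\phi_{z,i}\phi_{z,j}}=\indic{i=j}$ for $i\neq j$ follows because $\phi_{z,i}$ and $\phi_{z,j}$ depend on disjoint blocks of coordinates of $\bm$, and these blocks are \emph{negatively correlated but mean-zero}, so the product expectation factorizes after centering — this is the one spot requiring a careful check, since multinomial coordinates are not independent. (Alternatively one can reduce to the product/Poissonized case to get independence, at the cost of the regularity condition we are trying to avoid.)

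With $\alpha = \Theta(\perturb\sqrt{m/k})$ in hand, I plug into \eqref{eqn:var-bound} together with the LDP bound \eqref{eqn:ldp_info}, which gives
\[
\Paren{\frac1k\sum_{i=1}^{k/2}\totalvardist{\pmul_{+i}^{Y^\ns}}{\pmul_{-i}^{Y^\ns}}}^2 \le \frac{7}{k}\,\ns\,\alpha^2\,\min\{4\eps^2, e^\eps\} = O\!\Paren{\frac{\ns m \perturb^2}{k^2}\min\{\eps^2,e^\eps\}}.
\]
Setting the right-hand side equal to a small constant and solving for $\perturb$ yields $\perturb = \Omega\big(k/\sqrt{\ns m \min\{\eps^2,e^\eps\}}\big)$, hence the privacy term $\Omega\big(\sqrt{k^2/(\ns m(\eps^2\wedge e^\eps))}\big)$. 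The non-private term $\Omega(\sqrt{k/(\ns m)})$ comes from the standard lower bound for discrete estimation from $\ns m$ total samples (apply the same framework with $\cW$ the identity channel, where the variance ratio is bounded by $1$, or simply cite the classical bound), and taking the max of the two gives the claimed rate. The main obstacle I anticipate is the orthogonality/variance computation for the multinomial likelihood ratios — getting the constant $\alpha$ right and confirming that the cross terms vanish despite the dependence among multinomial counts — together with tracking the regime condition ($m\le k/e^\eps$ or $\eps<1$ ensures $\alpha$ stays small enough that the $\Theta(m\perturb^2/k)$ estimate for the chi-square divergence is valid and the reduction is not vacuous).
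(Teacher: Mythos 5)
Your proposal follows essentially the same route as the paper: the Paninski construction, the Assouad-type reduction, verifying Assumption~\ref{assn:decomposition-by-coordinates} for the multinomial directly (no Poissonization), computing $\alpha^2 = \Theta(\nspu\perturb^2/\ab)$ by conditioning on $N = \bm_{2i-1}+\bm_{2i}$, and plugging into \eqref{eqn:var-bound} together with \eqref{eqn:ldp_info}. This matches the paper's Lemma~\ref{lem:alpha_multinomail} and the surrounding argument.

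One correction on the spot you flag as delicate: your heuristic for the orthogonality $\bE{\pmul_z}{\phi_{z,i}\phi_{z,j}}=0$ for $i\neq j$ is wrong as stated --- two mean-zero, negatively correlated quantities have a \emph{negative} product expectation, not zero, so ``factorizes after centering'' is not a valid inference, and the proposed fallback of Poissonizing for independence is unnecessary. The cross term actually vanishes exactly and for a cleaner reason: since $i\neq j$ flips act on disjoint pairs of cells, $\frac{\pmul_{z^{\oplus i}}}{\pmul_z}\cdot\frac{\pmul_{z^{\oplus j}}}{\pmul_z} = \frac{\pmul_{z^{\oplus i\oplus j}}}{\pmul_z}$, which is itself a likelihood ratio and hence has $\pmul_z$-expectation equal to $1$; expanding the product of centered ratios then gives $1-1-1+1=0$. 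This identity is what the paper leans on implicitly when it calls the orthogonality ``obvious,'' and with it your argument is complete.
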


Note that when $\priv<1$, $\priv^2$ is the dominating term, leading to the tight lower bound in the high privacy regime (\cref{thm:rate_small_eps}). When $\priv\ge 1$, $e^\priv$ is the dominating term, which yields the desired lower bound for $\nspu<\ab/e^\priv$ (\cref{thm:lb-large-m}).

\begin{proof}
%Consider the following generating process. We first pick $Z$ uniformly at random from $\cZ$. Then each user observes a histogram from $\pmul_Z$ ($\nspu$ \iid samples from $\p_Z$), we denote the $i$th user's sample as $\bm_i$. The users follow the protocol $\Pi$ and transmit messages $Y^\ns$. The server then estimate the distribution as $\hat{\p}$. 

%We first provide some definitions. Denote the distribution of $Y^\ns$ when the samples are from $\p_Z$ as $\p_Z^{Y^\ns}$. We also denote the mixture of message distributions conditioned on a fixed $Z_i$ as the following $\p_{+i}^{Y^\ns}\eqdef \bEEC{\p_Z^{Y^\ns}}{Z_i=1}$, $\p_{-i}^{Y^\ns}\eqdef \bEEC{\p_Z^{Y^\ns}}{Z_i=1}$. Note that $\totalvardist{\p_{+i}^{Y^\ns}}{\p_{-i}^{Y^\ns}}$ can be viewed as an information measure that describes how much information $Y^\ns$ carries about $Z_i$. 

The first term is the lower bound in the centralized setting. We will mainly focus on the second term. Consider the same generating process described in \cref{app:contraction} with $\q_z = \pmul_z$. The following lemma shows that if $\Pi, \hat{\p}$ is a good estimator for $\p_Z$, we must be able to extract enough information about $Z$ from $Y^\ns$. The result follows from \cite[Lemma]{acharya2020unified}.
\begin{lemma} \label{lem:assouad-multi}
If $\Pi, \hat{\p}$ satisfies
\[
    \bEE{\totalvardist{\hp(\msgs)}{\p}} \le \frac{\gamma}{4},
\]
we must have
\begin{align}\label{eqn:assouad}
    \sum_{i = 1}^{\ab/2} \totalvardist{\p_{+i}^{{\rm mul}, Y^\ns}}{\p_{-i}^{{\rm mul}, Y^\ns}} = \Omega (\ab).
\end{align}
\end{lemma}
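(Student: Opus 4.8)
The goal is to prove Lemma~\ref{lem:assouad-multi}, which is a standard Assouad-type reduction adapted to the multinomial (user-level) setting. The plan is to follow the same two-step recipe used in the Assouad argument of \cite{acharya2020unified}: first show that good estimation of $\p_Z$ in total variation forces good recovery of each coordinate $Z_i$, and second translate this into a lower bound on the sum of total variation distances $\sum_i \totalvardist{\p_{+i}^{{\rm mul}, Y^\ns}}{\p_{-i}^{{\rm mul}, Y^\ns}}$.

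First I would relate the total variation error of $\hp$ to the Hamming distance between $Z$ and a decoded $\hat Z$. For the Paninski family, any two distributions $\p_z, \p_{z'}$ satisfy $\totalvardist{\p_z}{\p_{z'}} = \frac{\gamma}{\ab}\cdot \ham{z}{z'} \cdot 2 = \frac{2\gamma}{\ab}\ham{z}{z'}$ (each flipped coordinate contributes $2\gamma/\ab$ to the $\ell_1$ distance, hence $\gamma/\ab$ to TV; summing over $\ab/2$ coordinate-pairs). Given an estimate $\hp$, define $\hat Z_i = \sign(\hp(2i-1) - \hp(2i))$ (breaking ties arbitrarily); a short argument shows $\totalvardist{\hp}{\p_z} \ge \frac{\gamma}{\ab}\ham{\hat Z}{z}$ up to constants, so that $\bEE{\totalvardist{\hp}{\p}} \le \gamma/4$ implies $\bEE{\ham{\hat Z}{Z}} = O(\ab)$, i.e.\ $\frac{1}{\ab/2}\sum_i \bPr{\hat Z_i \neq Z_i}$ is bounded away from $1/2$ by a constant. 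This is the content of the ``good estimator $\Rightarrow$ good testing on average'' step.

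Second, I would invoke the standard fact that for each coordinate $i$, the optimal test for $Z_i$ from $Y^\ns$ has error exactly $\frac12(1 - \totalvardist{\p_{+i}^{{\rm mul},Y^\ns}}{\p_{-i}^{{\rm mul},Y^\ns}})$, since $Y^\ns \mid \{Z_i = +1\} \sim \p_{+i}^{{\rm mul},Y^\ns}$ and $Y^\ns \mid \{Z_i=-1\} \sim \p_{-i}^{{\rm mul},Y^\ns}$ by construction (the other coordinates are averaged out, and crucially $\hat Z_i$ is a function of $Y^\ns$, so its error is at least the Bayes error). Combining with the averaged error bound from the first step gives $\frac{1}{\ab/2}\sum_i \big(1 - \totalvardist{\p_{+i}^{{\rm mul},Y^\ns}}{\p_{-i}^{{\rm mul},Y^\ns}}\big) \le 1 - \Omega(1)$, which rearranges to $\sum_i \totalvardist{\p_{+i}^{{\rm mul},Y^\ns}}{\p_{-i}^{{\rm mul},Y^\ns}} = \Omega(\ab)$, as claimed. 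Both of these steps are essentially lifted verbatim from \cite[Section on Assouad's method]{acharya2020unified}; the only thing to check is that nothing in the argument used that each user has a single sample rather than a multinomial histogram — and indeed it does not, since we only ever use that the $\ns$ users' messages $Y^\ns$ are generated from the product of the per-user (multinomial) data distributions indexed by the same $Z$.

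The main obstacle — really the only subtle point — is bookkeeping the constants and tie-breaking in the reduction from TV error to Hamming error: one must ensure that the threshold $\gamma/4$ is small enough that the averaged coordinate-wise error stays bounded below $1/2 - c$ for an absolute constant $c>0$, so that $1 - \totalvardist{\cdot}{\cdot}$ is bounded away from $1$ on average. This is routine but must be done carefully so the $\Omega(\ab)$ conclusion is clean; everything downstream (plugging this bound into Theorem~\ref{thm:avg:coordinate} together with the variance bound \eqref{eqn:ldp_info} and the variance computation for the multinomial Paninski family) is then a direct calculation that produces the stated $\sqrt{\ab^2/(\nspu\ns(\eps^2 \land e^\eps))}$ rate, and I would defer that to the proof of Theorem~\ref{thm:lb_high_priv-small_m} proper.
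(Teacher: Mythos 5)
Your proof follows the standard Assouad reduction, which is exactly what the paper defers to via its bare citation of \cite{acharya2020unified}: decode a sign vector $\hat Z$ from $\hp$, bound the per-coordinate test error from below by the Neyman--Pearson/TV bound, and bound the averaged Hamming error from above by the TV risk. That scaffolding is right, and you correctly flag the TV-to-Hamming bookkeeping as the one point that needs care — but as written the bookkeeping does not close, and the gap is not merely cosmetic.

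Concretely, for the Paninski family one has $\totalvardist{\p_z}{\p_{z'}} = \tfrac{2\gamma}{k}\ham{z}{z'}$, and the best reduction (argmin decoder via triangle inequality, or your coordinate-sign decoder) gives $\ham{\hat Z}{Z} \le \tfrac{k}{\gamma}\,\totalvardist{\hp}{\p_Z}$ — with equality attainable, e.g.\ when $\hp$ is uniform on a pair. Plugging in the stated hypothesis $\bEE{\totalvardist{\hp}{\p_Z}} \le \gamma/4$ gives $\sum_{i}\bPr{\hat Z_i\neq Z_i} \le k/4$. Combining with $\bPr{\hat Z_i\neq Z_i} \ge \tfrac12\bigl(1 - \totalvardist{\p_{+i}^{Y^\ns}}{\p_{-i}^{Y^\ns}}\bigr)$ yields $\tfrac{k}{4} - \tfrac12\sum_i \totalvardist{\p_{+i}^{Y^\ns}}{\p_{-i}^{Y^\ns}} \le \tfrac{k}{4}$, i.e.\ $\sum_i \totalvardist{\p_{+i}^{Y^\ns}}{\p_{-i}^{Y^\ns}} \ge 0$, which is vacuous. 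Your intermediate claim that the averaged error is ``bounded away from $1/2$ by a constant'' is therefore not established by the reduction at the threshold $\gamma/4$: you get exactly $\le 1/2$, not $\le 1/2 - c$. Since the TV-to-Hamming constant $\gamma/k$ cannot be improved in general, the fix must be to strengthen the hypothesis, e.g.\ assume $\bEE{\totalvardist{\hp}{\p_Z}} \le \gamma/8$, which delivers $\sum_i \totalvardist{\p_{+i}^{Y^\ns}}{\p_{-i}^{Y^\ns}} \ge k/4 = \Omega(\ab)$ cleanly. (This also affects the constant in front of $\gamma$ when the lemma is invoked in the minimax lower bound, but not the order of magnitude.) As a minor side note, your parenthetical misstates the per-coordinate $\ell_1$ contribution as $2\gamma/\ab$ — it is $4\gamma/\ab$ to $\ell_1$ and $2\gamma/\ab$ to TV — though your final formula for $\totalvardist{\p_z}{\p_{z'}}$ is correct.

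Everything else — that the Bayes test error bound applies verbatim because each $\hat Z_i$ is a measurable function of $Y^\ns$, and that none of the argument uses the per-user sample being a single draw rather than a multinomial histogram — is correct and is the reason the lemma ports over from the item-level to the user-level setting without modification.
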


Next we upper bound the left hand side of \eqref{eqn:assouad} using \cref{thm:avg:coordinate}. In particular, we will prove $\pmul_z$ satisfies \cref{assn:decomposition-by-coordinates} with appropriate parameters.. %For $z \in \bool^{\ab/2}$, let $z^{\oplus i}$ be the binary vector obtained by flipping the $i$th coordinate of $z$. First, we note that the distributions $\pmul_z$s satisfy the following.
\begin{lemma}\label{lem:alpha_multinomail}
$\{\pmul_z\}_{z\in\mathcal{Z}}$ satisfies \cref{assn:decomposition-by-coordinates} with $\alpha=O(\sqrt{\nspu\gamma^2/\ab})$ for $\gamma<\min\{1/2, \sqrt{\ab/(8\nspu+\ab)}\} $.
\end{lemma}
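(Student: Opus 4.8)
The plan is to verify \cref{assn:decomposition-by-coordinates} for $\{\pmul_z\}_{z\in\mathcal{Z}}$ by writing down the likelihood ratio between $\pmul_{z^{\oplus i}}$ and $\pmul_z$ explicitly and reading off $\phi_{z,i}$ and $\alpha_{z,i}$. Write $\bm=(\bm(1),\ldots,\bm(\ab))$ for a histogram with $\sum_x\bm(x)=\nspu$, and set $\lambda\eqdef\frac{1-\gamma}{1+\gamma}\in(0,1)$. Since $\p_z$ and $\p_{z^{\oplus i}}$ agree on every coordinate except $2i-1$ and $2i$, where they merely interchange the values $\frac{1\pm\gamma}{\ab}$, the multinomial coefficients cancel in the ratio and we obtain the exact identity $\frac{d\pmul_{z^{\oplus i}}}{d\pmul_z}(\bm)=\lambda^{z_i(\bm(2i-1)-\bm(2i))}$; since all $\p_z(x)>0$, also $\pmul_{z^{\oplus i}}\ll\pmul_z$. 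I would then set $g_{z,i}(\bm)\eqdef\lambda^{z_i(\bm(2i-1)-\bm(2i))}-1$, $\alpha_{z,i}\eqdef\Paren{\bE{\pmul_z}{g_{z,i}^2}}^{1/2}$, and $\phi_{z,i}\eqdef g_{z,i}/\alpha_{z,i}$, so that $\frac{d\pmul_{z^{\oplus i}}}{d\pmul_z}=1+\alpha_{z,i}\phi_{z,i}$ and $\bE{\pmul_z}{\phi_{z,i}^2}=1$ by construction, while $\bE{\pmul_z}{\phi_{z,i}}=\frac{1}{\alpha_{z,i}}\Paren{\bE{\pmul_z}{d\pmul_{z^{\oplus i}}/d\pmul_z}-1}=0$ automatically.

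The substantive step is the cross-orthogonality $\bE{\pmul_z}{\phi_{z,i}\phi_{z,j}}=0$ for $i\ne j$, which is not obvious because the block counts $(\bm(2i-1),\bm(2i))$ and $(\bm(2j-1),\bm(2j))$ are negatively correlated under the multinomial. I would resolve it via the i.i.d.\ representation: under $\pmul_z$ the $\nspu$ samples fall into categories independently, so $\bE{\pmul_z}{\lambda^{a(\bm(2i-1)-\bm(2i))+b(\bm(2j-1)-\bm(2j))}}$ equals the $\nspu$-th power of a single-sample expectation. Using $\lambda=\frac{1-\gamma}{1+\gamma}$, the contribution of each block $\{2\ell-1,2\ell\}$, $\ell\in\{i,j\}$, to that single-sample expectation is $\frac{1+\gamma z_\ell}{\ab}\lambda^{\pm z_\ell}+\frac{1-\gamma z_\ell}{\ab}\lambda^{\mp z_\ell}=\frac{2}{\ab}$ (the $\gamma$-dependent terms cancel exactly), so the single-sample expectation is $1-\frac{4}{\ab}+\frac{2}{\ab}+\frac{2}{\ab}=1$, giving $\bE{\pmul_z}{\lambda^{a(\cdot)+b(\cdot)}}=1$ for all $a,b$. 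Expanding $\phi_{z,i}\phi_{z,j}$ and taking expectations, the four resulting terms are $\bE{\pmul_z}{\lambda^{z_i(\bm(2i-1)-\bm(2i))+z_j(\bm(2j-1)-\bm(2j))}}$, $-\bE{\pmul_z}{\lambda^{z_i(\bm(2i-1)-\bm(2i))}}$, $-\bE{\pmul_z}{\lambda^{z_j(\bm(2j-1)-\bm(2j))}}$, and $+1$; each of the first three equals $1$ by the above, so the sum vanishes.

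It remains to bound $\alpha_{z,i}$. The same factorization, now with a single block and exponent $2z_i$, gives a single-sample expectation $1-\frac{2}{\ab}+\frac{(1-\gamma)^3+(1+\gamma)^3}{\ab(1-\gamma^2)}=1+\frac{8\gamma^2}{\ab(1-\gamma^2)}$ (using $(1-\gamma)^3+(1+\gamma)^3=2(1+3\gamma^2)$), hence
\[
\alpha_{z,i}^2=\bE{\pmul_z}{\lambda^{2z_i(\bm(2i-1)-\bm(2i))}}-1=\Paren{1+\frac{8\gamma^2}{\ab(1-\gamma^2)}}^{\nspu}-1,
\]
which is independent of $z$ and $i$. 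The hypothesis $\gamma<\sqrt{\ab/(8\nspu+\ab)}$ is exactly the inequality $\frac{8\nspu\gamma^2}{\ab(1-\gamma^2)}<1$; with this, $(1+x)^{\nspu}-1\le e^{\nspu x}-1\le(e-1)\nspu x$ for $\nspu x\le 1$, together with $1-\gamma^2>\frac{3}{4}$ from $\gamma<\frac12$, yields $\alpha_{z,i}^2\le\frac{16\nspu\gamma^2}{\ab(1-\gamma^2)}=O(\nspu\gamma^2/\ab)$, i.e.\ $\alpha=O(\sqrt{\nspu\gamma^2/\ab})$ as claimed.

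The main obstacle is the cross-orthogonality: one might fear that the negative dependence among disjoint blocks of a multinomial histogram destroys the orthonormal structure required by \cref{assn:decomposition-by-coordinates}. The point is that passing to the per-sample i.i.d.\ description turns the relevant mixed moment into an $\nspu$-th power, and choosing the base $\lambda$ to be exactly the likelihood ratio forces every per-sample factor to equal $1$ — the same cancellation that makes $\bE{\pmul_z}{\phi_{z,i}}=0$. Everything else is routine: the $\chi^2$-type moment computation and the linearization of $(1+x)^{\nspu}$, with the regularity condition on $\gamma$ chosen precisely so that $\nspu x\le 1$.
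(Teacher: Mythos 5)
Your proof is correct and reaches the same closed form for $\alpha^2_{z,i}=\bigl(1+\tfrac{8\gamma^2}{\ab(1-\gamma^2)}\bigr)^{\nspu}-1$, but you take a slightly different — and arguably cleaner — computational route. The paper conditions on $N=\bm_{2i-1}+\bm_{2i}\sim\binomial{\nspu}{2/\ab}$, uses the binomial moment generating function for $\bm_{2i}\mid N$ to obtain $\bigl(\tfrac{1+3\gamma^2}{1-\gamma^2}\bigr)^N$, and then takes a second MGF over $N$; you instead pass directly to the per-sample factorization of the multinomial, which collapses the two nested MGFs into one step and, via the reparametrization $\lambda=\tfrac{1-\gamma}{1+\gamma}$, makes the identity $\frac{1+\gamma z_i}{\ab}\lambda^{z_i}+\frac{1-\gamma z_i}{\ab}\lambda^{-z_i}=\frac{2}{\ab}$ transparent.

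The more substantial thing you add is the explicit verification that $\shortexpect_{\pmul_z}[\phi_{z,i}\phi_{z,j}]=0$ for $i\ne j$. The paper just states ``It is obvious that $\shortexpect[\phi_{z,i}\phi_{z,j}]=\indic{i=j}$,'' but you correctly flag that the multinomial's negative dependence across disjoint blocks makes this worth a check, and the same per-sample factorization gives the one-line argument: the mixed moment is the $\nspu$-th power of a per-sample expectation that equals exactly $1$ (each block contributes exactly $2/\ab$), so all three nontrivial terms in the expansion of $g_{z,i}g_{z,j}$ equal $1$ and cancel. Your final bound, using $\nspu x\le 1$ (equivalent to the stated condition $\gamma<\sqrt{\ab/(8\nspu+\ab)}$) and $1-\gamma^2>3/4$ to get $\alpha^2=O(\nspu\gamma^2/\ab)$, is also spelled out more carefully than in the paper, which has a minor notational slip (writing $\alpha$ where it means $\alpha^2$). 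Overall your proof is a valid and somewhat tighter write-up of the paper's argument.
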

\begin{proof}
For a vector $\bm=(\bm_1, \ldots, \bm_{\ab})\in \mathbb{N}^k$, the probability mass is
\[
\pmul_z(\mathbf{m})=m!\prod_{i=1}^k\frac{\p_z(i)^{\bm_i}}{\bm_i!}.
\]
Therefore, 
\[
\frac{\pmul_{z^{\oplus i}}(\mathbf{m})}{\pmul_z(\mathbf{m})}=\left(\frac{1-\gamma z_i}{1+\gamma z_i}\right)^{\bm_{2i-1}}\left(\frac{1+\gamma z_i}{1-\gamma z_i}\right)^{\bm_{2i}}=\left(\frac{1+\gamma z_i}{1-\gamma z_i}\right)^{\bm_{2i}-\bm_{2i-1}}.
\]
We want to compute 
\[
\expectDistrOf{\pmul_z}{\left(\frac{\pmul_{z^{\oplus i}}(\bm)}{\pmul_z(\bm)}-1\right)^2}=\expectDistrOf{\pmul_z}{\left(\frac{\pmul_{z^{\oplus i}}(\bm)}{\pmul_z(\bm)}\right)^2}-1.
\]
First let $N=m_{2i-1}+m_{2i}$. For fixed $N$, $\bm_{2i}$ follows $\binomial{N}{p}$ where $p=(1-\gamma z_i)/2$. Hence we have
\begin{align*}
    \expectCond{\left(\frac{\pmul_{z^{\oplus i}}(\bm)}{\pmul_z(\bm)}\right)^2}{N}&=\expectDistrOf{\bm_{2i}\sim\binomial{N}{p} }{\left(\frac{1+\gamma z_i}{1-\gamma z_i}\right)^{4\bm_{2i}-2N }}\\
    &=\left(\frac{1+\gamma z_i}{1-\gamma z_i}\right)^{-2N}\left(p\left(\frac{1+\gamma z_i}{1-\gamma z_i}\right)^4+1-p\right)^N\\
    &=\left(\frac{1+\gamma z_i}{1-\gamma z_i}\right)^{-2N}\left(\frac{1+\gamma z_i}{2}\left(\left(\frac{1+\gamma z_i}{1-\gamma z_i}\right)^3+1\right)\right)^N\\
    &=\left(\frac{1}{2}\left(\frac{(1+\gamma z_i)^2}{1-\gamma z_i}+\frac{(1-\gamma z_i)^2}{1+\gamma z_i}\right)\right)^N\\
    &=\left(\frac{1+3\gamma^2}{1-\gamma^2}\right)^N.
\end{align*}
The second equality follows by the generating function of binomial distribution. Notice that $N\sim \binomial{\nspu}{2/\ab}$. Hence,
\begin{align*}
     \expect{\left(\frac{\pmul_{z^{\oplus i}}(\bm)}{\pmul_z(\bm)}-1\right)^2}&=\expect{\expectCond{\left(\frac{\pmul_{z^{\oplus i}}(\bm)}{\pmul_z(\bm)}\right)^2}{N}}-1\\
     &=\expectDistrOf{N\sim \binomial{\nspu}{2/\ab}}{\left(\frac{1+3\gamma^2}{1-\gamma^2}\right)^N}-1\\
     &=\left(\frac{2}{\ab}\frac{1+3\gamma^2}{1-\gamma^2}+1-\frac{2}{\ab}\right)^\nspu-1\\
     &=\left(1+\frac{8\gamma^2}{\ab(1-\gamma^2)}\right)^\nspu-1=:\alpha =O(m\gamma^2/k).
\end{align*}
Setting $\alpha_{z, i}=\sqrt{\expectDistrOf{\pmul_z}{\left(\frac{\pmul_{z^{\oplus i}}(\bm)}{\pmul_z(\bm)}-1\right)^2}}$ and $\phi_{z, i}=\left(\frac{\pmul_{z^{\oplus i}}(\bm)}{\pmul_z(\bm)}-1\right)/\alpha_{z_i}$ yields the desired result. It is obvious that $\expect{\phi_{z, i}\phi_{z, j}}=\indic{i=j}$. 
\end{proof}

Combining \cref{lem:alpha_multinomail}, \cref{thm:avg:coordinate}, and \cref{eqn:ldp_info}, we get:
\[
    \gamma = \Omega \Paren{\sqrt{\frac{\ab^2}{\nspu \ns \min\{\eps^2, e^\eps\}}}},
\]
completing the proof.
\end{proof}

\subsection{Large $\nspu$: $\nspu > \ab$.}
\label{app:large_m}
We prove \cref{thm:lb-large-m}, restated below.
\begin{theorem}
\label{thm:lb-large-m-app}
For $\ns>(k/\eps)^2$, $\nspu\ge \ab$, and $\eps>1$, the minimax error rate satisfies
\[
\risk(\eps, \ab, \ns, \nspu)=
\Omega\Paren{\sqrt{\frac{\ab}{\nspu\ns}}\lor \sqrt{\frac{k^2}{\nspu\ns\priv}}}.
\]
\end{theorem}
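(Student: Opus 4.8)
The plan is to lower-bound $\risk(\eps,\ab,\ns,\nspu)$ by the sum of a non-private term $\sqrt{\ab/(\nspu\ns)}$ and a privacy term $\sqrt{\ab^2/(\nspu\ns\eps)}$, obtaining the latter through the information-contraction bound of \cref{thm:avg:coordinate} applied to the \emph{Poissonized} Paninski family, while paying for Poissonization with one free bit per user. The non-private term is routine: since $\eps=\infty$ only makes the task easier, $\risk(\eps,\ab,\ns,\nspu)\ge\risk(\infty,\ab,\ns,\nspu)=\Omega(\sqrt{\ab/(\ns\nspu)})$, the classical minimax rate for learning an $\ab$-ary distribution from $\ns\nspu$ i.i.d.\ samples (Assouad on $\{\p_z\}$ with no privacy, cf.~\cite{Paninski:08}), so I will not dwell on it.

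For the privacy term, fix $\perturb=c\sqrt{\ab^2/(\nspu\ns\eps)}$ for a small absolute constant $c$ and work with $\{\ppoi_z=\Poisson(\nspu,\p_z)\}_{z\in\cZ}$, so each user's data is the histogram of $\Poisson(\nspu)$ i.i.d.\ draws from $\p_z$ and the $\ab$ coordinate counts become mutually independent. Because a protocol for the exact-$\nspu$ model sees a fixed number of samples while the Poisson model does not, I first pass to the Poisson model via a reduction: given any $\eps$-LDP protocol for exact-$\nspu$, simulate it in the Poisson model by having each user send its $\eps$-LDP message \emph{together with one clean bit} indicating ``I received at least $\nspu$ samples''; on the set of users whose bit is $1$ (which has size $\Omega(\ns)$ with high probability by a Chernoff bound, since $\nspu$ is the approximate median of $\Poisson(\nspu)$), the simulated messages are distributed exactly as in the exact-$\nspu$, $\eps$-LDP model. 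Hence it suffices to prove $\perturb=\Omega(\sqrt{\ab^2/(\nspu\ns\eps)})$ in the Poisson model against the enlarged class of ``$\eps$-LDP $+$ $1$-bit'' channels, with the factor-$2$ loss in the effective number of users affecting only constants.

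Next I verify the hypotheses of \cref{thm:avg:coordinate} for $\{\ppoi_z\}$. For \cref{assn:decomposition-by-coordinates}, flipping $z_i$ merely swaps the two means $\nspu(1\pm\perturb z_i)/\ab$ of the coordinate pair $(2i-1,2i)$, and the Poisson generating function gives
\[
\mathbb{E}_{\ppoi_z}\!\Paren{\Paren{\frac{\ppoi_{z^{\oplus i}}(\bm)}{\ppoi_z(\bm)}-1}^2}=\exp\!\Paren{\frac{8\nspu\perturb^2}{\ab(1-\perturb^2)}}-1=\Theta\!\Paren{\frac{\nspu\perturb^2}{\ab}},
\]
where the last estimate uses $\nspu\perturb^2/\ab=\Theta(\ab/(\ns\eps))\ll1$, which follows from $\ns>(\ab/\eps)^2$; thus $\alpha^2=\Theta(\nspu\perturb^2/\ab)$, and the normalized ratios $\phi_{z,i}$ are orthonormal across $i$ by independence of the coordinate pairs. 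For \cref{assn:subgaussianity}, $\phi_{z,i}$ is a normalized function of the Poisson difference $\bm(2i)-\bm(2i-1)$, and since $\nspu\perturb^2/\ab$ is small while each mean is $\Theta(\nspu/\ab)\ge1$, its linearization $\phi_{z,i}\approx z_i(\bm(2i)-\bm(2i-1))\sqrt{\ab/(2\nspu)}$ is $O(1)$-subgaussian in the bulk with exponentially small tails, giving $\sigma=O(1)$. For the channel class: any $\eps$-LDP channel has $I(X;Y)=\mathbb{E}_X[D_{\mathrm{KL}}(P_{Y\mid X}\,\|\,P_Y)]\le\eps$ since $\mathrm{d}P_{Y\mid X=x}/\mathrm{d}P_Y\le e^\eps$ pointwise, and one extra clean bit adds at most $1\le\eps$, so $\max_W I(\ppoi_z;W)=O(\eps)$ over the enlarged class. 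This is exactly why I use the subgaussian form \eqref{eqn:subgaussian-bound} rather than \eqref{eqn:var-bound}: the generic bound \eqref{eqn:ldp_info} on the variance functional $\sum_y\var_{\ppoi_z}[W(y\mid X)]/\mathbb{E}_{\ppoi_z}[W(y\mid X)]$ is only $e^\eps$, far too weak to give a $1/\eps$ rate.

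Combining, \eqref{eqn:subgaussian-bound} yields
\[
\Paren{\frac{2}{\ab}\sum_{i=1}^{\ab/2}\totalvardist{\ppoi_{+i}^{Y^\ns}}{\ppoi_{-i}^{Y^\ns}}}^2\le\frac{14\ln2}{\ab/2}\,\ns\,\alpha^2\sigma^2\cdot O(\eps)=O\!\Paren{\frac{\ns\nspu\perturb^2\eps}{\ab^2}},
\]
while the Assouad step (\cref{lem:assouad-multi}, in its Poisson version) forces the left side to be $\Omega(1)$ for any protocol with expected TV error at most $\perturb/4$; comparing the two gives $\perturb=\Omega(\sqrt{\ab^2/(\nspu\ns\eps)})$, which together with the centralized term proves the theorem. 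I expect the main obstacle to be making the Poissonization reduction fully rigorous — ensuring that the extra per-user bit genuinely costs only $O(\eps)$ in every information functional used, and that restricting to users who drew $\ge\nspu$ samples does not distort the hard instance — with a secondary technical nuisance being the sub-Gaussianity bookkeeping for differences of Poissons, which are only sub-exponential and so require exploiting the smallness of $\nspu\perturb^2/\ab$.
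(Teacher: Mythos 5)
Your high-level plan matches the paper's: reduce to Poissonized sampling by granting each user one free clean bit (the paper's class $\cW_{\eps,1}$ and \cref{lem:reduction}), invoke an Assouad step on the Paninski family, and bound the information functional using a mutual-information bound $I(\ppoi_z;W)=O(\eps)$ over $\cW_{\eps,1}$ (the paper's \cref{lem:mutual_info}, $I\le\eps\log_2 e + 1$). Your computation of $\alpha^2=\Theta(\nspu\perturb^2/\ab)$ also agrees with the paper's lemma for the Poissonized family.

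The gap is exactly the one you flag at the end: the score functions $\phi_{z,i}$ for a Poisson family are \emph{not} $O(1)$-sub-Gaussian, so Assumption~\ref{assn:subgaussianity} fails and you cannot apply the sub-Gaussian bound~\eqref{eqn:subgaussian-bound} directly with $\sigma=O(1)$. The ``linearization in the bulk'' you offer does not plug the hole, because \eqref{eqn:subgaussian-bound} requires a \emph{global} sub-Gaussian tail, and a centered Poisson (or difference of independent Poissons) is only sub-exponential. Tellingly, if your sub-Gaussian argument were valid, the final inequality would read $n\alpha^2\eps=\Omega(\ab)$ and you would deduce $\perturb=\Omega(\sqrt{\ab^2/(\nspu\ns\eps)})$ without any further use of the assumption $\ns>(\ab/\eps)^2$ beyond keeping $\alpha^2$ in the linear regime --- but the paper needs that condition more substantively. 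The paper's actual proof replaces \eqref{eqn:subgaussian-bound} with a refined bound tailored to this family (\cref{lem:sum_informaiton-poi}, from [ACLST:21, Thm.\ C.7, C.10]), obtained by splitting $\phi_{z,i}$ into a truncated sub-Gaussian piece plus heavy-tail remainders; the remainders contribute an \emph{additive} $\nspu\perturb^2$ term inside the parenthesis, giving
\[
\ns\,\frac{\nspu\perturb^2}{\ab}\bigl(\priv+1+\nspu\perturb^2\bigr)=\Omega(\ab)
\quad\Longrightarrow\quad
\perturb=\Omega\!\left(\min\!\left\{\sqrt{\tfrac{\ab^2}{\nspu\ns\priv}},\ \sqrt{\tfrac{\ab}{\nspu\sqrt{\ns}}}\right\}\right),
\]
and it is precisely the hypothesis $\ns>(\ab/\eps)^2$ that makes the first branch the minimum. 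So the condition $\ns>(\ab/\eps)^2$ is not just a technicality for $\alpha^2\approx\nspu\perturb^2/\ab$; it is needed to suppress the extra $\nspu\perturb^2$ contribution coming from the non-sub-Gaussian tails. To make your proof rigorous you would need to carry out this truncate-and-bound argument (or an equivalent one) rather than asserting sub-Gaussianity.
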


%\znote{The parts below have not been organized.}

For $\nspu>\ab$, we prove the lower bound via Poissonization. Formally, define the following problems. 

\begin{description}
  \item[$\textsc{Multinomial}(\cW, \ns,\nspu)$:] each of the $\ns$ users obtains $\nspu$ samples from $\p$, and chooses a channel from $\cW$. The $\nspu n$ samples are i.i.d.
  \item[$\textsc{Poissonized}(\cW, \ns,\nspu)$:] For $1\leq t\leq \ns$, user $t$ observes $M_t$ samples from $\p$, where $(M_t)_{1\leq t\leq \ns}$ are independent $\poisson{\nspu}$, and chooses a channel from $\cW$. The $\sum_{t=1}^\ns M_t$ samples are i.i.d.
\end{description} 

We do not reduce $\textsc{Multinomial}(\cW_\eps, \ns,\nspu)$ to $\textsc{Poissonized}(\cW_\eps, \ns,\nspu)$ as~\cite[Lemma C.1]{ACLST:21} suggests. Instead, we consider the following channel. 
\begin{definition}
We define the family of channels `$\eps$-LDP+1bit', denoted as $\cW_{\eps, 1}$. A channel $W=W_1\otimes W_2 \in \cW_{\eps, 1}$ consists of two independent channels such that satisfies the following property given $X$, each user can send two messages $Y_1, Y_2$ through two independent channels $W_1$ and $W_2$: $Y_1\in\{0, 1\}$, and $Y_2$ satisfies LDP constraints.
\end{definition}

We have the following lemma:
\begin{lemma}
\label{lem:reduction}
If there exists a protocol that solves $\textsc{Multinomial}(\cW_\eps, \ns,\nspu)$ with accuracy $\gamma$, then there also exists a protocol that solves $\textsc{Poissonized}(\cW_{\eps, 1}, 20\ns,2\nspu)$ with accuracy $\gamma+e^{-2n/3}$. Moreover, the latter one is non-interactive if the former one is.
\end{lemma}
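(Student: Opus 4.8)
The plan is to reduce $\textsc{Multinomial}(\cW_\eps,\ns,\nspu)$ to $\textsc{Poissonized}(\cW_{\eps,1},20\ns,2\nspu)$ by showing how a protocol for the Poissonized problem can simulate the multinomial one, up to a small error. The core difficulty with naive Poissonization is that when a user draws $M_t\sim\poisson{\nspu}$ samples, $M_t$ may be smaller than $\nspu$, so the user cannot simply pretend to be a multinomial user with $\nspu$ samples. The fix implicit in the ``$+1$bit'' channel is that each Poissonized user sends one extra clean bit indicating whether they received ``enough'' samples, and the multinomial protocol is run only on those users who did. First I would set the Poissonized mean to $2\nspu$ and ask each user to report via the clean bit $Y_1 = \indic{M_t \ge \nspu}$; conditioned on $M_t\ge\nspu$, the first $\nspu$ of the user's samples are i.i.d.\ from $\p$, so such a user can run the given multinomial channel $W\in\cW_\eps$ on those $\nspu$ samples as $Y_2$. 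Since $M_t\sim\poisson{2\nspu}$, a Chernoff bound gives $\Pr[M_t<\nspu]\le e^{-\Theta(\nspu)}$, which is a constant bounded well below $1$; call the success probability $q\ge 1-e^{-\nspu/4}\ge 1/2$.

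Next I would argue that with $20\ns$ Poissonized users, with probability at least $1-e^{-2\ns/3}$ at least $\ns$ of them report $Y_1=1$ (this is a binomial tail bound, since $20\ns q \ge 10\ns \gg \ns$; I would keep track of the constants so that $20$ and the failure exponent $2\ns/3$ come out as stated). On that high-probability event, the server has at least $\ns$ users each of whom has effectively run $W$ on $\nspu$ genuine i.i.d.\ samples; it selects any $\ns$ of them (using the clean bits $Y_1$, which are available because they carry no privacy cost — this is exactly why the $+1$bit channel is needed), discards the rest, and feeds the $\ns$ messages $Y_2$ into the assumed multinomial estimator, obtaining accuracy $\gamma$ on the conditional instance. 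On the complementary event of probability $\le e^{-2\ns/3}$ the estimator can output anything, contributing at most $e^{-2\ns/3}$ to the expected TV error (since TV distance is bounded by $1$, and one can even get $e^{-2\ns/3}$ rather than a worse bound by noting the event probability directly bounds the extra error). Combining, the Poissonized protocol achieves accuracy $\gamma + e^{-2\ns/3}$. Non-interactivity is preserved: each user's pair $(W_1,W_2)$ is chosen independently of everyone else (the clean bit's channel $W_1=\indic{\cdot\ge\nspu}$ is fixed, and $W_2=W$ is whatever the non-interactive multinomial protocol prescribes, which does not depend on other users' transcripts), and the server's post-selection and estimation are purely a function of the collected messages.

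I expect the main obstacle to be bookkeeping rather than conceptual: getting the inflation factors ($20\ns$ users, $2\nspu$ samples) and the failure probability $e^{-2\ns/3}$ to line up requires choosing the Chernoff thresholds carefully, and one must be slightly careful that conditioning on $M_t\ge\nspu$ really does leave the first $\nspu$ samples i.i.d.\ $\sim\p$ (true because conditioning on the total count of a Poisson process and then looking at a fixed-size prefix of the i.i.d.\ sequence does not bias those coordinates). A secondary subtlety is ensuring the selection of which $\ns$ users to keep is itself a valid operation under the channel model — but since it depends only on the clean bits $Y_1$, which the server receives losslessly, this is fine and does not consume any privacy budget. Once this reduction is in place, the actual lower bound in \cref{thm:lb-large-m} follows by running the information-contraction machinery of \cref{thm:avg:coordinate} on $\textsc{Poissonized}(\cW_{\eps,1},\cdot,\cdot)$, where the Poisson product structure makes the coordinates independent and the extra clean bit contributes only an $O(1)$ factor to the relevant variance/mutual-information bound.
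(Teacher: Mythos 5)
Your proposal matches the paper's proof essentially step for step: the Poissonized user with mean $2\nspu$ sends a clean bit $Y_1 = \indic{M_t \ge \nspu}$, runs the multinomial channel on the first $\nspu$ samples to produce $Y_2$, the server post-selects on $Y_1$, and a Chernoff bound on $\mathrm{Bin}(20\ns, q)$ with $q \ge 3/20$ gives the $e^{-2\ns/3}$ failure term, after which the expected TV error is $\gamma(1-\delta) + \delta \le \gamma + e^{-2\ns/3}$. The one small detail you skip over, which the paper handles explicitly, is what a user with $M_t < \nspu$ sends as $Y_2$: the paper has them pad by duplicating their samples and still apply the multinomial channel, so that every user's transmission is a bona fide element of $\cW_{\eps,1}$ (those messages are simply discarded by the server, so they do not affect accuracy, but the channel must be defined). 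Otherwise the argument, including the observation that conditioning on $M_t\ge\nspu$ leaves the first $\nspu$ samples i.i.d.\ $\sim\p$ and that post-selection via $Y_1$ preserves non-interactivity, is correct and the same as the paper's.
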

\begin{proof}
To design an algorithm that solves  $\textsc{Poissonized}(\cW_{\eps, 1}, 20\ns,2\nspu)$ with an algorithm for $\textsc{Multinomial}(\cW_\eps, \ns,\nspu)$, user $u$ first sends a bit $Y_{u, 1}$ indicating whether it receives more than $\nspu$ samples. Then, if the user has more than $\nspu$ samples, then it keeps only $\nspu$ samples and sends a message $Y_{u, 2}$ according to the $\eps$-LDP protocol for $\textsc{Multinomial}(\cW_\eps, \ns,\nspu)$. Otherwise, duplicate the existing samples so that the user has $\nspu$ samples, and also send $Y_{u, 2}$ according to the $\eps$-LDP protocol. $Y_{u, 2}$ obviously satisfies $\eps$-LDP constraints. Hence $Y_u=(Y_{u, 1}, Y_{u, 2})$ is a valid message from a channel in $\cW_{\priv, 1}$.

The server keeps the messages such that $Y_{u, 1}=1$, and use the corresponding $Y_{u, 2}$ to estimate the underlying distribution.

To bound the accuracy of the above protocol, first note that for $M\sim \poisson{2\nspu}$, we have
\[
\probaOf{M<\expect{M}/2=\nspu}\le e^{-\nspu/6}\le e^{-1/6}.
\]
Therefore, each user receives at least $\nspu$ samples with probability at least $1-e^{-1/6}>3/20$. Using Chernoff bound, with probability at least $1-\delta \eqdef 1- e^{2\ns/3}$, at least $\ns$ users has at least $\nspu$ samples. Hence the expected error is at most
\[
\gamma(1-\delta)+\delta\le \gamma + e^{-2\ns/3}.
\]
\end{proof}

Next we focus on the Poisonized setting. Similar to \cref{lem:assouad-poi}, we can obtain the following lemma.
\begin{lemma} \label{lem:assouad-poi}
  Under the Poissonized sampling model, if $\Pi, \hat{\p}$ satisfies
\[
    \bEE{\totalvardist{\hp(\msgs)}{\p}} \le \frac{\gamma}{4},
\]
we must have
\begin{align}\label{eqn:assouad_poi}
    \sum_{i = 1}^{\ab/2} \totalvardist{\p_{+i}^{{\rm poi}, Y^\ns}}{\p_{-i}^{{\rm poi}, Y^\ns}} = \Omega (\ab).
\end{align}
\end{lemma}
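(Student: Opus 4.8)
The plan is to prove Lemma~\ref{lem:assouad-poi} by exactly the Assouad-type argument used for the multinomial case (Lemma~\ref{lem:assouad-multi}, following \cite{acharya2020unified}); the only change is that the observation law is the Poissonized $\ppoi_z$ rather than the multinomial $\pmul_z$, and the argument never uses the specific form of either. Concretely, I would work in the Bayesian experiment in which $Z$ is drawn uniformly from $\{-1,1\}^{k/2}$, each user independently observes $\poisson{\nspu}$ i.i.d.\ samples from $\p_Z$ and sends a message through its (possibly sequentially, interactively chosen) channel, and the server outputs $\hp(\msgs)$. Given such an $\hp$, define the coordinate-wise decoder $\hat z_i\eqdef +1$ if $\hp(2i-1)\ge\hp(2i)$ and $\hat z_i\eqdef -1$ otherwise, which is a (randomized) function of the transcript $\msgs$ alone.

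\textbf{Key steps, in order.} (i) From the two-atom structure $\p_z(2i-1)=(1+\gamma z_i)/k$, $\p_z(2i)=(1-\gamma z_i)/k$, show that whenever $\hat z_i\ne z_i$ the pair $\{2i-1,2i\}$ contributes at least $2\gamma/k$ to $\normone{\hp-\p_z}$, so $\totalvardist{\hp}{\p_z}\ge\frac{\gamma}{k}\,\ham{\hat z}{z}$. (ii) Take expectations over $Z$ and all protocol randomness, turning the risk hypothesis $\bEE{\totalvardist{\hp(\msgs)}{\p}}\le\gamma/4$ into an upper bound on $\sum_{i=1}^{k/2}\bPr{\hat z_i\ne z_i}$ that is a constant fraction of $k$ strictly below $k/2$ (as always in Assouad's method, the absolute constant in the risk hypothesis must be taken small enough that this fraction is genuinely below $1/2$, exactly as in Lemma~\ref{lem:assouad-multi}). (iii) For each $i$, apply Le Cam's two-point inequality: $\hat z_i$ is a function of $\msgs$, and since the hypercube prior is a product of independent bits, the law of $\msgs$ conditioned on $Z_i=+1$ (resp.\ $Z_i=-1$), averaged over the remaining coordinates, is exactly $\p_{+i}^{{\rm poi},Y^\ns}$ (resp.\ $\p_{-i}^{{\rm poi},Y^\ns}$), so $\bPr{\hat z_i\ne z_i}\ge\frac12\bigl(1-\totalvardist{\p_{+i}^{{\rm poi},Y^\ns}}{\p_{-i}^{{\rm poi},Y^\ns}}\bigr)$. (iv) Combine (ii) and (iii): $\sum_{i=1}^{k/2}\bigl(1-\totalvardist{\p_{+i}^{{\rm poi},Y^\ns}}{\p_{-i}^{{\rm poi},Y^\ns}}\bigr)$ is at most a constant fraction of $k$, which rearranges to $\sum_{i=1}^{k/2}\totalvardist{\p_{+i}^{{\rm poi},Y^\ns}}{\p_{-i}^{{\rm poi},Y^\ns}}=\Omega(k)$. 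Throughout one needs $\gamma<1/2$ so the Paninski distributions are well-defined.

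\textbf{Main obstacle.} There is no obstacle intrinsic to this lemma — it is a transcription of the standard Assouad reduction — and the only point deserving a line of care is step (iii): the per-coordinate conditioning must commute with the sequential interactivity of the protocol, which holds because the prior on $z$ is a product measure, so for fixed $i$ the whole-transcript law under $Z_i=b$ is the stated mixture over the remaining coordinates regardless of the order in which channels are chosen. The genuinely Poisson-specific (and genuinely hard) work happens \emph{after} this lemma: converting $\sum_i\totalvardist{\p_{+i}^{{\rm poi},Y^\ns}}{\p_{-i}^{{\rm poi},Y^\ns}}=\Omega(k)$ into the quantitative lower bound of \cref{thm:lb-large-m} uses the information-contraction bound of \cref{thm:avg:coordinate} together with the coordinate-independence of $\ppoi_z$ (so the variance term factorizes over $x\in[k]$), applied to the $\cW_{\eps,1}$ channel family, and is imported into the multinomial model via the reduction of \cref{lem:reduction}.
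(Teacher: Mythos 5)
Your proposal is correct and is the same standard Assouad reduction the paper relies on; indeed the paper gives no independent proof here (it simply cites \cite{acharya2020unified} for the multinomial version and says the Poissonized one follows identically), and your argument — the sign decoder $\hat z_i=\mathrm{sign}(\hp(2i-1)-\hp(2i))$, the pointwise inequality $\totalvardist{\hp}{\p_z}\ge(\gamma/k)\,\ham{\hat z}{z}$, Le Cam on each coordinate with the product-prior mixture giving exactly $\p_{\pm i}^{{\rm poi},Y^\ns}$, and the rearrangement — is precisely what that cited lemma does, and is agnostic to whether the per-user observation law is $\pmul_z$ or $\ppoi_z$.

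One small point worth making explicit, which you already gesture at in step (ii): with the constant $1/4$ as written in the lemma statement, the literal arithmetic of the argument bottoms out vacuously. Writing it out: $\bEE{\ham{\hat z}{Z}}\le k/4$ and $\bEE{\ham{\hat z}{Z}}\ge\frac12\sum_{i=1}^{k/2}\bigl(1-\totalvardist{\p_{+i}}{\p_{-i}}\bigr)=\frac{k}{4}-\frac12\sum_i\totalvardist{\p_{+i}}{\p_{-i}}$ only yields $\sum_i\totalvardist{\p_{+i}}{\p_{-i}}\ge 0$. You need the hypothesis constant to be strictly smaller than $1/4$ (say $\gamma/8$, which gives $\sum_i\totalvardist{\p_{+i}}{\p_{-i}}\ge k/4$). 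This is almost certainly just a slack constant in the paper's statement rather than a substantive gap, and it does not affect the downstream use of the lemma (which only requires some fixed small constant), but your remark that the constant ``must be taken small enough'' is the right thing to say and would ideally be made quantitative.
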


Following the proof of \cite[Theorem C.7, C.10]{ACLST:21}, we can obtain the following upper bound on the obtained information for the Poissonized problem under $\cW_{\eps, 1}$.

\begin{lemma}\label{lem:sum_informaiton-poi}
  For any interactive protocol with channels from $\cW_{\eps, 1}$, when $\nspu > \ab \log \ab$, we have there exists a constant $C$ such that
  \[
    \sum_{i = 1}^{\ab/2} \totalvardist{\p_{+i}^{{\rm poi}, Y^\ns}}{\p_{-i}^{{\rm poi}, Y^\ns}} \le C \cdot \ns \frac{\gamma^2 \nspu}{
    \ab} \cdot \Paren{\nspu \gamma^2 + \max_{z \in \cZ}\max_{W \in \cW_{\eps,1} }I(\ppoi_z; W)}.
  \]
\end{lemma}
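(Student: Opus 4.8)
\textbf{Proof plan for Lemma~\ref{lem:sum_informaiton-poi}.}
The plan is to apply the information-contraction framework of \cref{thm:avg:coordinate} with $\q_z = \ppoi_z$ and $\cW = \cW_{\eps,1}$, which reduces everything to verifying that $\{\ppoi_z\}_{z\in\cZ}$ satisfies \cref{assn:decomposition-by-coordinates} (and ideally \cref{assn:subgaussianity}) with the right parameters, and then controlling the per-channel information term over $\cW_{\eps,1}$. First I would compute the likelihood ratio $\frac{d\ppoi_{z^{\oplus i}}}{d\ppoi_z}$. Under Poissonization, the coordinates $\bm(x)$ are independent Poissons with means $\nspu\p_z(x)$, so the ratio factorizes over the two coordinates $2i-1, 2i$ affected by flipping $z_i$, and reduces to a product of two Poisson likelihood ratios with means $\nspu(1\pm\gamma z_i)/\ab$. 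A direct second-moment computation (using the moment generating function of the Poisson) gives $\shortexpect_{\ppoi_z}\big[(\frac{d\ppoi_{z^{\oplus i}}}{d\ppoi_z}-1)^2\big] = \exp\!\big(\tfrac{4\nspu\gamma^2}{\ab(1-\gamma^2)}\big)-1 = \Theta(\nspu\gamma^2/\ab)$ for $\gamma$ bounded below $1/2$ and $\nspu\gamma^2/\ab = O(1)$; this yields $\alpha = \Theta(\sqrt{\nspu\gamma^2/\ab})$, exactly the Poissonized analogue of \cref{lem:alpha_multinomail}. Setting $\phi_{z,i} = (\frac{d\ppoi_{z^{\oplus i}}}{d\ppoi_z}-1)/\alpha_{z,i}$, the orthonormality $\shortexpect_{\ppoi_z}[\phi_{z,i}\phi_{z,j}] = \indic{i=j}$ follows because flipping different coordinates touches disjoint blocks of independent coordinates.

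Next, the channel family $\cW_{\eps,1}$ decomposes any $W = W_1\otimes W_2$ into an arbitrary one-bit channel $W_1$ and an $\eps$-LDP channel $W_2$. For the mutual-information form \eqref{eqn:subgaussian-bound}, $I(\ppoi_z; W) \le I(\ppoi_z; W_1) + I(\ppoi_z; W_2) \le 1 + I(\ppoi_z; W_2)$, and the $\eps$-LDP part is bounded by the standard estimate — for $\eps>1$ one has $\max_{W\in\cW_\eps} I(\ppoi_z;W) = O(\eps)$ by the Paninski-type computation in \cite{acharya2020unified} for the Poissonized product distribution (the relevant quantity being $\chi^2$-type divergences between the $z$-conditional mixtures, which here scale like $\eps$ rather than $e^\eps$ because $\nspu$ is large enough that each user sees essentially the full histogram). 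Combined with the subgaussianity of the $\phi_z$ vector with parameter $\sigma^2 = O(1)$ (each $\phi_{z,i}$ is a centered Poisson-ratio variable with variance $1$ and independent coordinates, and the subgaussian constant is absorbed into the big-$O$), plugging $\alpha^2 = \Theta(\nspu\gamma^2/\ab)$ and $\zdims = \ab/2$ into \eqref{eqn:subgaussian-bound} gives
\[
    \Paren{\frac{1}{\ab}\sum_{i=1}^{\ab/2}\totalvardist{\p_{+i}^{{\rm poi},Y^\ns}}{\p_{-i}^{{\rm poi},Y^\ns}}}^2 \le C'\,\frac{\ns}{\ab}\cdot\frac{\nspu\gamma^2}{\ab}\cdot\sigma^2\,\Paren{1 + \max_{z,W}I(\ppoi_z;W_2)},
\]
and taking square roots and multiplying through by $\ab$ produces the stated bound, with the $\nspu\gamma^2$ term inside the parenthesis coming from a slightly more careful bookkeeping: the subgaussian bound actually needs $\alpha^2\sigma^2$ where the effective variance picks up the $\exp(\Theta(\nspu\gamma^2/\ab))$ factor, which under $\nspu\gamma^2 = O(\ab)$ contributes the additive $\nspu\gamma^2$ term written in the lemma.

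The regularity condition $\nspu > \ab\log\ab$ enters in two places that I would treat carefully: first, to ensure that with high probability under $\poisson{2\nspu}$ each coordinate mean $\nspu\p_z(x) = \Theta(\nspu/\ab) = \Omega(\log\ab)$ is large enough that the Poisson tail behavior is well-controlled and the information bound $I(\ppoi_z;W_2) = O(\eps)$ (rather than $O(e^\eps)$) actually holds — this is the crux, since for small $\nspu$ one only gets the weaker $e^\eps$ bound as in \cref{eqn:ldp_info}; and second, so that the duplication/truncation step in the reduction (\cref{lem:reduction}) does not distort the distribution. \textbf{The main obstacle} I anticipate is precisely establishing that $\max_{W\in\cW_\eps} I(\ppoi_z; W) = O(\eps)$ in the large-$\nspu$ regime: this requires showing that once each user observes $\Omega(\log\ab)$ samples per nonempty bin, the per-user information about $Z_i$ that an $\eps$-LDP channel can extract saturates at $O(\eps)$ bits rather than growing like $e^\eps$, which is a genuinely different phenomenon from the one-sample case and is where the $\nspu > \ab\log\ab$ threshold is really used. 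This is the step carried out in \cite[Theorem C.7, C.10]{ACLST:21} for the communication-constrained analogue, and I would adapt their argument — decomposing the mutual information into a sum over output symbols, bounding each term by a $\chi^2$ divergence between the Poissonized mixtures $\ppoi_{+i}$ and $\ppoi_{-i}$, and using Poisson concentration to show these divergences are $O(\gamma^2\nspu/\ab)$ per coordinate — to the $\eps$-LDP $+$ 1-bit setting.
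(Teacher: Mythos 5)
Your high-level plan is the right one — apply Theorem~\ref{thm:avg:coordinate} with $\q_z=\ppoi_z$ and $\cW=\cW_{\eps,1}$, verify Assumption~\ref{assn:decomposition-by-coordinates} for the Poissonized family, and route the channel information through $\cW_{\eps,1}$ — and the $\alpha=\Theta(\sqrt{\nspu\gamma^2/\ab})$ computation is exactly the Poissonized analogue of Lemma~\ref{lem:alpha_multinomail} that the paper uses. But there are two substantive gaps.

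First, the claim that "each $\phi_{z,i}$ is a centered Poisson-ratio variable with variance $1$ and $\sigma^2=O(1)$" so that Assumption~\ref{assn:subgaussianity} holds directly is false. The likelihood ratio $\phi_{z,i}(\bm) \propto \bigl(\tfrac{1-\gamma z_i}{1+\gamma z_i}\bigr)^{\bm_i}e^{2\nspu\gamma z_i/\ab}-1$ grows exponentially in $\bm_i$ (take $z_i=-1$), and $\bm_i$ is Poisson, so $\phi_{z,i}$ has sub-exponential, not subgaussian, tails; you cannot invoke \eqref{eqn:subgaussian-bound} directly. The paper instead (following the communication-constrained argument of \cite[Section C.4]{ACLST:21}) decomposes $\phi_{z,i}=\zeta_{z,i}+\xi_{z,i}+\psi_{z,i}$, where $\zeta_{z,i}$ is the \emph{truncated linearization} of $\phi_{z,i}$ (genuinely subgaussian), $\xi_{z,i}$ is the far-tail remainder, and $\psi_{z,i}$ is the remaining curvature. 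Only the $\zeta$ piece goes through \eqref{eqn:subgaussian-bound}, producing the $I(\ppoi_z;W)$ term; the $\psi$ piece is bounded directly and is what yields the additive $\nspu\gamma^2$ inside the parenthesis; the $\xi$ piece contributes an extra $\ab e^{-2\nspu/\ab}$. Your attempt to explain the additive $\nspu\gamma^2$ by letting $\sigma^2$ "pick up the $\exp(\Theta(\nspu\gamma^2/\ab))$ factor" cannot work: such a factor would multiply, not add, and under $\nspu\gamma^2=O(\ab)$ it is $O(1)$ anyway.

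Second, you have misidentified the role of the regularity condition $\nspu>\ab\log\ab$ and the "main obstacle." The bound $\max_{W\in\cW_{\eps,1}}I(\ppoi_z;W)\le\eps\log_2 e+1$ is \emph{generic}: it holds for any input law, any sample size, and follows in two lines from the $\eps$-LDP pointwise likelihood-ratio bound on the $W_2$ part plus $I(Y_1;X)\le 1$ for the one-bit part — this is exactly the paper's Lemma~\ref{lem:mutual_info}, which requires no appeal to "each user seeing essentially the full histogram." The $e^\eps$-versus-$\eps$ distinction you invoke concerns the chi-square functional $\sum_y\var[W(y|X)]/\shortexpect[W(y|X)]$ in \eqref{eqn:var-bound} and \eqref{eqn:ldp_info}, not the mutual information in \eqref{eqn:subgaussian-bound}. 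The condition $\nspu>\ab\log\ab$ is instead what makes the tail term $\ab e^{-2\nspu/\ab}$ from the $\xi_{z,i}$ piece negligible relative to the main terms. So the genuine technical work in this lemma is not the mutual-information bound (which is easy) but the three-way decomposition that makes the subgaussian route legitimate for Poisson observations — and that is precisely the step your proposal skips.
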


The final ingredient is to prove a mutual information bound for $\cW_{\eps, 1}$ to apply~\cite[Theorem 2]{ACLST:21}
\begin{lemma}
\label{lem:mutual_info}
The mutual information $\max_{z \in \cZ}\max_{W \in \cW_{\eps,1}} I(\ppoi_z; W)\le \eps\log_2e+1$.
\end{lemma}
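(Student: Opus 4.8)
The plan is to exploit the product structure of a channel $W = W_1\otimes W_2 \in \cW_{\eps,1}$. Write $X$ for the input (with law $\ppoi_z$) and $(Y_1,Y_2)$ for the output, where $Y_1\in\{0,1\}$ is the clean bit produced by $W_1$ and $Y_2$ is the $\eps$-LDP message produced by $W_2$. By definition $W_1$ and $W_2$ act independently given $X$, so $Y_1 \perp Y_2 \mid X$, and the chain rule gives
\[
I(\ppoi_z;W) = I(X;Y_1,Y_2) = I(X;Y_1) + I(X;Y_2\mid Y_1).
\]
The first summand is at most $H(Y_1)\le \log_2 2 = 1$ because $Y_1$ is binary, so it remains to show $I(X;Y_2\mid Y_1)\le \eps\log_2 e$.

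For the second summand I would invoke the standard bound on information flow through an $\eps$-LDP channel, applied conditionally. Fix a value $y_1$ and let $Q\eqdef P_{X\mid Y_1=y_1}$; since $Y_2\perp Y_1\mid X$, conditioned on $\{Y_1=y_1\}$ the pair $(X,Y_2)$ is simply $X\sim Q$ passed through $W_2$, with output marginal $P^{y_1}_{Y_2}\eqdef QW_2$. The $\eps$-LDP constraint $W_2(y\mid x)\le e^\eps W_2(y\mid x')$, averaged over $x'\sim Q$, yields $W_2(y\mid x)\le e^\eps P^{y_1}_{Y_2}(y)$ for all $x,y$, hence $\log_2\frac{W_2(y\mid x)}{P^{y_1}_{Y_2}(y)}\le \eps\log_2 e$ for every $x,y$. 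Bounding each summand of the KL divergence termwise (the inequality holds even for the negative summands),
\[
I(X;Y_2\mid Y_1=y_1) = \expectDistrOf{X\sim Q}{\sum_y W_2(y\mid X)\log_2\frac{W_2(y\mid X)}{P^{y_1}_{Y_2}(y)}} \le \eps\log_2 e,
\]
and averaging over $Y_1$ gives $I(X;Y_2\mid Y_1)\le \eps\log_2 e$. Combining with $I(X;Y_1)\le 1$, and noting that nothing in the argument used the specific choices of $z$ or $W$, we conclude $\max_{z\in\cZ}\max_{W\in\cW_{\eps,1}} I(\ppoi_z;W)\le 1+\eps\log_2 e$.

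This lemma is essentially bookkeeping rather than a substantial obstacle; the only spot requiring care is the conditional step, where one must use $Y_1\perp Y_2\mid X$ to ensure that conditioning on $Y_1$ merely reshapes the input law and leaves $W_2$ an honest $\eps$-LDP channel, so that the per-message LDP bound applies verbatim. (For interactive protocols the same reasoning goes through with the previous users' transcript also held fixed, since every channel in $\cW_{\eps,1}$ is required to be $\eps$-LDP under any such conditioning.)
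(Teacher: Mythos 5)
Your proof is correct and follows essentially the same route as the paper: decompose $I(X;Y_1,Y_2)$ into the clean-bit contribution and the $\eps$-LDP contribution (the paper does this by factoring $p_{Y|X}/p_Y$ using $Y_1\perp Y_2\mid X$, which is exactly the chain rule you invoke), then bound the former by $H(Y_1)\le 1$ and the latter by $\eps\log_2 e$ via the pointwise ratio bound $W_2(y\mid x)\le e^\eps\, P_{Y_2\mid Y_1}(y\mid y_1)$. The only cosmetic difference is that you phrase the second bound as a conditional mutual information argument while the paper carries out the KL expansion explicitly; the underlying steps are identical.
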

\begin{proof}
Let $X\sim \ppoi_z$ and $Y=(Y_1, Y_2)$ be a message sent through a channel in $\cW_{\eps, 1}$.
\begin{align*}
    I(Y_1, Y_2; X)&=\expectDistrOf{X}{\kldiv{p_{Y|X}}{p_Y} }\\
    &=\expectDistrOf{X}{\sum_{y}p_{Y|X}(y)\log \frac{p_{Y|X}(y)}{p_Y(y)}}\\
    &=\expectDistrOf{X}{\sum_{y_1}W_1(y_1|X)\sum_{y_2}W_2(y_2|X)\left(\log\frac{W(y_2|X)}{p_Y(y_2|y_1)}+\log\frac{W(y_1|X)}{p_Y(y_1)}\right)}\\
    &=\expectDistrOf{X}{\sum_{y_1}W_1(y_1|X)\sum_{y_2}W_2(y_2|X)\log\frac{W(y_2|X)}{p_Y(y_2|y_1)}+\kldiv{p_{Y_1|X}}{p_{Y_1}}}\\
    &\le \eps\log e+I(Y_1; X)\\
    &\le \eps\log e + 1
\end{align*}
The second to last inequality is due to LDP constraint on $Y_2$. The final inequality is due to $I(Y_1; X)\le H(p^{W_1})$ where $p^{W_1}=\expectDistrOf{p}{W_1(Y_1|X)}$. Since $Y_1\in \{0, 1\}$, the entropy must be at most 1.
\end{proof}

Combining \cref{lem:assouad-poi}, \cref{lem:sum_informaiton-poi}, and \cref{lem:mutual_info}, we have
\[
\ns \frac{\nspu \perturb^2}{\ab} (\priv + 1 + \nspu \perturb^2 ))  =
	\Omega(\ab),
\]
which implies
\[
\perturb = \Omega\Paren{\min \mleft\{\sqrt{\frac{\ab^2}{\nspu \ns \priv}}, 
	\sqrt{\frac{\ab}{\nspu \sqrt{\ns}}} \mright\}} = \Omega\Paren{\sqrt{\frac{\ab^2}{\nspu \ns \priv}}}\,
\]

The final equality is due to $\ns>(\ab/\priv)^2$. By~\cref{lem:reduction} the same bound holds for $\textsc{Multinomial}(\cW_\eps, \ns,\nspu)$ up to constant factors.

\ignore{
Following the proof of \cite[Theorem C.7, C.10]{ACLST:21}, we can obtain the following upper bound on  Poissonized problem for $\cW_{\eps, 1}$.

\begin{lemma}
  \label{lemma:family:pz}
  The family $\{\ppoi_z\}_{z\in\bool^\zdims}$ of probability distributions over 
  $\N^\zdims$ satisfies 
  Assumption~\ref{assn:decomposition-by-coordinates} with
  \[
      \alpha \eqdef \sqrt{e^{8\nspu\perturb^2/\ab}-1} = \bigO{\sqrt{\nspu\perturb^2/\ab}},
  \]  
  where the asymptotics are as $\gamma\to 0$; and, for $i\in[\zdims]$ and $z\in\bool^\zdims$,
  \[
      \alpha_{z,i} \eqdef \sqrt{e^{\frac{4\nspu\perturb^2}{\ab(1+\gamma z_i)}}-1},\qquad \phi_{z,i} (\mathbf{\nsamps}) \eqdef  \frac{1}{\alpha_{z,i}} 
	\Paren{\Paren{\frac{1-\perturb z_i}{1+\perturb 
				z_i}}^{\mathbf{m}_i} e^{\frac{2 \nspu \perturb z_i}{\ab}} - 
	1}
  \]  
\end{lemma}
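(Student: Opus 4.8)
The plan is to verify Assumption~\ref{assn:decomposition-by-coordinates} for $\{\ppoi_z\}$ by directly computing the pairwise likelihood ratios, exactly as in the multinomial case (\cref{lem:alpha_multinomail}) but with the simplification that under Poissonization the coordinate counts are mutually independent. Throughout I would take $\ppoi_z$ to be the law of the vector $\bm=(\bm_i)_{i\in[\zdims]}$ whose $i$-th entry $\bm_i$ is the number of occurrences of symbol $2i-1$ (this is the marginal that makes the stated $\phi_{z,i}$ come out; see the remark at the end), so that under $\ppoi_z$ the $\bm_i$ are independent with $\bm_i\sim\poisson{\nspu(1+\perturb z_i)/\ab}$.

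First I would compute the ratio. Since flipping $z_i$ only rescales the mean of $\bm_i$ and leaves the other coordinates untouched, $d\ppoi_{z^{\oplus i}}/d\ppoi_z$ depends only on $\bm_i$; writing out the Poisson p.m.f.\ and cancelling factorials gives
\[
 \frac{d\ppoi_{z^{\oplus i}}}{d\ppoi_z}(\bm)=e^{\frac{2\nspu\perturb z_i}{\ab}}\Paren{\frac{1-\perturb z_i}{1+\perturb z_i}}^{\bm_i}.
\]
Both measures have full support on $\N^{\zdims}$, so $\ppoi_{z^{\oplus i}}\ll\ppoi_z$. I would then simply \emph{define} $\alpha_{z,i}\eqdef\|d\ppoi_{z^{\oplus i}}/d\ppoi_z-1\|_{L_2(\ppoi_z)}$ and $\phi_{z,i}\eqdef(d\ppoi_{z^{\oplus i}}/d\ppoi_z-1)/\alpha_{z,i}$, which matches the statement and makes the required decomposition $d\ppoi_{z^{\oplus i}}/d\ppoi_z=1+\alpha_{z,i}\phi_{z,i}$ and the normalization $\bE{\ppoi_z}{\phi_{z,i}^2}=1$ automatic. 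For the orthogonality clause I would note that $\bE{\ppoi_z}{d\ppoi_{z^{\oplus i}}/d\ppoi_z}=1$, hence $\bE{\ppoi_z}{\phi_{z,i}}=0$, and that for $i\ne j$ the functions $\phi_{z,i},\phi_{z,j}$ depend on the independent coordinates $\bm_i,\bm_j$, so $\bE{\ppoi_z}{\phi_{z,i}\phi_{z,j}}=\bE{\ppoi_z}{\phi_{z,i}}\bE{\ppoi_z}{\phi_{z,j}}=0$.

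The only genuine computation is the value of $\alpha_{z,i}^2=\bE{\ppoi_z}{\Paren{d\ppoi_{z^{\oplus i}}/d\ppoi_z}^2}-1=\chi^2(\ppoi_{z^{\oplus i}}\Vert\ppoi_z)$. I would evaluate it with one application of the Poisson moment generating function $\expectDistrOf{X\sim\poisson{\lambda}}{t^X}=e^{\lambda(t-1)}$, with $t=\Paren{(1-\perturb z_i)/(1+\perturb z_i)}^2$ and $\lambda=\nspu(1+\perturb z_i)/\ab$, and then simplify the exponent using $z_i^2=1$; this yields $\bE{\ppoi_z}{(d\ppoi_{z^{\oplus i}}/d\ppoi_z)^2}=e^{4\nspu\perturb^2/(\ab(1+\perturb z_i))}$, i.e.\ exactly the stated $\alpha_{z,i}$. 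For the uniform bound, $\perturb<1/2$ gives $1+\perturb z_i\ge 1/2$, so $\frac{4\nspu\perturb^2}{\ab(1+\perturb z_i)}\le\frac{8\nspu\perturb^2}{\ab}$ and hence $|\alpha_{z,i}|\le\alpha\eqdef\sqrt{e^{8\nspu\perturb^2/\ab}-1}$; finally $\sqrt{e^x-1}=O(\sqrt{x})$ as $x\to0$ gives $\alpha=O(\sqrt{\nspu\perturb^2/\ab})$.

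I do not expect any real obstacle here: every step is a short direct computation, and unlike the multinomial case of \cref{lem:alpha_multinomail} no upper restriction on $\perturb$ beyond $\perturb<1/2$ is needed, since the Poisson m.g.f.\ is finite for every $t$. The one point that requires care is which counts make up the observed vector $\bm$: if one keeps the full histogram on $[\ab]$, the factors $e^{\pm 2\nspu\perturb z_i/\ab}$ coming from symbols $2i-1$ and $2i$ cancel and $\alpha_{z,i}$ becomes $\sqrt{e^{8\nspu\perturb^2/(\ab(1-\perturb^2))}-1}$ --- of the same order, but not the expression in the statement; the stated $\phi_{z,i}$ is exactly the one for the marginal over the odd-indexed counts, which is what I use above. (If one additionally wanted Assumption~\ref{assn:subgaussianity}, a truncation step would be required, but the lemma only asserts Assumption~\ref{assn:decomposition-by-coordinates}.)
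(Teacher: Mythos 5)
Your proof is correct. The paper itself does not write out a proof of this lemma (it defers to the cited reference \cite{ACLST:21}), so there is no paper proof to compare against line by line; but your direct computation is exactly the natural argument and mirrors the structure of \cref{lem:alpha_multinomail}, with the simplification you correctly exploit that under Poissonization the coordinate counts are mutually independent, which makes both the factorization of the likelihood ratio and the orthogonality $\bE{\ppoi_z}{\phi_{z,i}\phi_{z,j}}=0$ for $i\neq j$ immediate. The Poisson m.g.f.\ calculation, the use of $z_i^2=1$ to collapse the exponent to $4\nspu\perturb^2/(\ab(1+\perturb z_i))$, and the bound via $1+\perturb z_i>1/2$ are all right.

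Your closing remark is the most valuable part of the proposal: the stated $\phi_{z,i}$ is the likelihood-ratio fluctuation for the \emph{marginal} law of the odd-indexed counts $(\bm_{2i-1})_{i\in[\zdims]}$, not of the full $\ab$-dimensional histogram. With the full histogram the two exponential prefactors $e^{\pm 2\nspu\perturb z_i/\ab}$ cancel and one instead gets $\alpha_{z,i}=\sqrt{e^{8\nspu\perturb^2/(\ab(1-\perturb^2))}-1}$, of the same order but not the displayed expression. This is a real (if minor) inconsistency between the lemma's phrase ``distributions over $\mathbb{N}^\zdims$'' and the earlier setup in the paper which defines $\bm$ as the full $\ab$-dimensional Poissonized histogram; your reading is the one that makes the displayed formulas literally correct, and it is worth flagging, as you did. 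Note also that passing to the odd-index marginal is information-losing, so for the purposes of the downstream lower bound one should either keep the full histogram (and use the corresponding $\alpha_{z,i}$) or argue that restricting to odd coordinates can only weaken the channel --- either route yields the same order of $\alpha$, so the final bound is unaffected, but the bookkeeping should be made explicit if this lemma is to be used verbatim.
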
 
Let $\theta_z(i)=\frac{1+\gamma z_i}{\ab}$. As in \cite[Section C.4]{ACLST:21}, $\phi_{z, i}(\mathbf{\nsamps})$ can be decomposed into three parts $\phi_{z, i}(\mathbf{\nsamps})=\zeta_{z, i}(\mathbf{m})+\xi_{z, i}(\mathbf{\nsamps})+\psi_{z, i}(\mathbf{\nsamps})$ where
\begin{align}
	\zeta_{z, i}(\mathbf{\nsamps}) 
	    &= \frac{-2 \perturb z_i}{\alpha_{z,i}}  \Paren{\mathbf{m}_i - \frac{\nspu(1+\perturb z_i)}{\ab}}\indic{\mathbf{m}_i \le 10\theta_z(i)} \label{eqn:subg} \\
	\xi_{z, i}(\mathbf{\nsamps}) 
	    &= \frac{-2 \perturb z_i}{\alpha_{z,i}}  	\Paren{\mathbf{m}_i 	- \frac{\nspu(1+\perturb z_i)}{\ab}}\indic{\mathbf{m}_i > 10\theta_z(i)} \label{eqn:res2} \\
	    \psi_{z, i}(\mathbf{\nsamps})&=\phi_{z, i}(\mathbf{\nsamps})-\zeta_{z, i}(\mathbf{\nsamps}) -\xi_{z, i}(\mathbf{\nsamps})
\end{align}

To obtain the desired lower bound, we need to bound the following term.
\begin{align}
	 &\sum_{i=1}^\zdims \sum_{y\in\cY} 
	\frac{\bE{\ppoi_{z}}{\phi_{z,i}(X)W(y\mid X)}^2}{\bE{\ppoi_{z}}{W(y\mid X)}} \notag\\
	&\le  
	3\sum_{i=1}^\zdims \sum_{y\in\cY} 
	\frac{\bE{\ppoi_{z}}{\zeta_{z,i}(X)W(y\mid X)}^2}{\bE{\ppoi_{z}}{W(y\mid X)}} +  
	3\sum_{i=1}^\zdims \sum_{y\in\cY} 
	\frac{\bE{\ppoi_{z}}{\xi_{z,i}(X)W(y\mid X)}^2}{\bE{\ppoi_{z}}{W(y\mid X)}} \notag\\
  &\quad +  
	3\sum_{i=1}^\zdims \sum_{y\in\cY} 
	\frac{\bE{\ppoi_{z}}{\psi_{z,i}(X)W(y\mid X)}^2}{\bE{\ppoi_{z}}{W(y\mid X)}}. \label{eq:bounding:3terms}
\end{align}

We bound three terms separately. Via the moment generating function of Poisson distributions and ~\cite[Claim C.9]{ACLST:21}, using the same argument as in \cite{ACLST:21},we have 
  \[
  \sum_{i=1}^\zdims \sum_{y\in\cY} 
	\frac{\bE{\ppoi_{z}}{\xi_{z,i}(X)W(y\mid X)}^2}{\bE{\ppoi_{z}}{W(y\mid X)}}\le 
	8\ab e^{-2\frac{\nspu}{\ab}},\quad \sum_{i=1}^\zdims \sum_{y\in\cY} 
\frac{\bE{\ppoi_{z}}{\psi_{z,i}(X)W(y\mid X)}^2}{\bE{\ppoi_{z}}{W(y\mid X)}} 
\lesssim 
2\nspu\perturb^2
  \]
By~\cite[Claim C.9]{ACLST:21}, $\zeta_{z, i}$ is subgaussian, and hence by \cref{lem:mutual_info} and \cref{eqn:subg},
\[
\sum_{i=1}^\zdims \sum_{y\in\cY} 
	\frac{\bE{\ppoi_{z}}{\zeta_{z,i}(X)W(y\mid X)}^2}{\bE{\ppoi_{z}}{W(y\mid X)}} \le 
	O(\priv).
\]
}

\section{Additional experiment results}
\label{app:experiment}
\subsection{Interactive algorithm}
In this section, we present additional experiment results for our interactive algorithms. 
\paragraph{High privacy regime $\eps\le 1$}
We show an additional result with larger alphabet size ($\ab=100$). We can see that our algorithm outperforms 1-sample HR by a large margin, and the error is always within a constant factor of all-sample HR.
\begin{figure}[h]
    \centering
    \includegraphics[width=0.4\linewidth]{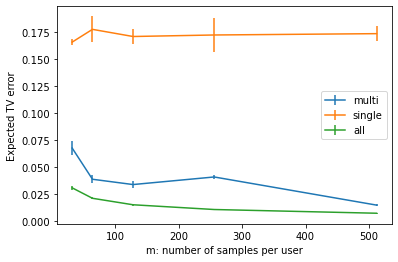}
    \includegraphics[width=0.4\linewidth]{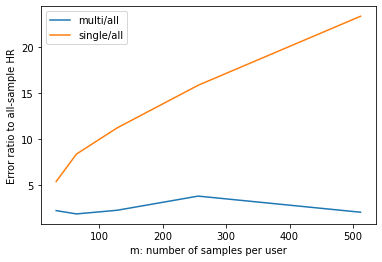}
    \caption{Performance in the high privacy regime with $\priv=0.9$, $\ab=100$, $\p$ uniform. \textbf{Left}: expected error of our algorithm (blue), 1-sample HR (orange), and all-sample HR (green). \textbf{Right}: orange/green and blue/green ratio in the left plot.}
    \label{fig:high_priv_k100}
\end{figure}

\iffalse
\paragraph{Low privacy regime $\eps>1$}
%We present additional results that demonstrate the error rate with respect to $\nspu$ in \cref{fig:low_priv_wrt_m}. We can see that for $\nspu<\ab/e^{\priv}$ and $\nspu>\ab$, the error decreases more with increasing $\nspu$, while for $\ab/e^{\priv}<\nspu<\ab$ the error does not decrease too much with $\nspu$. 
This is consistent with the phase transition phenomena with respect to $\nspu$ in our theoretic analysis. 
\begin{figure}[h]
    \centering
    \includegraphics[width=0.3\linewidth]{plots/small_m_wrt_m.png}
    \includegraphics[width=0.3\linewidth]{plots/medium_m_wrt_m.png}
    \includegraphics[width=0.3\linewidth]{plots/large_m_wrt_m.png}
    \caption{Error with respect to $\nspu$ in the low privacy regime ($\priv>1$) with $\p$ uniform. \textbf{Left} $\nspu\le \ab/e^\priv$: $\ab=2000, \priv=3, \ns=300\ab, \nspu=[20, 40, 60, 80, 100]$. \textbf{Middle} $\ab/e^{\priv}<\nspu<\ab$: $\ab=1000, \ns=900\ab, \priv=4, \nspu = [32, 64, 128, 256, 512]$. \textbf{Right} $\nspu>\ab$: $\ab=32, \ns = 6000\ab, \priv=20, \nspu=[32, 64, 128, 256, 512]$.}
    \label{fig:low_priv_wrt_m}
\end{figure}
\fi

\subsection{Non-interactive algorithm}
In this section we present experiment results for the non-interactive algorithm. We mainly focus on the case when $\ab=2$  (binomial estimation) and the high privacy regime ($\eps = O(1)$) as this is the only part where interactivity is needed in the interactive version of the algorithm for all other regimes. Thus, it is sufficient to demonstrate the difference between the two versions when $\ab=2, \priv=O(1)$ since we can substitute this part in other regimes to make them non-interactive as well.

We make some minor changes in our implementation,
\begin{enumerate}
    \item We choose $\CI=0.6$ and $\CR=2.1$, much smaller than the constants used in our proofs.
    \item We divide users into 3 groups instead of 4 with $|S_1|=|S_2|=|S_3|=\ns/3$, dropping the users that send $\indic{Z_u\ge 1}$. Users in $S_1$ are used for the localization stage. Users $S_2$ and $S_3$ are used in the refinement stage to obtain empirical estimates of $R_2$ and $R_3$.
\end{enumerate}

We compare the non-interactive version with the interactive algorithm and the baselines (1-sample HR and all-sample HR ). 
%We mainly focus on $\ab=2$ (binomial estimation) as this is the backbone of all algorithms in other regimes and hence is sufficient to demonstrate the difference between the two versions. 
The results are shown in~\cref{fig:non-interactive_k2}. 
\begin{figure}[h]
    \centering
    \begin{tabular}{c c c c }
    \includegraphics[width=0.23\linewidth]{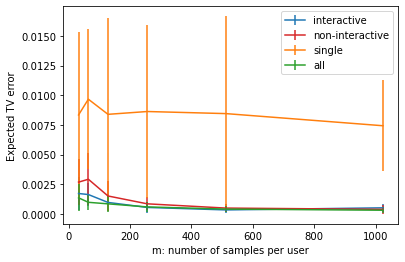}
    &\includegraphics[width=0.23\linewidth]{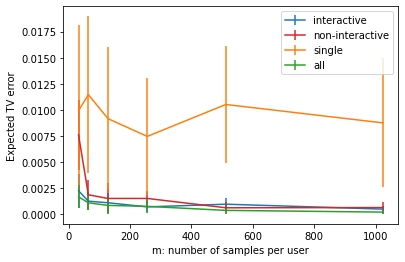}
    &\includegraphics[width=0.23\linewidth]{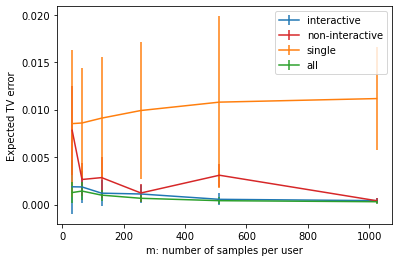}
    &\includegraphics[width=0.23\linewidth]{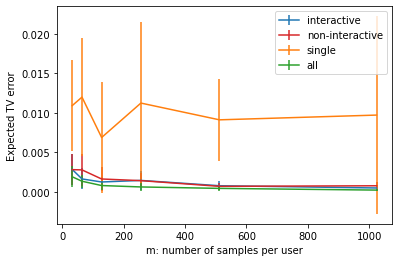}\\
    \includegraphics[width=0.23\linewidth]{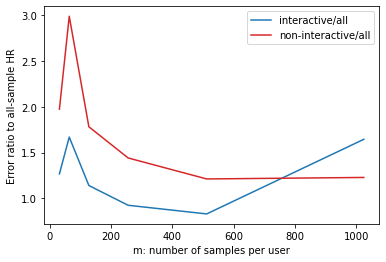}
    &\includegraphics[width=0.23\linewidth]{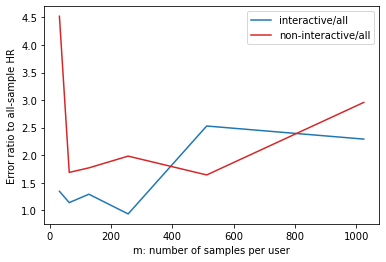}
    &\includegraphics[width=0.23\linewidth]{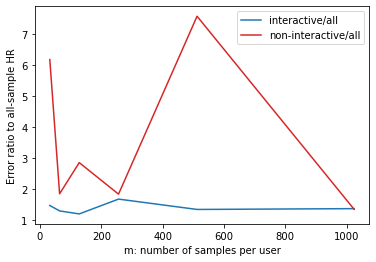}
    &\includegraphics[width=0.23\linewidth]{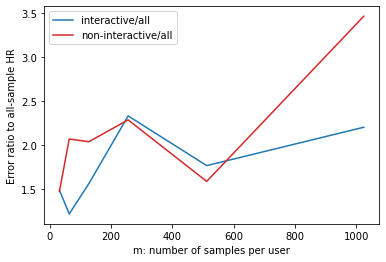}\\
    $p=0.1$ &$p=0.2$ & $p=0.3$ &$p=0.4$
    \end{tabular}
    \caption{Performance of the non-interactive algorithm in the high privacy regime with $k=2$, $\eps=0.9, \ns=9000$. $\nspu=[32, 64, 128, 256, 512, 1024]$. \textbf{Top row}: the expected TV error of non-interactive (orange), interactive (red), 1-sample HR (orange) and all-sample HR (green) with respect to $\nspu$. Mean and std are reported over 20 independent runs. \textbf{Bottom row}: the ratio to all-sample HR of non-interactive (red) and interactive (blue) algorithms w.r.t. $\nspu$. }
    \label{fig:non-interactive_k2}
\end{figure}

\cref{fig:non-interactive_k2} shows that the non-interactive algorithm significantly outperforms the 1-sample HR baseline, and the performance is reasonably close to the interactive version and all-sample HR. This demonstrates the possibility of implementing a non-interactive algorithm that improves with increasing $\nspu$ and matches our theoretical bounds. However, we do observe that the non-interactive algorithm is less stable and usually performs worse than the interactive one. We view our work mainly as a theoretical investigation of the role of multiple samples in user-level LDP, and the experiments are mainly used to demonstrate algorithmic ideas. We leave optimizing the constants and implementation details to make the algorithm more stable as future work.

\iffalse
\paragraph{Low privacy regime $\eps>1$}
The performance of the non-interactive algorithm in the low privacy regime is shown in~\cref{fig:low_priv_non_int}. We observe that the non-interactive algorithm exhibits similar behavior with respect to $\priv$ as the interactive version, with the error decreasing exponentially when $\nspu<\ab/e^\priv$, and much slower when $\nspu>\ab/e^\priv$. 
\begin{figure*}[ht]
    \centering
    \includegraphics[width=0.3\linewidth]{plots/non_int_small_m.png}
    \includegraphics[width=0.3\linewidth]{plots/non_int_medium_m.png}
    \includegraphics[width=0.3\linewidth]{plots/non_int_large_m.png}
    \caption{Expected TV error with respect to $\priv$ of our non-interactive algorithm (blue), 1-sample HR (orange), and all-sample HR (green) in the $\eps>1$ regime with $\p$ uniform. \textbf{Left: $\nspu< \ab/e^{\priv}$}, $\ab=1000, \nspu=20, \ns = 600\ab$. \textbf{Middle: $\nspu\in[\ab/e^{\priv}, \ab)$. } $\ab=500, \nspu=256, \ns = 1200\ab$. \textbf{Right: $\nspu\ge \ab$}, $\ab=200, \nspu=256, \ns = 1200\ab$.}
    \label{fig:low_priv_non_int}
\end{figure*}

It may appear that when $\nspu\in [\ab/e^\priv, \ab)$, the error also decays exponentially. However, this is mainly due to the high error and high variance of the algorithm when $\priv$ is small. When $\eps=4, 5, 6$, the error barely decreases and approaches the error of 1-sample HR as observed in~\cref{fig:low_priv_wrt_eps}.

\fi
\end{document}